\def\a{{\bf a}}
\def\b{{\bf b}}
\def\C{{\bf C}}
\def\X{{\bf X}}
\def\s{{\bf s}}
\def\x{{\bf x}}
\def\y{{\bf y}}
\def\z{{\bf z}}
\def\u{{\bf u}}
\def\v{{\bf v}}
\def\0{{\bf 0}}
\def\1{{\bf 1}}
\def\AM{{\mathcal A}}
\def\XM{{\mathcal X}}
\def\RB{{\mathbb R}}
\def\EB{{\mathbb E}}
\def\NB{{\mathbb N}}
\def\ome{\mbox{\boldmath$\omega$\unboldmath}}
\def\epsi{\mbox{\boldmath$\epsilon$\unboldmath}}
\def\etab{\mbox{\boldmath$\eta$\unboldmath}}
\def\tha{\mbox{\boldmath$\theta$\unboldmath}}
\def\Tha{\mbox{\boldmath$\Theta$\unboldmath}}
\def\argmin{\mathop{\rm argmin}}
\def\liml{\mathop{\lim}\limits}
\def\liminfl{\mathop{\liminf}\limits}
\def\limsupl{\mathop{\limsup}\limits}
\def\minl{\mathop{\min}\limits}
\def\infl{\mathop{\inf}\limits}
\def\Ke{\mathrm{Ke}}
\def\sgn{\mathrm{sgn}}
\def\GIG{\mathrm{GIG}}
\def\Ga{\mathrm{Ga}}
\def\IG{\mathrm{IG}}
\def\EP{\mathrm{EP}}
\def\Lv{\mathrm{Lv}}
\title{Kinetic Energy Plus Penalty Functions for Sparse Estimation}
\author{
\name Zhihua Zhang and Shibo Zhao \\
\addr Department of Computer Science \& Engineering \\
Shanghai Jiao Tong University \\
800 Dong Chuan Road, Shanghai, China 200240 \\
\texttt{zhzhang@gmail.com} \AND
\name  Zebang Shen  \\
\addr College of Computer Science \& Technology \\
Zhejiang University \\
38 Zheda Road, Hangzhou, China 310028 \\
\texttt{shenzebang@gmail.com} \AND
\name Shuchang Zhou \\
\addr Key Laboratory of Computer System and Architecture \\
Institute of Computing Technology \\
Chinese Academy of Sciences, Beijing, China  \\
\texttt{shuchang.zhou@gmail.com}
%%\addr  Google Research Beijing \\
%%\texttt{georgezhou@google.com}
%%%\name Michael I. Jordan\\
%%%\addr Computer Science Division and Department of Statistics \\
%%%University of California, Berkeley \\
%%%CA 94720-1776 USA \\
%%%\texttt{jordan@stat.berkeley.edu}
}
\date{\today}
\begin{document}
\maketitle

\begin{abstract}%
Motivated by iteratively reweighted $\ell_q$  methods,
we propose and study a family of sparsity-inducing penalty functions.
Since the penalty functions are related to the kinetic energy in special relativity, we  call them \emph{kinetic energy plus} (KEP) functions.
We construct the KEP function by using the concave conjugate of a $\chi^2$-distance function
and present several novel insights  into the  KEP function with $q=1$.
In particular, we derive a thresholding operator based on the  KEP function, and prove its mathematical properties and asymptotic properties in sparsity modeling. Moreover, we show that a coordinate descent algorithm is especially appropriate for the  KEP function.
Additionally, we discuss the relationship of KEP with the  penalty functions $\ell_{1/2}$ and MCP.
The theoretical and empirical analysis validates that the KEP function is effective and efficient in high-dimensional data modeling.
\end{abstract}

\begin{keywords} iteratively reweighted minimization methods, kinetic energy plus penalties, nonconvex penalization, stability,
concave conjugate
\end{keywords}

\section{Introduction}

Sparsity is an important attribute in statistical modeling for high-dimensional
data sets, especially when the underlying model has a sparse representation.
Typically, the penalty theory has been  used for capturing sparsity.
A principled approach is to employ the $\ell_1$-norm penalty as a convex relaxation of the $\ell_0$-norm~\citep{TibshiraniLASSO:1996}.
Additionally, some nonconvex alternatives, such as the bridge  penalty $\ell_q$ ($q\in (0,1)$), the log-penalty~\citep{MazumderSparsenet:11}, the nonconvex EXP~\citep{BradleyICML:1998,GaoAAAI:2011},
the smoothly clipped absolute deviation (SCAD)
penalty~\citep{Fan01} and the minimax concave plus (MCP) penalty~\citep{Zhang2010mcp},
have attracted wide attention.

On one hand,  nonconvex penalties usually have nice consistency properties~\citep{Fan01,ZhangZhang2012}.
On the other hand, they would yield computational challenges due to their nonconvexity and nondifferentiability.
In order to address this challenge,  \cite{Fan01} proposed a local quadratic approximation (LQA), while \cite{ZouLi:2008}
then devised a local linear approximation (LLA).
These methods enjoy
a so-called majorization-minimization (MM) procedure~\citep{Lange:2000,HunterLiAS:2004}.
In the same spirit,  iteratively reweighted $\ell_q$ ($q=2$ or $1$) methods have  been also developed to
find sparse solutions~\citep{ChartrandICASSP:2008,CandesWakinBoyd:2008,WipfNIPS:2008,Daubechies:2010,WipfNagarajan:2010}.
Additionally, \citet{MazumderSparsenet:11} developed a SparseNet algorithm  based on  coordinate descent for the MCP penalty.

Our work is mainly motivated by the iteratively reweighted $\ell_q$ method of \cite{Daubechies:2010} and by the coordinate descent
algorithm of \citet{MazumderSparsenet:11}.
\cite{Daubechies:2010} demonstrated the elegant performance of their  method theoretically and empirically.
However, there are still several  issues that deserve to  be further studied. First, the penalty function corresponding
to the method is not explicitly available. This results in that the corresponding thresholding operator is also unknown.
Second, it is  unclear whether the  estimator has some properties such as unbiasedness, continuity and asymptotic consistency.

Within and beyond these issues, we develop a family of novel penalty functions.
First,  we derive the expression of  the  penalty function
by using the concave conjugate of a $\chi^2$-distance function. Interestingly, when $q=2$, the
expression is mathematically the same with the kinetic energy in special relativity. We thus refer to them as  \emph{kinetic energy plus} (KEP) functions. We explore the connection of the KEP penalty
with the $\ell_q$-norm and $\ell_{q/2}$-norm.
The constructive method
encourages us to
rederive the  iteratively reweighted $\ell_q$ method of \cite{Daubechies:2010} via an augmented Lagrangian methodology.

In this paper we are especially concerned with the case of $q=1$,  because the corresponding
KEP penalty is nonconvex. Theoretically, we give mathematical properties and  asymptotic behaviors of the resulting estimators built on the work of \citet{Fan01,KnightFu:2000,ZhaoYu:2006,ZouLi:2008}. Specifically, the asymptotic behaviors are studied  both in the conventional fixed $p$ (the number of features) setting and in the large $p$
setting as $n$ (the training sample size) increases.

Computationally, we develop the corresponding thresholding operator. We show that the
thresholding operator bridges the soft thresholding operator based on the lasso and the half thresholding operator based on
the $\ell_{1/2}$ penalty~\citep{XuTNN:2012}. However, compared with the soft thresholding operator, our thresholding operator
has  unbiasedness and oracle properties. Compared with the half thresholding operator, our thresholding operator
is  continuous, which  makes it stable in model prediction.
These properties assure that the  KEP function is  suitable for coordinate descent algorithms.
Moreover, the convergence property of the coordinate descent algorithm can be ensured~\citep{MazumderSparsenet:11}.

We uncover an inherent connection between the KEP and MCP functions.
Specifically, the MCP function can be also defined as the concave conjugate of the $\chi^2$-distance function.
The difference between  KEP and MCP  is then due to  asymmetricity of the $\chi^2$-distance function.
This difference makes the KEP outperform MCP in that KEP enjoys a nesting property---a desirable property
stated by~\citet{MazumderSparsenet:11}.

It is worth noting that \cite{PalmerNIPS:2006}  and \citet{WipfNIPS:2008}  considered the application of concave conjugates  for non-Gaussian latent variable models. The notion of concave conjugates has been also used
by \citep{TZhangJMLR:10,ZhangNIPS:2012,ZhangAAAI:2013} in construction of nonconvex penalty functions.
For example, \citet{ZhangAAAI:2013} employed the concave conjugate of  the squared Euclidean distance function  for defining  the MCP function.
\citet{ZhangNIPS:2012} then showed that the nonconvex LOG and EXP functions can be defined as the concave conjugate of the Kullback-Leibler (KL) divergence. Interestingly, asymmetricity of the KL divergence implies the  connection between LOG and EXP, which stands in parallel with the connection between KEP and MCP.

The remainder of the paper is organized as follows.
Section~\ref{sec:problem}
reviews the iteratively reweighted $\ell_q$ method of \cite{Daubechies:2010}.
We propose  the KEP penalty in Section~\ref{sec:method0},
and study sparse estimation based on the KEP function in Section~\ref{sec:sest}. In Section~\ref{sec:related} we explore the relationship between MCP and KEP. In Section~\ref{sec:asymp} we give asymptotic consistent results of sparse estimators.
In Section~\ref{sec:experiment} we conduct our
experimental evaluations. Finally, we conclude our work in
Section~\ref{sec:conclusion}. Some proofs are given in Appendix.

\section{Problem Formulations} \label{sec:problem}

Typically, supervised learning can be formulated as an optimization problem under the regularization framework
or penalty theory:
%\begin{equation} \label{eqn:1}
\[
\min_{\tha} \; \Big\{L(\tha; \XM) +   P(\tha; \lambda)\Big\},
\]%\end{equation}
where ${\XM}=\{(\x_i, y_i); i=1, \ldots, n\}$ is a training dataset,
$\tha$  the model parameter vector, $L(\cdot)$ the loss function penalizing data misfit, $P(\cdot)$
the regularization term penalizing model complexity, and $\lambda$ ($>0$) the tuning parameter of balancing the relative significance
of the loss function and the penalty.

%%%%%%%%%%%%%%%%%%%%%%%%%%%%%%%%%%%%%%%%%%%%%%%%%%%%%%%%%%%%%%%%%%%%%%%%%%%%%%%%%%%%%%%
%%%%%%%%%%%%%%%%%%%%%%%%%%%%% TO BE CHANGED? %%%%%%%%%%%%%%%%%%%%%%%%%%%%%%%%%%%%%%%%%%
%%%%%%%%%%%%%%%%%%%%%%%%%%%%%%%%%%%%%%%%%%%%%%%%%%%%%%%%%%%%%%%%%%%%%%%%%%%%%%%%%%%%%%%

The choice of the loss function depends very much on the supervised learning problem at hand.
Our presentation is mainly based on the linear regression problem
\[
L(\b; \XM) = \frac{1}{2}\sum_{i=1}^n (y_i{-}\x_i^T\b)^2 =\frac{1}{2} \|\y {-} \X \b\|_2^2,
\]
where $\y=(y_1, \ldots, y_n)^T \in \RB^n$, $\X=[\x_1, \ldots, \x_n]^T \in \RB^{n{\times}p}$, and $\b=(b_1, \ldots, b_p)^T \in \RB^p$.
We can also consider extensions
involving other exponential family models.

%%%%%%%%%%%%%%%%%%%%%%%%%%%%%%%%%%%%%%%%%%%%%%%%%%%%%%%%%%%%%%%%%%%%%%%%%%%%%%%%%%%%%%%

A widely used setting for  penalty is $P(\b;\lambda)=\lambda \sum_{j=1}^p P_{j}(b_j)$,
which implies that the penalty function consists of $p$ separable subpenalties and all subpenalties share a common tuning parameter $\eta$.
In order to find a sparse solution of $\b$, one imposes the $\ell_0$-norm penalty to  $\b$.
However, the resulting optimization problem is usually NP-hard. Thus,
the $\ell_1$-norm penalty  $P(\b; \lambda)=\lambda \|\b\|_1= \lambda \sum_{j=1}^p |b_j|$ is an effective convex  alternative. Additionally,
some nonconvex alternatives, such as the bridge penalty $\ell_q$ ($q\in (0,1)$),
SCAD and MCP, have been  employed. Meanwhile, iteratively reweighted $\ell_q$  ($q=1$ or $2$) minimization methods were developed for finding
sparse solutions.

Specifically, \cite{Daubechies:2010} proposed an iteratively reweighted least-squares (or $\ell_2$) minimization method. This method can be slightly modified as an  iteratively reweighted $\ell_1$ minimization version. Thus, we here consider a general $\ell_q$  version.
In particular,
the method introduces a set of auxiliary variables, including  a real number $\epsilon>0$ and a weight vector $\ome=(\omega_1, \ldots, \omega_p)^T \in \RB^p$ with $\omega_j>0$ for $j=1, \ldots, p$. Subsequently,  the  iteratively reweighted $\ell_q$ method solves the following optimization problem~\footnote{\cite{Daubechies:2010} originally considered the  iteratively reweighted $\ell_q$ method for  a sparse recovery problem with
equality constraints. The method  also applies to the problem in the presence of noise, that is, the problem in (\ref{eqn:prob01}).}:
\begin{equation}
\label{eqn:prob01}
\min \; \bigg \{ J(\b, \ome, \epsilon)
:= \frac{1}{2} \|\y {-} \X \b \|_2^2  +  \frac{\lambda}{2} \sum_{j=1}^p \Big[|b_j|^q \omega_j + (\epsilon^2 \omega_j {+} \omega_j^{-1}) \Big] \bigg\},
\end{equation}
where $\lambda >0$.
%Here $q=1$ or $2$, which   corresponds to iteratively reweighted $\ell_1$ or $\ell_2$ minimization.
Furthermore, given the $t$th estimates $(\b^{(t)}, \ome^{(t)}, \epsilon^{(t)})$, one
 recursively defines
\begin{align}
\label{eqn:ms1}
\b^{(t{+}1)} & = \argmin_{\b}   J(\b, \ome^{(t)}, \epsilon^{(t)}) \nonumber \\
& = \argmin_{\b}  \frac{1}{2} \|\y {-} \X \b\|_2^2+  \frac{\lambda}{2}  \sum_{j=1}^p |b_j|^q \omega^{(t)}_j
\end{align}
and
\begin{align}
\label{eqn:as1}
\ome^{(t+1)} & = \argmin_{\ome>0}   J(\b^{(t{+}1)}, \ome, \epsilon^{(t+1)}) \nonumber \\
&= \argmin_{\ome>0}  \;   \sum_{j=1}^p \Big[\big|b_j^{(t+1)}\big|^q \omega_j {+}   \frac{ (\epsilon^{(t{+}1)})^2 \omega_j^2 + 1} {\omega_j} \Big],
\end{align}
where $\epsilon^{(t+1)} = \min \big(\epsilon^{(t)}, r_{k{+}1}(\b^{(t)})/p \big)$.
Here $k$ ($<p$) is a prespecified positive integer and $r_i(\b)$ is the $i$th largest element of the vector $(|b_1|, \ldots, |b_p|)^T$.
It is directly obtained that
\[
\omega_{j}^{(t{+}1)} = \frac{1} {\sqrt{ |b_j^{(t+1)}|^q +  (\epsilon^{(t{+}1)})^2 }}, \quad j=1, \ldots, p.
\]

\cite{Daubechies:2010} demonstrated the performance of the iteratively reweighted $\ell_q$ method theoretically and empirically. However, there are still several questions that would be interesting. For example,
\begin{enumerate}
\item[(1)]  What are the explicit expressions of the penalty function and its corresponding thresholding operator for the penalized regression problem in (\ref{eqn:prob01})?
\item[(2)] Does  the estimator resulted from the problem in (\ref{eqn:prob01}) have  properties such as ``unbiasedness," ``sparsity," and  ``continuity," and ``asymptotic consistency?" %which have been proposed by \cite{Fan01}?
%\item[(3)] Are there  Bayesian interpretations in the penalized regression problem  (\ref{eqn:prob01})?
\end{enumerate}
In this paper we introduce penalty functions that we call \emph{kinetic energy plus} (KEP) functions to address these questions.
In Section~\ref{sec:method0} we derive the KEP function  by using the concave conjugate of a $\chi^2$-distance function.
In Section~\ref{sec:sest} we develop a sparse estimation approach based on the KEP penalty with $q=1$ and
present some important properties for assisting our approach.
In Section~\ref{sec:asymp} we present asymptotic consistent results about the sparse estimator.
Thus,  our work  not only deals with the  questions mentioned above but also
provides  new  insights into sparse estimation problems well.

\section{Kinetic Energy Plus (KEP) Penalty Functions}
%\section{Methodology}
\label{sec:method0}

Before presenting our work, we first give some notations. We denote
$\RB_{+}^p=\{\u =(u_1, \ldots, u_p)^T \in \RB^p: u_j \geq 0 \mbox{ for } j=1, \ldots, p\}$
and $\RB_{++}^p=\{\u =(u_1, \ldots, u_p)^T \in \RB^p: u_j > 0 \mbox{ for } j=1, \ldots, p\}$. Furthermore, if $\u \in \RB_{+}^p$ (or $\u \in \RB_{++}^p$),
we also write $\u\geq 0$ (or $\u > 0$). Additionally,  we denote $|\u|^q=(|u_1|^q, \ldots, |u_p|^q)^T$ and $\|\u\|_q^q = \sum_{j=1}^p |u_j|^q$.

We observe that the minimization problem in (\ref{eqn:as1})
is equivalent to the following problem
\[
\min_{\ome>0} \;   \sum_{j=1}^p \frac{1}{2} \Big[|b_j|^q \omega_j +  \frac{(\omega_j \epsilon-1)^2}{\omega_j} \Big].
\]
By direct calculations,  the corresponding minimum is given by
\begin{equation} \label{eqn:rp1}
\sum_{j=1}^p \big(\sqrt{|b_j|^q + \epsilon^2} -\epsilon \big).
\end{equation}

We are now  able to answer  the first question given in Section~\ref{sec:problem}. That is,
$\big(\sqrt{|b_j|^q + \epsilon^2} -\epsilon \big)$ is the penalty associated with the iteratively reweighted $\ell_q$ minimization method of \cite{Daubechies:2010}. In other words,  the method is used to solve  the following penalization problem:
\begin{equation} \label{eqn:doubl}
\min_{\b} \; \Big\{L(\b; \XM)  +   \lambda \sum_{j=1}^p \big(\sqrt{|b_j|^q + \epsilon^2} -\epsilon \big)\Big\},
\end{equation}
which can in turn be formulated into the optimization problem  in (\ref{eqn:prob01}).

We now present an alternative derivation of the above penalty function and establish its connection  with the kinetic energy in special relativity.
Note that $\frac{(\omega \epsilon-1)^2}{\omega}$ is related to the $\chi^2$-distance. For nonnegative $\omega$ and $\eta$,
the $\chi^2$-distance between them is  $\frac{(\omega-\eta)^2}{\omega}$.
This  motivates us to develop a new approach for the construction of KEP penalty functions.

We first study a nonseparable version. In this case,  given  $\alpha>0$ and $\eta>0$,
we consider the following optimization problem
%\begin{equation}
\[
\min_{\omega>0} \;   \omega \|\b\|_q^q + \frac{1}{2 \alpha} \frac{(\omega-\eta)^2}{\omega}.
\] %\end{equation}
It is immediate that the corresponding minimum is given by
\[
\frac{\eta}{\alpha}  \Big(\sqrt{2 \alpha \|\b\|_q^q {+}1} -1\Big) \quad (\mbox{denoted } \; {\Ke}(|\b|^q; \eta, \alpha)).
\]
Interestingly, if $q=2$, $p=3$, $2 \alpha=1/(m^2 c^2)$ and $2 \eta=1/m$ where $m>0$ is the  mass at rest  and $c>0$ is the velocity of light,
we can obtain that
\[
 {\Ke} =  \sqrt{m^2 c^4 {+} c^2 \|\b\|_2^2} - m c^2,
\]
which is the kinetic energy in relativity theory.

We next study a separable version.
Alternatively, we are concerned with the following optimization problem
%\begin{equation}
\[
\min_{\ome>0} \; Q(\ome|\b, \etab) :=  \ome^T |\b|^q + \frac{1}{
2 \alpha} \sum_{j=1}^p \frac{(\omega_j-\eta)^2}{\omega_j}.
\] %\end{equation}
Let $C(|\b|^q)$ denote the minimum  of the above problem, which is the concave conjugate of  $-\frac{1}{2 \alpha} \sum_{j=1}^p \frac{(\omega_j-\eta)^2}{\omega_j}$ with respect to (w.r.t.) $|\b|^q$.
It is easily computed that
%\begin{equation} \label{eqn:cf1}
\[
C(|\b|^q)  = \sum_{j=1}^p \frac{\eta}{\alpha}\Big(\sqrt{2 \alpha |b_j|^q {+}1} -1\Big)
\] %\end{equation}
at $\hat{w}_j= \eta (2 \alpha |b_j|^q {+}1)^{-1/2}$.
With  $C(|\b|^q)$ as the penalty,
the corresponding iteratively reweighted $\ell_q$ minimization method  is then used to solve  the following penalization problem:
\begin{equation} \label{eqn:p3}
\min_{\b} \; \Big\{J(\b) := L(\b; \XM)  + \sum_{j=1}^p \frac{\eta}{\alpha}\big(\sqrt{2 \alpha |b_j|^q {+}1} -1\big) \Big\},
\end{equation}
which can in turn be formulated as the optimization problem:
\begin{equation} \label{eqn:p2}
\min_{\b}  \min_{\ome > 0} \bigg\{L(\b; \XM) + \ome^{T} |\b|^q +  \frac{1}{2 \alpha}   \sum_{j=1}^p \frac{(\omega_j-\eta)^2}{\omega_j}  \bigg\}.
\end{equation}
Clearly, when we set $2 \eta=\lambda /\epsilon$ and $2 \alpha = 1/\epsilon^2$, the problems  (\ref{eqn:p3}) and (\ref{eqn:p2}) are respectively  equivalent to  (\ref{eqn:doubl}) and (\ref{eqn:prob01}).
In this case, we further see that $\epsilon =  m c$ and $\lambda = {c}$.
Thus,  we have  very interesting physical meanings of the hyperparameters $\lambda$ and $\epsilon$
in the iteratively reweighted least squares method of  \citet{Daubechies:2010}.

In this paper  we define the following penalty function:
\begin{equation} \label{eqn:relativity}
\Psi(|b|^q; \eta, \alpha) =  \frac{\eta}{\alpha}\Big(\sqrt{2 \alpha |b|_q^q {+}1} -1\Big).
\end{equation}
We  refer to it as the \emph{kinetic energy plus} (KEP) function of $b$, due to the relationship with the kinetic energy in relativity theory.
To explore the relationship of $\Psi(|b|^q; \eta, \alpha)$ with the $\ell_q$-norm,
we let $\eta = \frac{\lambda \alpha}{\sqrt{2 \alpha {+}1} {-}1}$ for some $\lambda>0$. Accordingly, we define
\begin{equation} \label{eqn:rho}
\Phi(|b|^q; \alpha) = \frac{\sqrt{2 \alpha |b|^q +1} -1} {\sqrt{2 \alpha +1} -1} =  \frac { (\sqrt{2\alpha {+} 1} {+}1) |b|^q }{\sqrt{2\alpha |b|^q +1} +1},
\end{equation}
which goes through the points $(0, 0)$ and $(1, 1)$ like the $\ell_q$-norm. The derivative of $\Phi(|b|^q; \alpha)$ w.r.t.\ $|b|^q$
is
\[
\Phi'(|b|^q;  \alpha) := \frac{\partial \Phi(|b|^q;  \alpha)}{\partial |b|^q}  =  \frac {\sqrt{2\alpha {+} 1} {+}1}{ 2 \sqrt{2\alpha |b|^q +1}}.
\]
We now present the  following proposition.

\begin{proposition} \label{lem:1}
Let  $\Phi(|b|^q;\alpha)$ be defined in   (\ref{eqn:rho}). Then,
\begin{enumerate}
\item[\emph{(i)}] $\Phi(|b|^q; \alpha)$
is a nonnegative, nondecreasing and concave function of $|b|^q$.
\item[\emph{(ii)}]  $ \lim_{\alpha \rightarrow \infty} \Phi(|b|^q;  \alpha)
= |b|^{\frac{q}{2}}$  and  $\lim_{\alpha \rightarrow \infty}  \; \Phi'(|b|^q;  \alpha)
= \frac{ 1}{2|b|^{\frac{q}{2}}}$.
\item[\emph{(iii)}]    $\lim_{\alpha \rightarrow 0+} {\Phi}(|b|^q;  \alpha)
= |b|^{q}$   and     $\lim_{\alpha \rightarrow 0+}  \;  \Phi'(|b|^q;  \alpha)
= 1$.
%\item[\emph{(iii)}]  $ \lim_{\alpha \rightarrow \infty} \frac{\partial \Phi(|b|^q;  \alpha)}{\partial |b|^q}
%= \frac{ 1}{2|b|^{q/2}}$   and   \\  $\lim_{\alpha \rightarrow 0+}  \frac{\partial \Phi(|b|^q;  \alpha)}{\partial |b|^q}
%= 1$
\end{enumerate}
\end{proposition}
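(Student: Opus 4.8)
The plan is to substitute $t=|b|^q\ge 0$ and regard $\Phi(\cdot;\alpha)$ as a function of the single variable $t$ on $[0,\infty)$, observing that for every fixed $\alpha>0$ the denominator $\sqrt{2\alpha+1}-1$ is a positive constant, so each claim reduces to an elementary fact about $g_\alpha(t):=\sqrt{2\alpha t+1}-1$ and its derivatives, all of which are available in closed form. For part~(i): since $\sqrt{2\alpha t+1}\ge 1$ for $t\ge 0$, the numerator $g_\alpha(t)$ is nonnegative, hence so is $\Phi$. For monotonicity and concavity the cleanest argument is that $t\mapsto\sqrt{2\alpha t+1}$ is the composition of the concave, nondecreasing map $s\mapsto\sqrt s$ with the increasing affine map $t\mapsto 2\alpha t+1$, hence it is concave and nondecreasing on $[0,\infty)$; subtracting the constant $1$ and dividing by the positive constant $\sqrt{2\alpha+1}-1$ preserves both properties. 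Equivalently, one may read off the sign from the already displayed $\Phi'(|b|^q;\alpha)=\frac{\sqrt{2\alpha+1}+1}{2\sqrt{2\alpha|b|^q+1}}>0$ and check $\Phi''<0$ by one more differentiation.

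For part~(ii) I would fix $b\ne 0$, so $t>0$, and factor $\sqrt{2\alpha}$ out of both the numerator and the denominator of $\Phi$; the $\alpha$-dependence then collapses to lower-order terms and the quotient tends to $\sqrt t=|b|^{q/2}$ as $\alpha\to\infty$. Applying the same factoring to the closed form of $\Phi'$ gives $\Phi'(|b|^q;\alpha)\to\frac{1}{2\sqrt t}=\frac{1}{2|b|^{q/2}}$. The case $b=0$ is trivial, since $\Phi(0;\alpha)=0$ for every $\alpha$, matching $|b|^{q/2}=0$.

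For part~(iii), as $\alpha\to 0^+$ both the numerator and the denominator of $\Phi$ tend to $0$; I would expand $\sqrt{1+x}=1+\frac{x}{2}+O(x^2)$ with $x=2\alpha t$ in the numerator and $x=2\alpha$ in the denominator, obtaining $\Phi(|b|^q;\alpha)=\frac{\alpha t+O(\alpha^2)}{\alpha+O(\alpha^2)}\to t=|b|^q$; alternatively one applies L'Hôpital's rule in the variable $\alpha$. For the derivative one simply substitutes $\alpha=0$ into the continuous closed form of $\Phi'$, which gives $\frac{1+1}{2\cdot 1}=1$.

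I do not expect a genuine obstacle here — everything is explicit calculus. The only points that require a little care are separating the cases $b=0$ and $b\ne 0$ when taking the pointwise limits of $\Phi'$ in part~(ii) (at $b=0$ the derivative diverges, which is consistent with reading $\frac{1}{2|b|^{q/2}}$ in the extended reals), and observing that the $O(\cdot)$ remainders in part~(iii) are honest explicit quantities, so passing to the limit inside the quotient is legitimate.
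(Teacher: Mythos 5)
Your proof is correct and matches what the paper intends: the paper simply states ``The proof is immediately'' and omits the details, and your elementary calculations (composition of concave/nondecreasing maps for part~(i), factoring $\sqrt{2\alpha}$ for the $\alpha\to\infty$ limit, and the first-order expansion of $\sqrt{1+x}$ for the $\alpha\to 0^+$ limit) are precisely the immediate verifications being alluded to. No gaps; the care you take with the $b=0$ case in part~(ii) is a reasonable extra precaution the paper glosses over.
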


The proof is immediately.
This proposition shows that $\Phi(|b|^q; \alpha)$ can be regarded as a penalty  for $b$.
Specifically, $\Phi(|b|^2; \alpha)$ (i.e., $q=2$)
defines a convex
penalty of $b$, while
$\Phi(|b|; \alpha)$ (i.e., $q=1$) defines a nonconvex penalty of $b$. Moreover, Proposition~\ref{lem:1} says that $\Phi(|b|^q; \alpha)$
bridges the $\ell_{q}$-norm and the $\ell_{q/2}$-norm.    Figure~\ref{fig:penalty} illustrates  $\Phi(|b|^q; \alpha)$
when $p=1$ and $p=2$.

\begin{figure}[!ht]
\centering
\subfigure[$\Phi(|b|; \alpha)$ vs. $|b|$]{\includegraphics[width=65mm,height=65mm]{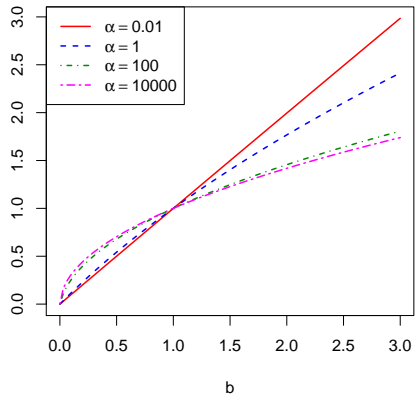}}
\subfigure[$\Phi(|b_1|; \alpha)+ \Phi(|b_2|; \alpha)\leq 1$]{\includegraphics[width=65mm,height=65mm]{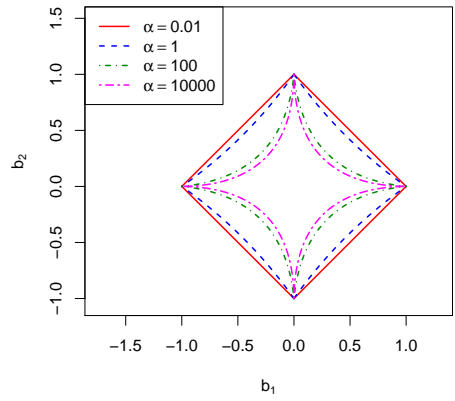}} \\
\subfigure[$\Phi(|b|^2; \alpha)$ vs. $|b|$]{\includegraphics[width=65mm,height=65mm]{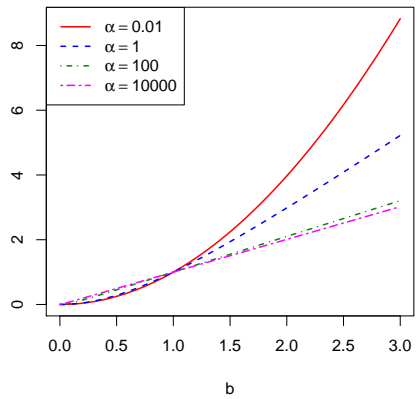}}
\subfigure[$\Phi(|b_1|^2; \alpha)+ \Phi(|b_2|^2; \alpha)\leq 1$]{\includegraphics[width=65mm,height=65mm]{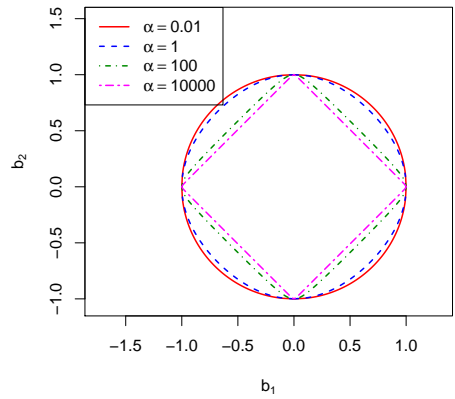}} \\
\caption{The KEP  functions  $\Phi(|b|; \alpha)$.} \label{fig:penalty}
\end{figure}

\section{Sparse Estimation Based on the KEP Penalty}
\label{sec:sest}

When $q=1$ the KEP function defines a nonconvex penalty for $b$ and is
singular at the origin. Thus, such a  penalty is able to induce
sparsity.
We now study the mathematical properties of  the sparse estimator in the settings  $q=1$.
These properties show that the KEP penalty is suitable for a coordinate descent algorithm~\citep{MazumderSparsenet:11}.

\subsection{Threshold Operators}
\label{sec:threshold}

Following  \cite{Fan01}, we  define the penalized least squares problem
\begin{equation} \label{eqn:general}
J_{1}(b):=\frac{1}{2} (z- b)^2 +  \Psi(|b|; \eta, \alpha),
\end{equation}
where $z=\x^T\y$.  \cite{Fan01} stated that a good penalty should result in an estimator with three properties.
(1) Unbiasedness: it is nearly unbiased when the true unknown parameter is large; (2) Sparsity: it is a thresholding rule, which
automatically sets small estimated coefficients to zero; (3) Continuity: it is continuous in data $z$ to avoid instability in model prediction.

According to the discussion in \cite{Fan01}, the resulting
estimator is nearly unbiased due to that $\Psi'(|b|) = \frac {\eta}{ \sqrt{2\alpha |b| +1}} \rightarrow 0$ as $|b|\rightarrow \infty$.
Note that
\[
\lim_{ |b|\rightarrow \infty} \; \frac{\eta}{  \sqrt{2\alpha |b| +1}} \Big/ {\frac{1} {2|b|^{1/2}}} = \frac{2\eta}{\sqrt{2\alpha}}.
\]
Thus, for the KEP penalty $\Psi(|b|)$ and the $\ell_{1/2}$-norm penalty $|b|^{1/2}$,
the convergence rates of their derivatives to zero are same.

As also stated in \cite{Fan01}, it suffices for the resulting estimator to be a thresholding rule  that the minimum of
the function $|b|+  \Psi'(|b|)$ is positive. Moreover, a sufficient and necessary condition for ``continuity" is
the minimum of
$|b|+  \Psi'(|b|)$ is attained at $0$. In fact, we have the following theorem.

\begin{theorem} \label{thm:sparsty} Consider the penalized least squares problem in (\ref{eqn:general}).
\begin{enumerate}
\item[\emph{(i)}]  If $\eta \geq  \frac{1}{\alpha}$, then the resulting estimator  is a thresholding rule; that is,
\[
\hat{b} = S_{\alpha}(z, \eta) := \left\{ \begin{array}{ll}
\frac{{\sgn}(z)}{2 \alpha} \kappa(|z|) & \textrm{ if } |z| >
\frac{3}{2 \alpha} ({\alpha \eta})^{\frac{2}{3}} {-} \frac{1}{2 \alpha}, \\
0 & \textrm{ if } |z| \leq \frac{3}{2 \alpha} ({\alpha \eta})^{\frac{2}{3}} {-} \frac{1}{2 \alpha}.
\end{array} \right.
\]
where
\[
\kappa(|z|) = \frac{4(2\alpha |z| {+} 1)}{3}\cos^2\Big[\frac{1}{3}
\arccos\big( {-} {\alpha \eta}  (\frac{3}{2\alpha|z| {+} 1})^{\frac{3}{2}}  \big) \Big] {-} 1.
\]
\item[\emph{(ii)}] If $\eta<  \frac{1}{\alpha}$, then the resulting estimator  is  defined as
\[
\hat{b} = S_{\alpha}(z, \eta) := \left\{ \begin{array}{ll}
\frac{{\sgn}(z)}{2 \alpha} \kappa(|z|) & \textrm{ if } |z| >
{\eta}, \\
0 & \textrm{ if } |z| \leq {\eta},
\end{array} \right.
\]
which is continuous in $z$.
\end{enumerate}
\end{theorem}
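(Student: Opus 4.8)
The plan is to reduce the one‑dimensional minimisation in (\ref{eqn:general}) to a cubic equation, solve it in closed form, and then run the first‑order analysis of \cite{Fan01} to read off the threshold and the continuity behaviour. First I would use symmetry: since $\Psi(|b|;\eta,\alpha)$ depends on $b$ only through $|b|$, the substitution $(b,z)\mapsto(-b,-z)$ leaves $J_1$ invariant, so it suffices to treat $z\ge 0$ and show $\hat b\ge 0$; for $b<0$ one has $\frac12(z-b)^2>\frac12 z^2$ and $\Psi(|b|)>0$, hence $J_1(b)>J_1(0)$, ruling out negative minimisers. On $(0,\infty)$, $J_1$ is differentiable with $J_1'(b)=b-z+\Psi'(b)=g(b)-z$ where $g(b):=b+\frac{\eta}{\sqrt{2\alpha b+1}}$, so every positive stationary point solves $g(b)=z$.

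Next I would study $g$. We have $g(0)=\eta$, $g(b)\to\infty$, and $g'(b)=1-\alpha\eta(2\alpha b+1)^{-3/2}$ vanishes only at $b^\star=\frac{(\alpha\eta)^{2/3}-1}{2\alpha}$. Hence, if $\eta<\frac1\alpha$ then $b^\star<0$, $g$ is strictly increasing on $[0,\infty)$, and $\min_{b\ge 0}g(b)=g(0)=\eta$; if $\eta\ge\frac1\alpha$ then $g$ decreases on $[0,b^\star]$ and increases afterwards, and a short computation gives $\min_{b\ge 0}g(b)=g(b^\star)=\frac{3(\alpha\eta)^{2/3}-1}{2\alpha}=\frac{3}{2\alpha}(\alpha\eta)^{2/3}-\frac{1}{2\alpha}$. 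In either case this minimum is precisely $\min_{b>0}\{|b|+\Psi'(|b|)\}$ from \cite{Fan01}, it is positive, and it equals the threshold in the theorem, so $\hat b=0$ for $|z|$ below it while a positive stationary point exists for $|z|$ above it. Among the positive solutions of $g(b)=z$, the smallest (when two exist) is a local maximum of $J_1$ and the largest, call it $b_2$, is a local minimum — this follows from the sign pattern of $g-z$ — so $b_2$ is the only candidate competing with $0$.

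To get the closed form I would substitute $s=\sqrt{2\alpha b+1}>1$, i.e.\ $b=\frac{s^2-1}{2\alpha}$, turning $g(b)=z$ into the depressed cubic $s^3-(2\alpha z+1)s+2\alpha\eta=0$. Its discriminant is a positive multiple of $(2\alpha z+1)^3-27\alpha^2\eta^2$, which, using the identity $(2w+1)^3-27w^2=(w-1)^2(8w+1)$ with $w=\alpha\eta$ together with the case analysis above, is nonnegative once $|z|$ reaches the threshold (with a double root exactly at the threshold in case (i)). Hence there are three real roots, and Viète's trigonometric formula for the largest one gives $s=2\sqrt{\frac{2\alpha z+1}{3}}\cos\!\big[\frac13\arccos(-\alpha\eta(\frac{3}{2\alpha z+1})^{3/2})\big]$, so $b_2=\frac{s^2-1}{2\alpha}=\frac{1}{2\alpha}\kappa(|z|)$ with $\kappa$ as stated; the triple‑angle identity $\cos 3\theta=4\cos^3\theta-3\cos\theta$ is what confirms this is a genuine root.

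Finally I would decide which of $0$ and $b_2$ is the global minimiser. For (ii), if $\eta<\frac1\alpha$ and $|z|\le\eta$ then $g(b)-z\ge\eta-z\ge 0$ on $[0,\infty)$, so $J_1$ is nondecreasing there and $\hat b=0$; if $|z|>\eta$ the unique positive stationary point $b_2$ is global because $J_1$ decreases on $[0,b_2]$ and increases afterwards, so $\hat b=\frac{\sgn(z)}{2\alpha}\kappa(|z|)$. Continuity then reduces to $\kappa(\eta)=0$, which I would verify by plugging $|z|=\eta$ into the $\arccos$ argument and applying the triple‑angle identity once more (equivalently, $s=1$ solves the cubic when $z=\eta$); since $\kappa$ is continuous and $S_\alpha$ is odd in $z$ with $S_\alpha(0,\eta)=0$, continuity follows on all of $\RB$. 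For (i), evaluating $\kappa$ at the threshold makes the $\arccos$ argument equal $-1$, giving $\kappa=(\alpha\eta)^{2/3}-1>0$, so $\hat b$ jumps to $b^\star\ne 0$ and the rule is discontinuous. The main obstacle is exactly this last step in case (i): because $\Psi''$ is bounded below only away from the origin, $J_1$ can carry the two local minima $0$ and $b_2$ simultaneously, so identifying the set $\{z:\hat b=0\}$ requires comparing $J_1(0)=\frac12 z^2$ with $J_1(b_2)$; I would track $h(z):=\frac12 z^2-J_1(b_2(z))$, whose derivative simplifies (using stationarity of $b_2$) to $h'(z)=2z-b_2(z)>0$, so $h$ is increasing and the crossing is unique — the delicate point being to show this crossing is the stated threshold (or otherwise to interpret $\hat b$ as the stationary point selected once it appears, as is natural for the coordinate‑descent use of $S_\alpha$).
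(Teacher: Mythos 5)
Your route is essentially the paper's: reduce to $z\ge 0$ by symmetry, set the threshold equal to $\min_{s\ge 0} g(s)$ with $g(s)=s+\eta(2\alpha s+1)^{-1/2}$ following \cite{Fan01}, substitute $u=(2\alpha b+1)^{1/2}$ to get the depressed cubic $u^3-(2\alpha z+1)u+2\alpha\eta=0$, and extract the largest root by the trigonometric formula. Your discriminant argument via $(2w+1)^3-27w^2=(w-1)^2(8w+1)$ is a slightly cleaner way to get three real roots than the paper's evaluation of $h$ at four sample points, but it is the same mechanism; case (ii), including continuity via $\kappa(\eta)=0$, matches the paper's treatment.

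Where you genuinely add something is case (i), and you have put your finger on a step the paper's proof silently skips. The paper asserts that for $|z|>T:=\frac{3}{2\alpha}(\alpha\eta)^{2/3}-\frac{1}{2\alpha}$ the estimator is the largest stationary point $b_2$, but $T$ is only the value of $|z|$ at which a positive stationary point first \emph{exists}. At $|z|=T$ one has $J_1'(b)=g(b)-T\ge 0$ for all $b\ge 0$, so $J_1$ is nondecreasing and $0$ is still the strict global minimizer; by continuity this persists for $|z|$ slightly above $T$, so the switch of the global minimizer occurs at some $z^*>T$ determined by $J_1(0)=J_1(b_2)$, not at $T$. Your energy-comparison function $h(z)=\frac{1}{2}z^2-J_1(b_2(z))$ is the right tool, though the envelope computation gives $h'(z)=z-\bigl(z-b_2(z)\bigr)=b_2(z)$, not $2z-b_2(z)$; since $b_2(z)>0$ the monotonicity and uniqueness of the crossing still follow. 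The upshot is that the formula in part (i) can only be read the way you suggest in your closing parenthesis --- as the nonzero stationary point selected once it appears, which is how the paper implicitly uses $S_\alpha$ --- and if one insists on the global-minimizer reading, the stated threshold is too small. Flagging this, rather than failing to notice it, is the right call; a complete proof of (i) as literally stated would need either the $h(z)$ computation carried to the explicit crossing point or an explicit caveat about which local minimizer is being reported.
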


\paragraph{Remarks}
In both the cases, we always have $|\hat{b}|\leq |z|$. The
objective function $J_{1}(b)$ in (\ref{eqn:general}) is strictly convex in $b$ whenever $\eta\leq  \frac{1}{\alpha}$. Moreover, according to Lemma~\ref{lem:33} in Appendix~\ref{app:aa}, the estimator
$\hat{b}$ in both the cases is strictly increasing w.r.t.\ $|z|$, and $\hat{b}$ is Lipschitz continuous
when $\eta< \frac{1}{\alpha}$ (also see Lemma~\ref{lem:33}).

%\begin{figure}[!ht]
%\centering
%% \begin{tabular}{ccc}
%\hspace{-0.7cm}
%\subfigure[$\alpha=100$]{\includegraphics[width=50mm,height=40mm]{shrinkage1_alpha100.eps}}  \hspace{-0.7cm}
%\subfigure[$\alpha=1$]{\includegraphics[width=50mm,height=40mm]{shrinkage1_alpha1.eps}} \hspace{-0.7cm}
%\subfigure[$\alpha=0.01$]{\includegraphics[width=50mm,height=40mm]{shrinkage1_alpha01.eps}} \\
%\hspace{-0.7cm}
%\subfigure[$\alpha=100$ and $\gamma=\sqrt{\alpha{+}1}{-}1$]{\includegraphics[width=50mm,height=40mm]{shrinkage2_alpha100.eps}}  \hspace{-0.7cm}
%\subfigure[$\alpha=1$ and $\gamma=\sqrt{\alpha{+}1}{-}1$]{\includegraphics[width=50mm,height=40mm]{shrinkage2_alpha1.eps}} \hspace{-0.7cm}
%\subfigure[$\alpha=0.01$ and $\gamma=\sqrt{\alpha{+}1}{-}1$]{\includegraphics[width=50mm,height=40mm]{shrinkage2_alpha01.eps}} \\
%%\end{tabular}
%\caption{Threshold rules for  KEP w.r.t.\ different values of $\alpha$ and $\eta$.
%%where $\eta= \frac{\lambda \alpha}{\sqrt{1{+}2\alpha}-1} = \frac{\lambda}{2}(1 {+} \sqrt{1{+}2\alpha})$.
%In the fist row $\eta> \frac{1}{\alpha}$, and in the second row $\eta \leq \frac{1}{\alpha}$.}
%\label{fig:thresh2}
%\end{figure}

We now explore  connection of the thresholding operator (function) based on the KEP penalty with the soft thresholding operator based on Lasso
and the half thresholding operator based on the $\ell_{1/2}$-norm penalty~\citep{XuTNN:2012}. For this purpose,  in terms of Proposition~\ref{lem:1}
we let $\eta=\frac{\lambda \alpha}{ \sqrt{2\alpha{+}1}{-}1}$ where $\lambda>0$ does not rely on $\alpha$.
Obviously, $\frac{\alpha}{\sqrt{2\alpha{+}1}{-}1} =\frac{\sqrt{2\alpha{+}1}{+}1}{2}\geq 1$.
Hence, $|z|\geq \frac{\lambda \alpha}{ \sqrt{2\alpha{+}1}{-}1}$ implies $|z|\geq \lambda$. Moreover, $\frac{\sqrt{2\alpha{+}1}{+}1}{2}$ is increasing but $\frac{1}{\sqrt{2\alpha{+}1}{-}1 }$
is decreasing in $\alpha$. This implies that the KEP penalty ($q=1$) to some extent satisfies the nesting property (see Figure~\ref{fig:thresh0}-(a)), a
desirable property for thresholding functions pointed out by~\cite{MazumderSparsenet:11}.
%has stronger sparseness  than the $\ell_1$-norm when $\eta\leq  \frac{4 \gamma}{\alpha^2}$.

Furthermore,
we have $\liml_{\alpha  \rightarrow 0} \frac{1}{\alpha} =\infty$ and  $\liml_{\alpha  \rightarrow 0} \frac{\lambda \alpha}{ \sqrt{2\alpha{+}1}{-}1} =\lambda$.
In this limiting case, it is clear that our thresholding function  approaches the soft thresholding function:
\[
\lim_{\alpha \to 0+} S_{\alpha}(z, \eta)= S(z, \lambda) :=\sgn(z)(|z|-\lambda)_{+}=\left\{\begin{array}{ll} 0 & \mbox{ if } |z|\leq \lambda, \\
\sgn(z)(|z|-\eta) &   \mbox{ if } |z|> \lambda. \end{array}  \right.
\]

Next, we take the limits that $\liml_{\alpha  \rightarrow \infty} \frac{1}{\alpha} =0$,
$\liml_{\alpha  \rightarrow \infty} \frac{3}{2\alpha} \Big(\frac{\alpha^2 \lambda}{\sqrt{2\alpha{+}1}-1}\Big)^{\frac{2}{3}} {-} \frac{1}{2\alpha} = 3
(\frac{\lambda}{4})^{2/3}$, and $\liml_{\alpha  \rightarrow \infty} \frac{\alpha^2 \lambda}{\sqrt{2\alpha{+}1}-1} (\frac{3}{2\alpha|z| {+} 1})^{\frac{3}{2}} =\frac{\lambda}{4} (\frac{3}{|z|})^{\frac{3}{2}}$. In this limiting case, $\eta>  \frac{1}{\alpha}$ is always met.
Thus, the  resulting estimator in Theorem~\ref{thm:sparsty}-(i)   degenerates to
\[
\hat{b} =  S_{\frac{1}{2}}(z, \lambda): = \left\{ \begin{array}{ll}
{\sgn}(z) \frac{4|z|}{3}\cos^2\Big[\frac{1}{3}
\arccos\big( {-}\frac{\lambda}{4} (\frac{3}{|z|})^{\frac{3}{2}}  \big) \Big]  & \textrm{ if } |z| >
3 (\frac{\lambda}{4})^{2/3}, \\
0 & \textrm{ if } |z| \leq 3 (\frac{\lambda}{4})^{2/3},
\end{array} \right.
\]
which is well established by \cite{XuTNN:2012}. Obviously, the above thresholding function is not continuous at $|z|=3 (\frac{\lambda}{4})^{2/3}$.
However,  the KEP penalty ($q=1$) can make the resulting estimators have ``unbiasedness," ``sparsity" and ``continuality"
by assuming $\eta\leq  \frac{1}{\alpha}$. Moreover, the KEP penalty satisfies the nesting property.

The previous analysis implies that $\alpha>0$ plays a role of ``temperature" in statistical physics.
When $\alpha \to \infty$, the thresholding function becomes  discontinuous from continuous status, yielding a ``phase transition" phenomenon.

In Figure~\ref{fig:thresh0}-(b) we compare the
thresholding rules for  the hard ($\ell_0$), soft ($\ell_1$ or Lasso), half ($\ell_{1/2}$) and the KEP penalty.
In Section~\ref{sec:related} we explore the relationship between KEP and MCP as well as the relationship between the thresholding functions based on KEP and MCP.

\begin{figure}[!ht]
\centering
%% \begin{tabular}{ccc}
%%\hspace{-1.5cm}
\subfigure[$\lambda=\frac{1}{4}$]{\includegraphics[width=75mm,height=50mm]{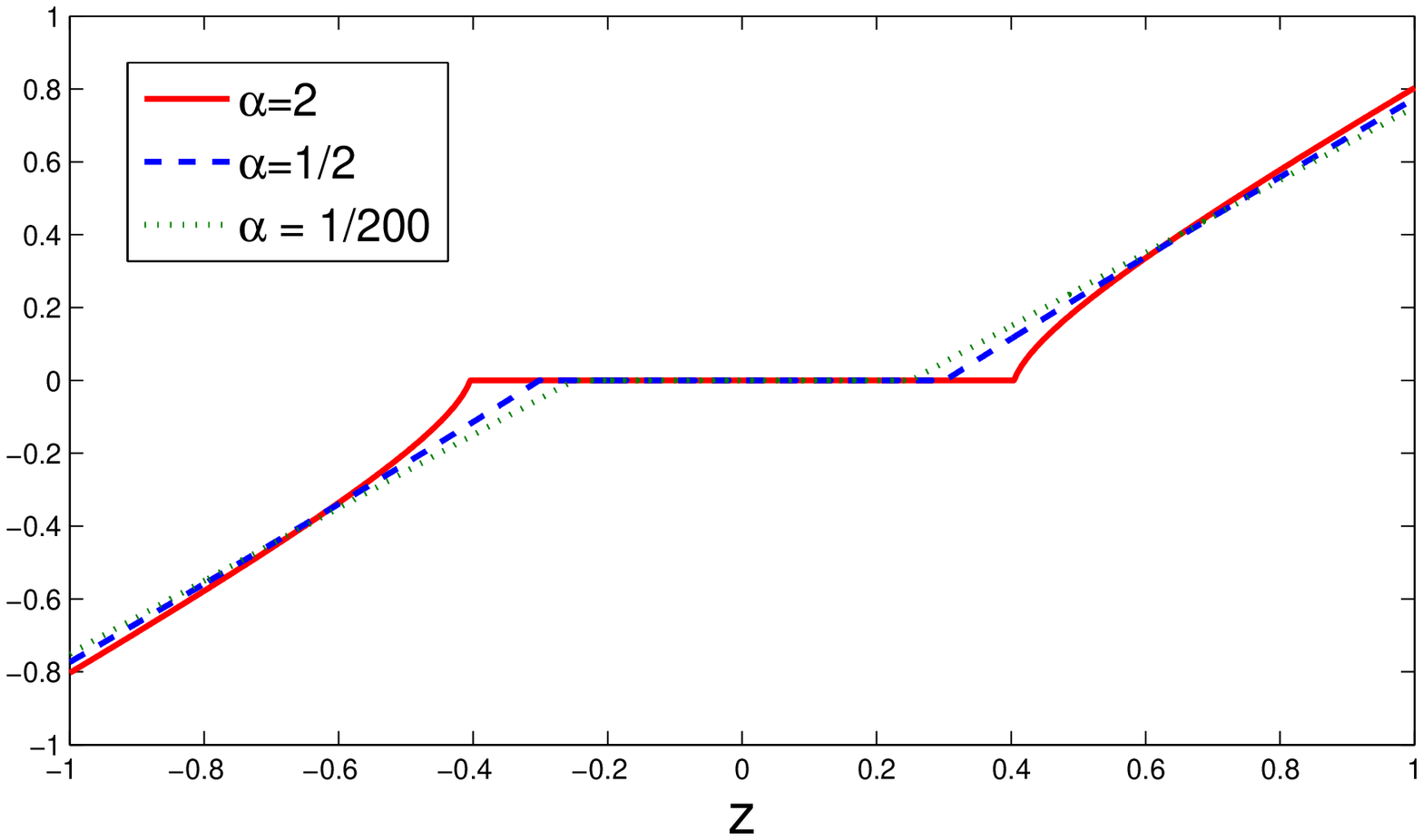}} \subfigure[$\alpha=\frac{1}{2}$ and $\lambda=1$]{\includegraphics[width=75mm,height=50mm]{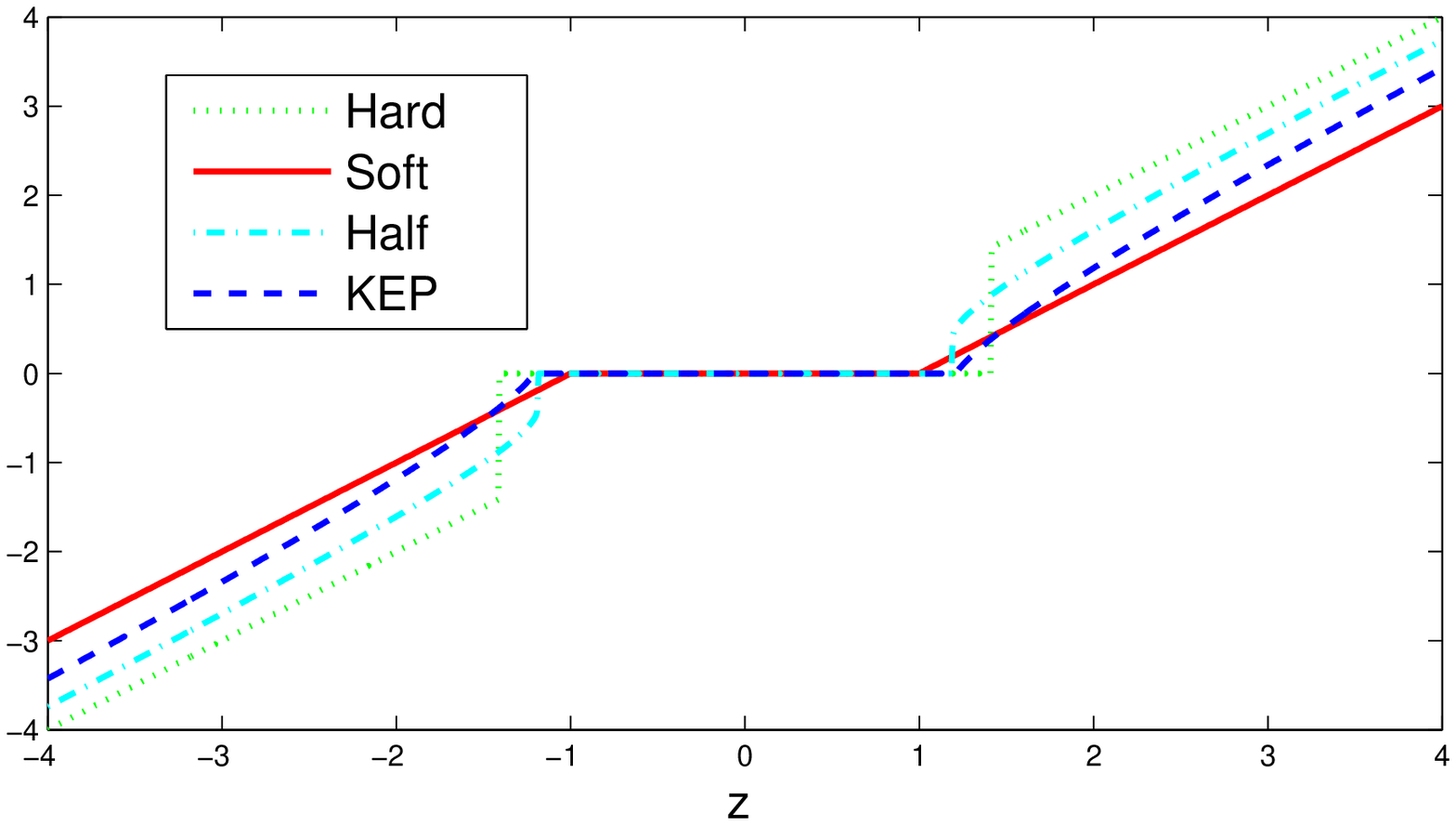}}  \\
%\end{tabular}
\caption{ (a) Threshold rules for KEP with $\eta=\frac{\lambda }{2} ({\sqrt{1{+}2\alpha}{+}1} )$ where  $\lambda$ is fixed and $\alpha$ varies. (b) Threshold rules for the Hard ($\ell_0$) ($z I(|z|\geq \sqrt{2 \lambda})$), Soft ($\ell_1$) ($\sgn(z)(|z|-\lambda)_{+}$), Half ($\ell_{1/2}$) and  KEP with $\eta=\frac{\lambda }{2} ({\sqrt{1{+}2\alpha}{+}1} )$. }
\label{fig:thresh0}
\end{figure}

%\subfigure[$\Phi(|b|; \alpha)$ vs. $|b|$]{\includegraphics[width=65mm,height=65mm]{q1_4.pdf}}

\subsection{The Coordinate Descent Algorithm}
\label{sec:cda}

Given the training dataset $(\y, \X)$, we consider the following minimization problem
\[
J(\b) = \frac{1}{2}\|\y- \X \b\|_2^2 + \sum_{j=1}^p \Psi(|b_j|; \eta, \alpha).
\]
Based on the discussion in the previous subsection, the KEP penalty with $q=1$ is suitable
for the coordinate descent algorithm. Particularly,
the coordinate descent procedure of solving the above minimization problem is given Algorithm~\ref{alg:coord}.

Obviously, $\Psi(|b|; \eta, \alpha)$ is symmetric around $0$. Moreover, $\Psi'(|b|; \eta, \alpha)=\eta (1+2 \alpha |b|)^{-1/2}$ (the derivative w.r.t.\ $|b|$) is positive, continuous and uniformly
bounded (i.e., $\Psi'(|b|; \eta, \alpha) \leq \eta$) for $|b|\geq 0$. Since
\[
\Psi{''}(|b|; \eta, \alpha)= \frac{d^2 \Psi(|b|; \eta, \alpha)}{d |b|^2}=- \eta \alpha (1+2 \alpha |b|)^{-3/2},
\]
we have that $\minl_{b} \; \Psi{''}(|b|; \eta, \alpha) =- \eta \alpha >-1$ when $\eta < \frac{1}{ \alpha}$.

Assume $(\y, \X)$ lies on a compact set and no column of $\X$ is degenerate. It then follows from Theorem~4 of \cite{MazumderSparsenet:11}
that the univariate maps $b \mapsto J_1(b)$ are strictly convex and that the sequence $\{\b^{(t)}; t=1, 2, \cdots\}$ generated via Algorithm~\ref{alg:coord} converges to a (local) minimum of the objective function $J(\b)$.

%the algorithm enjoys the same  convergence property proposed by .

\begin{algorithm}[!ht]
   \caption{The coordinate descent algorithm}
   \label{alg:coord}
\begin{algorithmic}
   \STATE {\bfseries Input:} $\{\x_i, y_i\}_{i=1}^n$ where each column of $\X=[\x_i, \ldots, \x_n]^T$ is standardized
   to have mean 0 and length 1,
   a grid of increasing values $\Lambda=\{\lambda_1, \ldots, \lambda_L\}$, a grid of decreasing values $\Gamma=\{\alpha_1, \ldots, \alpha_K\}$
   where $\alpha_K$ indexes the Lasso penalty.
   \STATE Set  $\hat{\b}_{\alpha_K, \eta_{L+1}}=0$.
  % \STATE For each value of $l \in \{L, L-1, \ldots, 1\}$
   \FOR{each value of $l \in \{L, L-1, \ldots, 1\}$ }
 %  \REPEAT
   \STATE Initialize  $\tilde{\b} = \hat{\b}_{\alpha_K, \eta_{l+1}}$;
    \FOR{each value of $k \in \{K, K-1, \ldots, 1\}$ }
    \STATE Compute $\eta_{lk}= \frac{\lambda_l }{2}(1+\sqrt{1+2 \alpha_k})$
    \IF{$\eta_{lk} {\alpha_k} < 1$ }
  %   \REPEAT
     \STATE Cycle through the following one-at-a-time updates
      \[\tilde{b}_j = S_{\alpha_{k}} \Big(\sum_{i=1}^n(y_i- z_{i}^j)x_{ij}, \eta_{lk}\Big), \quad j=1, \ldots, p
       \]
      where $z_i^j=\sum_{k\neq j} x_{ik} \tilde{b}_k$, until the updates converge to $\b^{\ast}$;
   \STATE $\hat{\b}_{\alpha_k, \lambda_l} \leftarrow \b^{\ast}$.
     %  \UNTIL
   \ENDIF
   \ENDFOR
   \STATE Increment $k$;
 %     \UNTIL
   \ENDFOR
   \STATE Decrement $l$;
   \STATE {\bfseries Output:} Return the two-dimensional solution $\hat{\b}_{\alpha, \lambda}$ for $(\alpha, \lambda) \in \Gamma{\times}\Lambda$.
\end{algorithmic}
\end{algorithm}

Note that the second-order derivative of $\lambda |b|^{1/2}$ w.r.t.\ $|b|$ is $-\frac{\lambda}{4 |b|^{3/2}}$ and $\infl_{b} -\frac{\lambda}{4 |b|^{3/2}}=-\infty$ for a fixed positive $\lambda$. Thus, the convergence result given in Theorem~4 of \cite{MazumderSparsenet:11}
is not applicable  to the $\ell_{1/2}$-penalty case.

It is worth pointing out that the iteratively reweighted $\ell_1$ method of \citet{Daubechies:2010}
is essentially equivalent to the multi-state LLA procedure  of  \citet{TZhangJMLR:10}.
The multi-state LLA for  the minimization problem in (\ref{eqn:general}) gives the following  update
\[
b^{(t+1)}=S(z, w_0^{(t)})= \argmin_{b } \frac{1}{2} (b-z)^2+ w_0^{(t)} |b|
\]
where $w_0^{(t)}= \eta/\sqrt{1+2\alpha |b^{(t)}|}$, i.e., the derivative of $\Psi(|b|;\eta, \alpha)$ at $|b|=|b^{(t)}|$.
Since $J_1(b)$ is strictly convex when $\eta \alpha<1$, it is also reasonable to let $\eta \alpha<1$ when applying the multi-state LLA method.

For the sake of simplicity, we assume that $z\geq 0$.
If $z \geq \eta/\sqrt{1+2\alpha |b^{(i)}|}$
for any $1 \leq i \leq k$, we obtain $b^{(t)}\geq 0$. Using the fact that $1/\sqrt{1+2\alpha s}$ is convex in $s\geq 0$,
we have
\[
b^{(t+1)}=z- \eta/\sqrt{1+2\alpha |b^{(t)}|}\leq z- \eta + \eta \alpha b^{(t)}\leq (z- \eta)\sum_{i=0}^t (\eta\alpha)^i
+ (\eta\alpha)^{(t+1)} b^{(1)},
\]
which implies that the multi-state LLA procedure converges to the minimum of $J_1(b)$ ($\eta \alpha<1$)
at rate $O((\eta \alpha)^t)$ in the worst case. This result agrees with that of \citet{MazumderSparsenet:11} about the univariate MCP  penalized squares problem.  As a result, the number of iterations required for the multi-state LLA procedure to converge with an $\epsilon$ tolerance of
the minimizer of $J_1(b)$ is of order $-\frac{\log (\epsilon)}{\log (\eta \alpha)}$.
Thus, the multi-state LLA based coordinate-wise  method is less efficient than Algorithm~\ref{alg:coord}.

\section{Relationships Between KEP and MCP}
\label{sec:related}

In Sections~\ref{sec:method0} and \ref{sec:sest}  we discuss the relationship of KEP with the $\ell_{1}/2$ and $\ell_{1}$ norms.
In this section we explore the relationship between KEP and MCP.

Note that $\chi^2$ distance $\frac{(w-\eta)^2}{w}$ between $w$ and $\eta$ is not symmetric. Thus, it is also interesting to
consider the concave conjugate of $\frac{(w-\eta)^2}{\eta}$. In this regard,
the corresponding concave conjugate is given by
\[
 \min_{w\geq 0} \Big\{w s +\frac{1}{2 \alpha} \frac{(w-\eta)^2}{\eta}  \Big\}.
\]
We denote the minimum as $\eta M(s)$ where
\[
 M(s) = \left\{\begin{array}{ll} \frac{1}{2 \alpha} & \mbox{ if } s\geq \frac{1}{\alpha}, \\
s - \frac{\alpha s^2}{2} & \mbox{ if } s < \frac{1}{\alpha},  \end{array} \right.
\]
which is in fact the MCP function of \citet{MazumderSparsenet:11} when setting $\frac{1}{\alpha}=\lambda \gamma$ and $\eta=\lambda$ therein.
This recovers an important connection between KEP and MCP; that is,  both are  based on the $\chi^2$-distance.
Note that \citet{ZhangAAAI:2013} constructed the MCP function using the concave conjugate of the squared Euclidean distance function. Their construction approach is essentially
equivalent to the previous construction, because $|w-\eta|^2$ is the squared Euclidean distance  and $\alpha \eta$
can be treated as a new single parameter.

Let us return to
the KEP function $\Psi(s; \eta, \alpha)$ defined in (\ref{eqn:relativity}) where $q=1$ and $s=|b|$.  Furthermore, we define $\Psi(s; \eta, \alpha) = \eta K(s)$ where  $K(s)= \frac{1}{\alpha} (\sqrt{2 \alpha s +1}-1)$.
For a fixed $\alpha>0$, it is easily verified that
\[
M(s)\leq K(s)\leq s,
\]
with equality only if $s=0$ (also see Figure~\ref{fig:thresh1}(a)). Additionally, $K(s)$ is infinitely differentiable on $[0, \infty)$.
However, $M(s)$ is only first-order differentiable on $[0, \infty)$. The second-order derivative of $M(s)$ at
$s=\1/\alpha$ does not exist (see Figure~\ref{fig:thresh1}(b)). However, the convergence result of \citet{MazumderSparsenet:11}
is built on the assumption that the second-order derivative exists (see Theorem 4 therein).

\begin{figure}[!ht]
\centering
%% \begin{tabular}{ccc}
%%\hspace{-1.5cm}
\subfigure[$\alpha=1$]{\includegraphics[width=75mm,height=50mm]{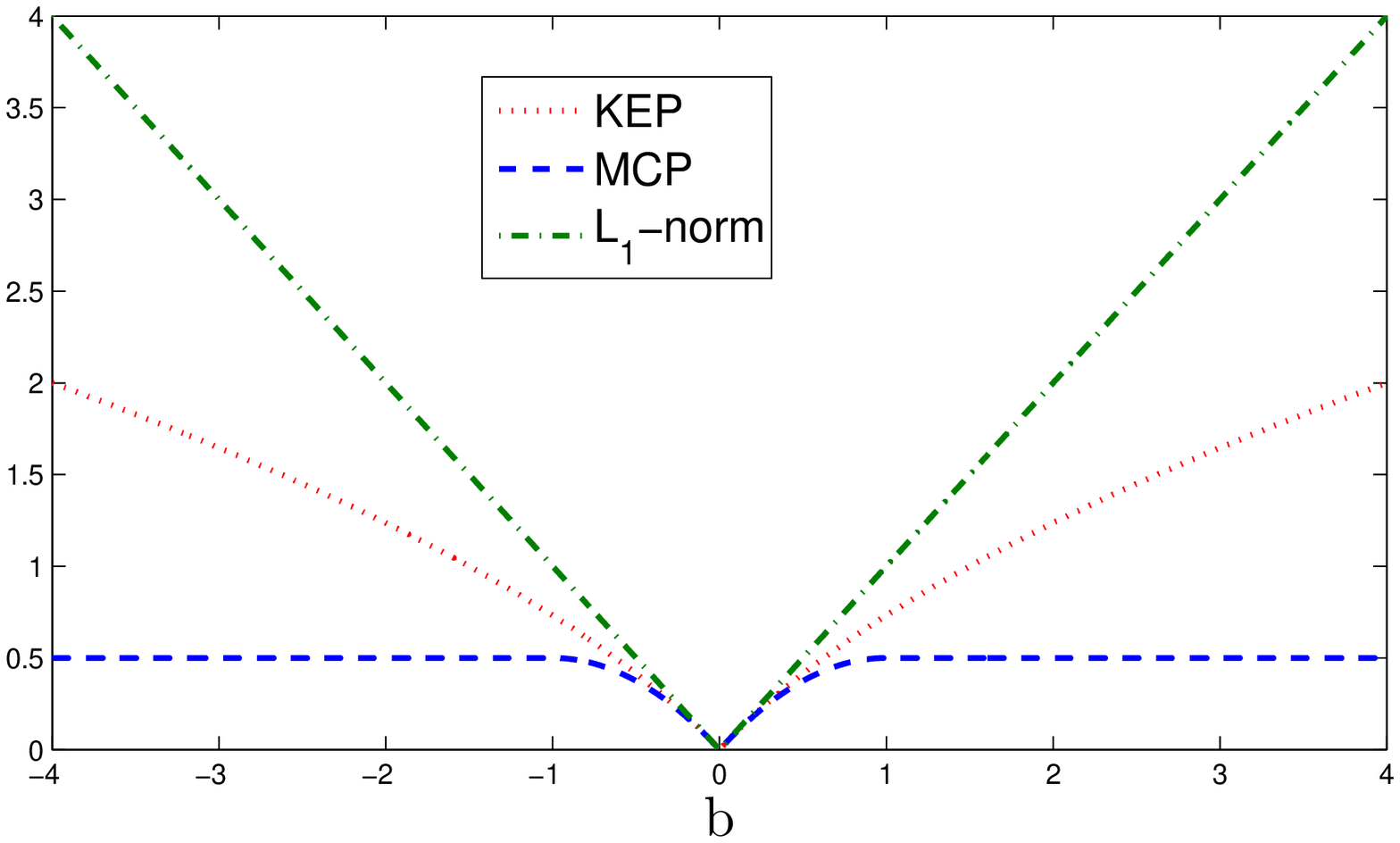}} \subfigure[$\alpha=1$]{\includegraphics[width=75mm,height=50mm]{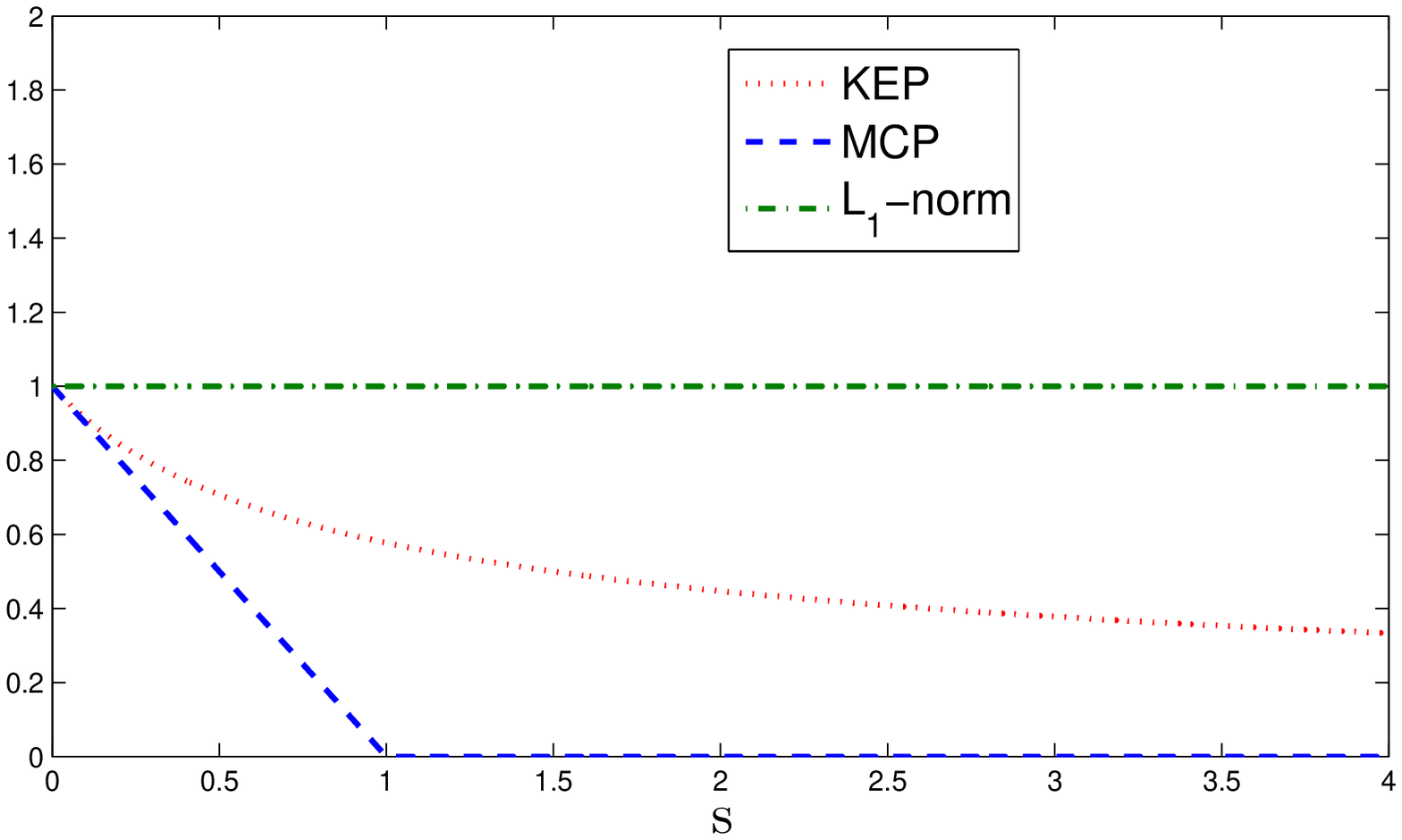}} \\
%\end{tabular}
\caption{(a) The functions: KEP $K(|b|)$, MCP $M(|b|)$ and  $\ell_1$-norm $|b|$ w.r.t.\ $b \in (-\infty, \infty)$.  (b) The derivatives of $K(s)$, $M(s)$ and $s$ w.r.t\ $s\geq 0$.}
\label{fig:thresh1}
\end{figure}

To obtain
the thresholding function w.r.t.\ MCP, we also need to consider the two cases that $\eta \alpha<1$ and $\eta \alpha \geq  1$. In the first case that  $\eta \alpha<1$, the thresholding function
is given as
\[
S_{\alpha}(z, \eta) =\left\{\begin{array}{ll} 0 & \mbox{ if } |z|\leq \eta \\  \sgn(z) \frac{|z| -\eta} {1- \alpha \eta} & \mbox{ if } \eta < |z|\leq  \frac{1}{\alpha} \\  z & \mbox{ if } |z|> \frac{1}{\alpha},
 \end{array} \right.
\]
which is identical to the one of \citet{MazumderSparsenet:11} when setting $\frac{1}{\alpha}=\lambda \gamma$ and $\eta=\lambda$.
The resulting rule $S_{\alpha}(z, \eta)$ is obviously continuous.
However, $S_{\alpha}(z, \eta)$ is not smooth for $|z|>\eta$. Specifically,  $S_{\alpha}(z, \eta)$  is not differentiable at $|z|=1/\alpha$.
Recall that the thresholding function w.r.t.\ KEP is always smooth for $|z|>\eta$  in the case that  $\eta \alpha<1$ (see Theorem~\ref{thm:sparsty}).  In Figure~\ref{fig:thresh}-(a), we illustrate comparison of MCP with the $\ell_1$-norm and KEP.
As we see, KEP can be treated as a trade-off of the $\ell_1$-norm and MCP in unbiasedness and  differentiability.

In the second case that $\eta \alpha \geq  1$,  the thresholding function w.r.t.\ MCP  is
\[
H_{\alpha}(z, \eta) =\left\{\begin{array}{ll} 0 & \mbox{ if } |z|\leq \frac{1}{\alpha} \\   z & \mbox{ if } |z|> \frac{1}{\alpha}.
 \end{array} \right.
\]
The derivation is based on some direct computations, so we omit it.
Clearly, $H_{\alpha}(z, \eta)$ is not continuous at $|z|=\frac{1}{\alpha}$ in this case  (see Figure~\ref{fig:thresh}-(b)).
Especially, when $\alpha \eta =1$, the thresholding function is also  not continuous at $|z|=\frac{1}{\alpha}$.
However,  it  is obtained from Theorem~\ref{thm:sparsty} that the
thresholding function w.r.t.\ KEP is still continuous when $\alpha \eta =1$ (see Figure~\ref{fig:thresh}-(c)).
%becomes
%\[
%S_{\alpha}(z, \eta) =\left\{\begin{array}{ll} 0 & \mbox{ if } |z|\leq \eta \\   z & \mbox{ if } |z|> \eta.
% \end{array} \right.
%\]

\begin{figure}[!ht]
\centering
%% \begin{tabular}{ccc}
%%\hspace{-1.5cm}
\subfigure[$\eta=1$ and $\alpha=1/2$]{\includegraphics[width=50mm,height=40mm]{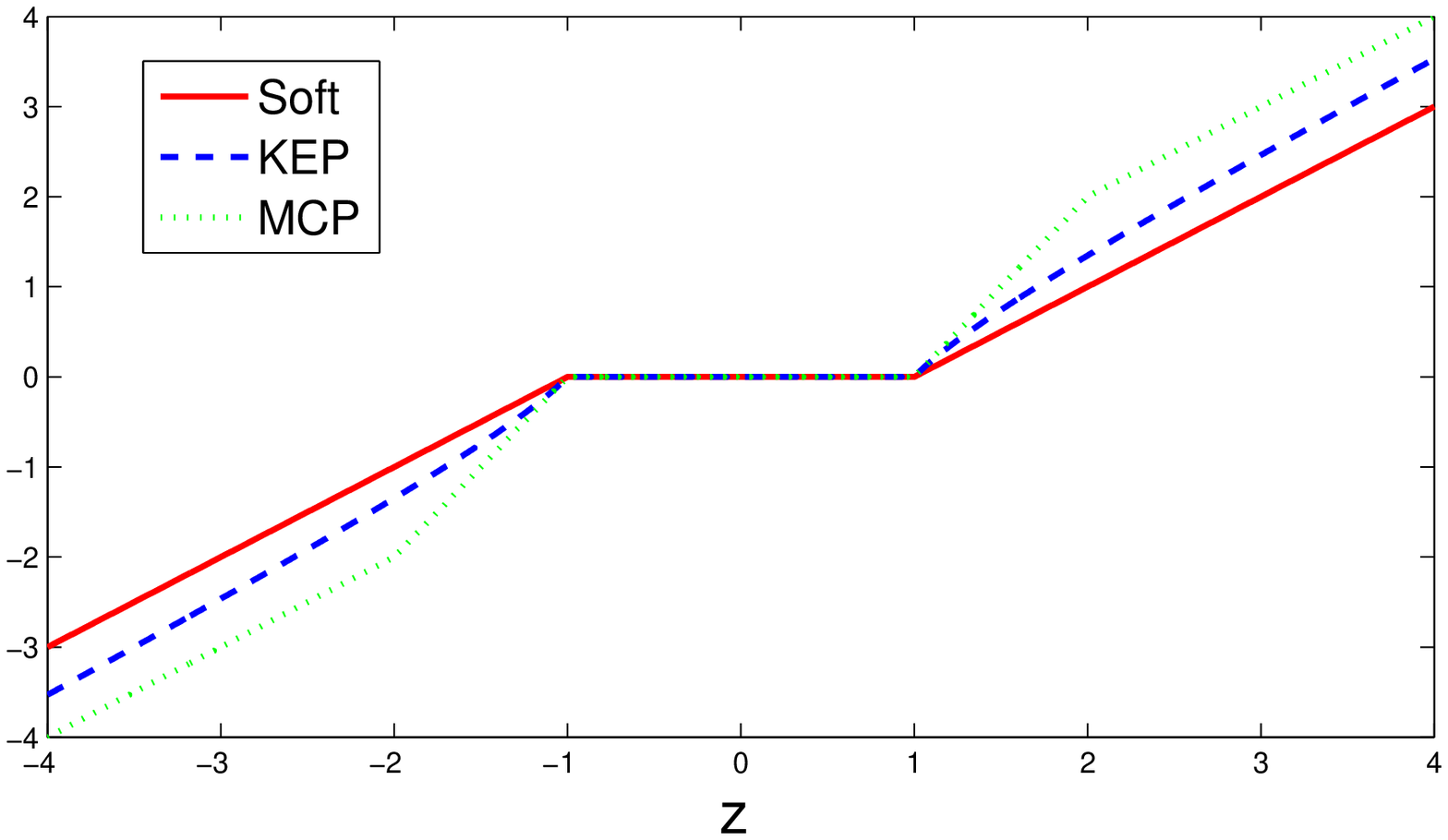}}  %\vsace{-0.1in}
\subfigure[$\alpha=2$ and $\eta=1$]{\includegraphics[width=50mm,height=40mm]{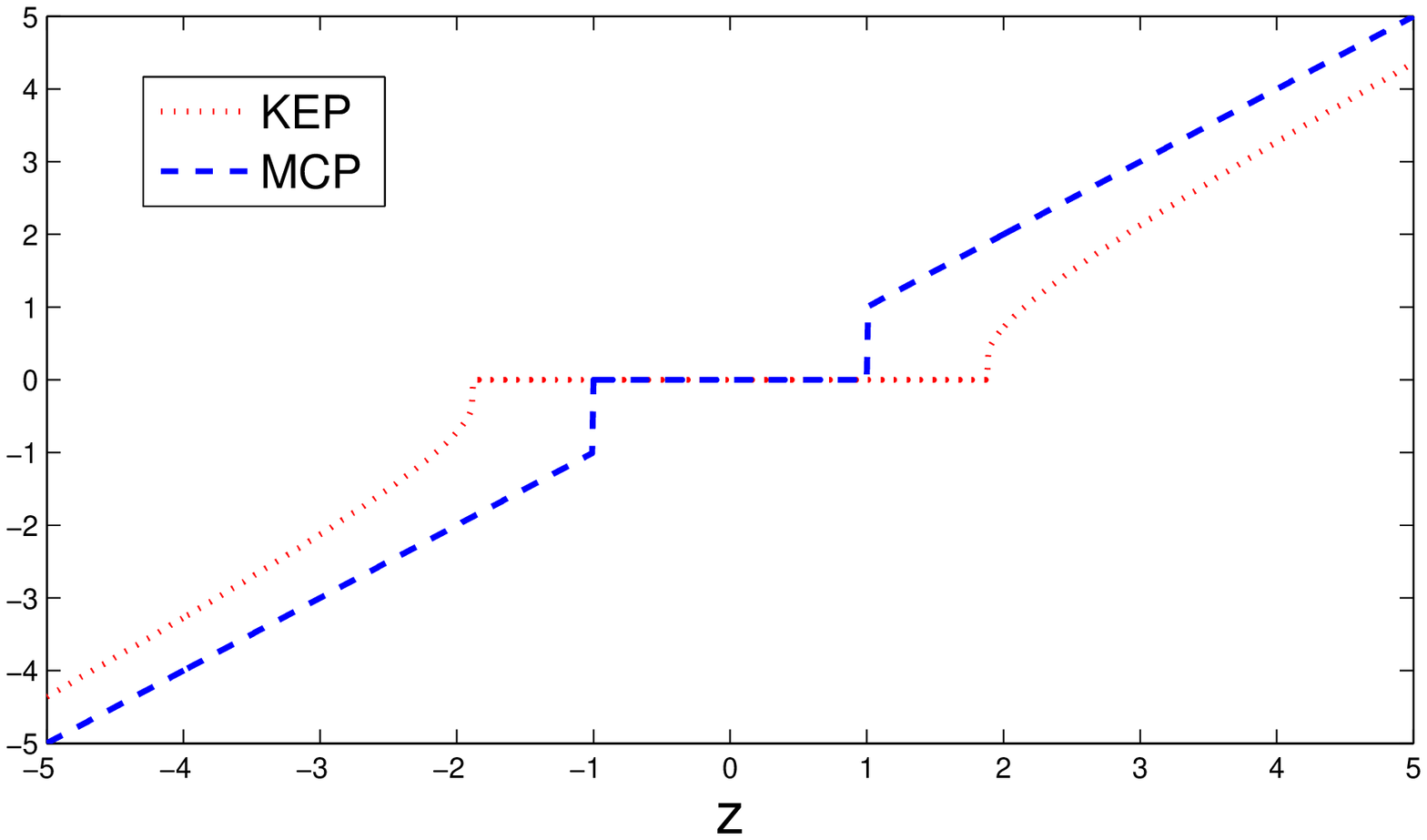}} %\vspace{-0.1in}
\subfigure[$\alpha=1$ and $\eta=1$]{\includegraphics[width=50mm,height=40mm]{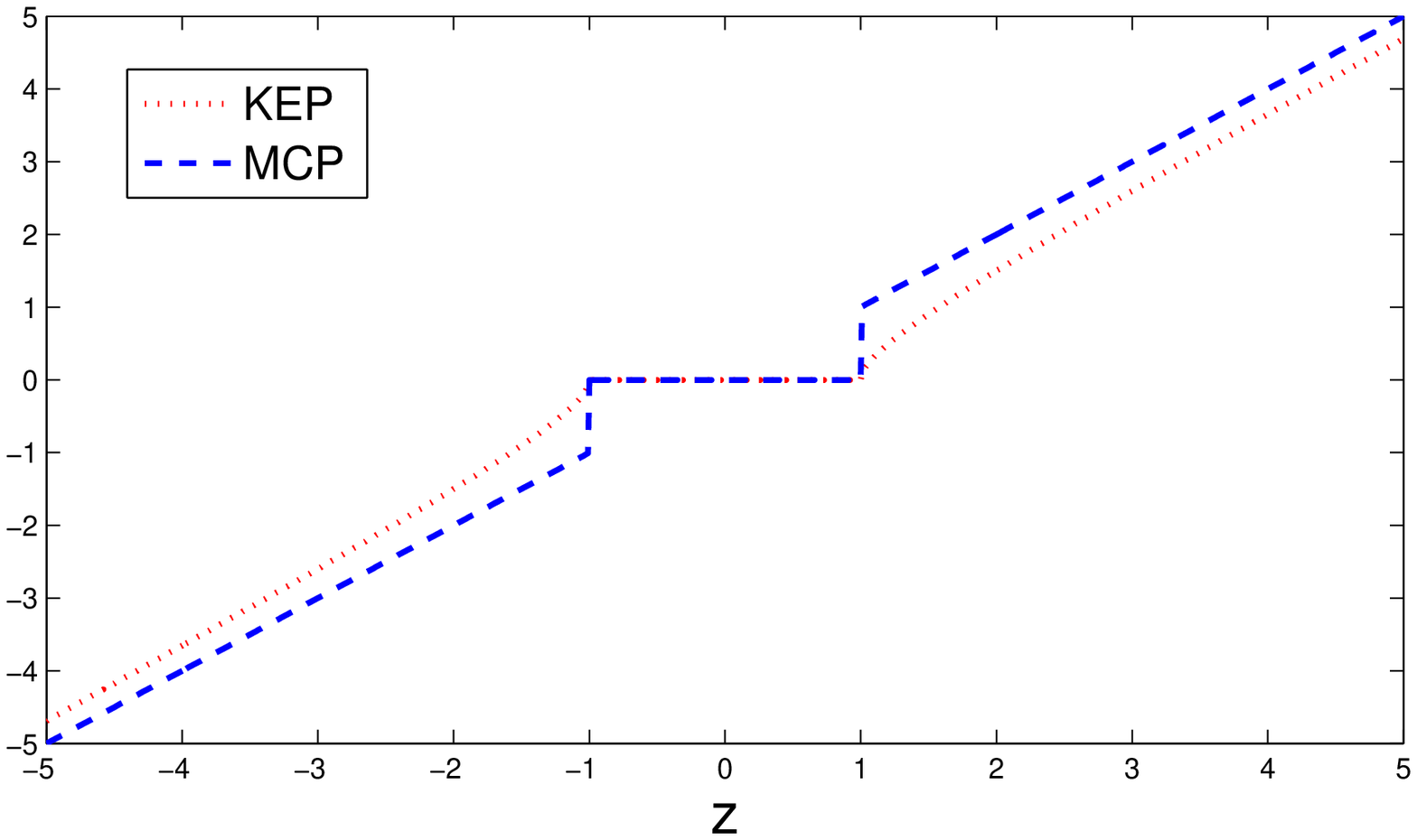}}  \\
%\end{tabular}
\caption{(a) Threshold rules for the  Soft (Lasso) ($\sgn(z)(|z|-\eta)_{+}$), KEP and MCP under the condition $\eta \alpha>1$; (b) KEP and MCP under the condition $\eta \alpha>1$;  (c) KEP and MCP under the condition $\eta \alpha=1$.}
\label{fig:thresh}
\end{figure}

%It is worth pointing out that the iteratively reweighted $\ell_1$ method
%is essentially equivalent to an LLA procedure  of  \citet{ZouLi:2008,TZhangJMLR:10}.
%The multi-state LLA for the univariate MCP  penalized squares problem gives the following  updates
%\[
%b^{(t+1)}=S(z, \omega^{(t)})= \argmin_{b } \frac{1}{2} (b-z)^2+ \omega^{(t)} |b|
%\]
%where $\omega^{(t)}= \eta(1-\alpha |b^{(t)}|)_{+}$, i.e., the derivative of $\eta M(|b|)$ at $|b|=|b^{(t)}|$.
%\citet{MazumderSparsenet:11} showed that when $\eta \alpha>1$ this procedure converges to the minimum of $\frac{1}{2} (b-z)^2+ \eta M(|b|)$
%at (worst case) rate $O(1/(\eta \alpha)^t)$.
%
%For the minimization problem in (\ref{eqn:general}) based on $\Psi(|b|;\eta, \alpha)$,
%The multi-state LLA gives then the following  updates
%\[
%b^{(t+1)}=S(z, w_0^{(t)})= \argmin_{b } \frac{1}{2} (b-z)^2+ w_0^{(t)} |b|
%\]
%where $w_0^{(t)}= \eta/\sqrt{1+2\alpha |b^{(t)}|}$, i.e., the derivative of $\Psi(|b|;\eta, \alpha)$ at $|b|=|b^{(t)}|$.
%For the sake of simplicity, we assume that $z\geq 0$.
%If $z \geq \eta/\sqrt{1+2\alpha |b^{(i)}|}$
%for any $1 \leq i \leq k$, we obtain $|b^{(t)}\geq 0$. Thus,
%\[
%b^{(t+1)}=z- \eta/\sqrt{1+2\alpha |b^{(t)}|}\leq z- \eta + \eta \alpha b^{(t)}\leq (z- \eta)\sum_{i=0}^t (\eta\alpha)^i + (\eta\alpha)^{(t+1)}
%b^{(1)},
%\]
%which implies that the multi-state LLA procedure converges to the minimum of $J_1(b)$ also
%at (worst case) rate $O(1/(\eta \alpha)^t)$.

We now take  behaviours as $\alpha$ approaches to limiting cases. First, we immediately have  that $\liml_{\alpha \to 0+} \; M(s) = s$ and
\[
\lim_{\alpha \to 0+} S_{\alpha}(z, \eta) =S(z, \eta) :=\left\{\begin{array}{ll} 0 & \mbox{ if } |z| \leq \eta \\  \sgn(z)( |z|-\eta) & \mbox{ if } |z|> \eta.
 \end{array} \right.
\]
Second, let $\eta=\frac{\lambda}{M(1)}$ where $\lambda>0$ is a constant that independents on $\alpha$. We have that
\[
\lim_{\alpha \to 0+} \; \frac{M(s)}{M(1)} = s \quad \mbox{and} \quad \lim_{\alpha \to \infty} \; \frac{M(s)}{M(1)} =\left\{\begin{array}{ll} 0 & \mbox{ if } s=0 \\ 1 & \mbox{ if } s\neq 0. \end{array} \right.
\]
This shows that $\frac{M(s)}{M(1)}$ get the entire continuum from the $\ell_1$-norm to the $\ell_0$-norm, as varying from $\alpha \to 0+$ to $\alpha \to \infty$.
However, it is not tractable to derive the  thresholding function corresponding to the penalty function $\frac{M(s)}{M(1)}$ because $M(1)$  as a function of $\alpha$ is not smooth.
We feel that this would be an important reason that MCP does not hold the nesting property~\citep{MazumderSparsenet:11}. In contrast,
KEP can keep this property by setting $\eta= \frac{\lambda \alpha}{\sqrt{1{+}2\alpha} {-} 1}$.

When we let $s=|b|^2$,  $K(|b|^2)$ is convex in $|b|$ (see Section~\ref{sec:method0}). In fact, $K(|b|^2)$  is used
by \citet{Daubechies:2010} in devising the iterative reweighted $\ell_2$ method (see Section~\ref{sec:method0}). However,
$M(|b|^2)$ is neither convex nor concave in $|b|$.
%An other  important advantage of KEP over MCP is the ability of  KEP in  Bayesian modeling.
%\citet{ZouLi:2008} proved that the MCP function does not have Bayesian interpretations. In Section~\ref{sec:bayes}, we
%show that the KEP function is able to induce a prior distribution.

\section{Asymptotic Properties}
\label{sec:asymp}

We discuss asymptotic properties of sparse estimators. Following the setup of \cite{ZouLi:2008},
we assume two conditions: (1) $y_i=\x_i^T \b^{*} + \epsilon_i$ where $\epsilon_1, \ldots, \epsilon_n$ are iid errors
with mean 0 and variance $\sigma^2$;
(2) $\X^T \X/n\rightarrow \C$ where $\C$ is a positive definite matrix. Let ${\cal A}=\{j: b_{j}^{*} \neq 0\}$.
Without loss of generality, we assume that ${\mathcal A}=\{1, 2, \ldots, r\}$ with $r<p$. Thus, partition $\C$ as
\[
\begin{bmatrix}\C_{11} & \C_{12} \\  \C_{21} & \C_{22} \end{bmatrix},
\]
where $\C_{11}$ is $r{\times} r$. Additionally,  let $\b^{*}_{1} =\{b^{*}_{j}: j \in {\mathcal A}\}$
and $\b^{*}_{2} = \{b^{*}_{j}:  j \notin {\cal A}\}$.

Recall that the iteratively reweighted $\ell_1$ method of \cite{Daubechies:2010} can be regarded as a multi-stage LLA estimator~\citep{TZhangJMLR:10}. Specifically,
we study the oracle property of  the one-step LLA suggested by \citet{ZouLi:2008}. Based on the KEP with $q=1$,
we consider  the following one-step sparse estimator:
\[
\b_n^{(1)}=\argmin_{\b}  \;  \|\y {-} \X \b\|_2^2  +   \sum_{j=1}^p    \frac{\eta_n}{\sqrt{1{+} 2\alpha_n |b^{(0)}_j|}} |b_j|,
\]
where $\b^{(0)}=\big(b_1^{(0)}, \ldots, b_p^{(0)}\big)^T $ is a root-$n$-consistent estimator to $\b^{*}$.  The following theorem shows that
this estimator has the oracle property. That is,
\begin{theorem}  \label{thm:oracle1} Let $\b_{n1}^{(1)}=\{b_{nj}^{(1)}: j \in \AM\}$  and ${\cal A}^{(1)}_n=\{j: b_{nj}^{(1)} \neq 0\}$.
Suppose that
$n^{-3/4} \eta_n \rightarrow 0$, $\eta_n/\sqrt{n} \rightarrow \infty$, and $\alpha_n/\sqrt{n} \rightarrow c_1$ where $c_1 \in (0, \infty)$.
Then $\b_n^{(1)}$ satisfies the following properties:
\begin{enumerate}
\item[\emph{(1)}]  Consistency in variable selection:   \[\lim_{n \rightarrow \infty} P({\cal A}^{(1)}_n={\cal A})=1.\]
\item[\emph{(2)}]  Asymptotic normality: \[\sqrt{n}(\b^{(1)}_{n1} - \b^{*}_{1}) \overset{d}{\longrightarrow} N(\0, \sigma^2 \C_{11}^{-1}).\]
\end{enumerate}
\end{theorem}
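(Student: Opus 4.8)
The plan is to mimic the one-step / LLA argument of \citet{KnightFu:2000} and \citet{ZouLi:2008}, viewing $\b_n^{(1)}$ as an adaptively weighted lasso estimator with random weights $w_{nj} := \eta_n/\sqrt{1+2\alpha_n|b^{(0)}_j|}$, which is precisely the derivative of the KEP penalty $\Psi(\cdot\,;\eta_n,\alpha_n)$ evaluated at $|b^{(0)}_j|$. Reparametrize $\b = \b^{*} + \u/\sqrt{n}$ and let $V_n(\u)$ be the objective minus its value at $\u = \0$, so that $\hat\u_n := \sqrt n(\b_n^{(1)} - \b^{*}) = \argmin_{\u} V_n(\u)$. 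Since $\y - \X\b^{*} = \epsi$, the quadratic part of $V_n(\u)$ equals $-\frac{2}{\sqrt n}\u^T\X^T\epsi + \frac{1}{n}\u^T\X^T\X\u$, which by the Lindeberg central limit theorem and the assumption $\X^T\X/n \to \C$ converges in distribution to $-2\u^T\W + \u^T\C\u$ with $\W \sim N(\0,\sigma^2\C)$; the penalty part is $\sum_{j=1}^p w_{nj}(|b^{*}_j + u_j/\sqrt n| - |b^{*}_j|)$, and the whole argument hinges on the asymptotics of the $w_{nj}$, which is where the rate hypotheses enter.

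For $j \in \AM$ (so $b^{*}_j \neq 0$): root-$n$-consistency of $\b^{(0)}$ gives $b^{(0)}_j \to_p b^{*}_j \neq 0$, and $\alpha_n/\sqrt n \to c_1 > 0$ forces $n^{-1/2}(1 + 2\alpha_n|b^{(0)}_j|) \to_p 2c_1|b^{*}_j| > 0$; hence $w_{nj}/\sqrt n = (\eta_n n^{-3/4})/(n^{-1/4}\sqrt{1+2\alpha_n|b^{(0)}_j|}) \to_p 0$ by $n^{-3/4}\eta_n \to 0$. Since $\sqrt n(|b^{*}_j + u_j/\sqrt n| - |b^{*}_j|) \to u_j\,\sgn(b^{*}_j)$ stays bounded, the $j$th penalty term tends to $0$ in probability. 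For $j \notin \AM$ (so $b^{*}_j = 0$): root-$n$-consistency gives $\sqrt n\,b^{(0)}_j = O_p(1)$, so $2\alpha_n|b^{(0)}_j| = O_p(\alpha_n/\sqrt n) = O_p(1)$, and $1 + 2\alpha_n|b^{(0)}_j|$ is bounded in probability and $\geq 1$; hence $w_{nj}/\sqrt n = (\eta_n/\sqrt n)/O_p(1) \to_p \infty$ by $\eta_n/\sqrt n \to \infty$. The $j$th penalty term equals $(w_{nj}/\sqrt n)|u_j|$, which tends to $+\infty$ if $u_j \neq 0$ and equals $0$ if $u_j = 0$.

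Combining these, $V_n(\u) \to_d V(\u)$ pointwise in $\u$, where
\[
V(\u) = \begin{cases} -2\u^T\W + \u^T\C\u & \text{if } u_j = 0 \text{ for all } j \notin \AM,\\ +\infty & \text{otherwise.}\end{cases}
\]
Each $V_n$ is convex in $\u$ and $V$ has an almost surely unique minimizer since $\C_{11}$ is positive definite, so the convexity lemma of \citet{KnightFu:2000} yields $\hat\u_n \to_d \argmin_{\u} V(\u)$. Restricted to the subspace $\{u_j = 0,\ j\notin\AM\}$, $V$ reduces to $-2\v^T\W_{\AM} + \v^T\C_{11}\v$ with $\W_{\AM} \sim N(\0,\sigma^2\C_{11})$, whose minimizer is $\v = \C_{11}^{-1}\W_{\AM} \sim N(\0,\sigma^2\C_{11}^{-1})$. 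This is exactly part~(2): $\sqrt n(\b^{(1)}_{n1} - \b^{*}_1) \overset{d}{\longrightarrow} N(\0,\sigma^2\C_{11}^{-1})$; it also gives $\sqrt n\,b^{(1)}_{nj} \to_p 0$ for $j\notin\AM$, which together with $b^{(1)}_{nj} \to_p b^{*}_j \neq 0$ for $j\in\AM$ implies $P(\AM \subseteq \AM^{(1)}_n) \to 1$.

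To complete part~(1) we must upgrade ``$b^{(1)}_{nj} \to_p 0$'' to ``$b^{(1)}_{nj} = 0$ with probability tending to $1$'' for $j\notin\AM$. This is the standard KKT/subgradient argument: on the event $\{b^{(1)}_{nj}\neq0\}$, optimality of the convex objective forces $|2\x_j^T(\y - \X\b_n^{(1)})| = w_{nj}$; writing $\y - \X\b_n^{(1)} = \epsi - \X(\b_n^{(1)} - \b^{*})$ and using $\x_j^T\epsi = O_p(\sqrt n)$ together with $\x_j^T\X(\b_n^{(1)} - \b^{*}) = O_p(\sqrt n)$ (from $\X^T\X/n \to \C$ and $\sqrt n(\b_n^{(1)} - \b^{*}) = O_p(1)$, just established), the left-hand side is $O_p(\sqrt n)$, whereas $w_{nj}/\sqrt n \to_p \infty$; hence $P(b^{(1)}_{nj}\neq0) \to 0$, and a union bound over the finitely many $j\notin\AM$ finishes the claim. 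The main obstacle is purely the weight asymptotics, especially the $j\notin\AM$ regime where $b^{(0)}_j$ is random of exact order $n^{-1/2}$: one needs $\alpha_n \asymp \sqrt n$ to keep $\alpha_n|b^{(0)}_j| = O_p(1)$ while $\eta_n/\sqrt n \to \infty$ makes the weight diverge faster than $\sqrt n$, and it is precisely the compatibility $\sqrt n \ll \eta_n \ll n^{3/4}$ of the two rate hypotheses that makes both the $\AM$ and $\AM^c$ estimates work at once. Everything else is a transcription of the convexity-lemma and KKT machinery, and the randomness of the weights inherited from $\b^{(0)}$ is harmless because all the limits invoked are in probability.
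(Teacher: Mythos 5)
Your proposal is correct and follows essentially the same route as the paper: the same reparametrization $\b=\b^*+\u/\sqrt n$, the same decomposition of $G_n(\u)-G_n(0)$, and the same case analysis of the weight asymptotics $w_{nj}/\sqrt n\to_p 0$ for $j\in\AM$ (via $\eta_n/n^{3/4}\to 0$ and $\alpha_n/\sqrt n\to c_1$) versus $w_{nj}/\sqrt n\to_p\infty$ for $j\notin\AM$ (via $\sqrt n\,b_j^{(0)}=O_p(1)$ and $\eta_n/\sqrt n\to\infty$). The only difference is that you explicitly carry out the epi-convergence/convexity-lemma step and the KKT argument for selection consistency, which the paper omits by deferring to Zou (2006) and Zou and Li (2008).
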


As we mentioned earlier,  $2\alpha=1/\epsilon^2$ and $2 \eta=\lambda/\epsilon$ in the iteratively reweighted $\ell_1$ method of \cite{Daubechies:2010}.
In this case, we make the assumption that $ \lambda_n/ n^{1/4} \rightarrow \infty$, $ \lambda_n/\sqrt{n} \rightarrow 0$,  $\epsilon^{2} = O(n^{-1/2})$  (or $\epsilon^{-2} n^{-1/2} \rightarrow c_1 \in (0, \infty)$).
Then the resulting estimators have the oracle properties. Recall that~\cite{Daubechies:2010} set $\epsilon^{(t{+}1)} = \min(\epsilon^{(t)}, r_k(\b^{(t)})/p)$. This makes it sense that  $\epsilon^{-2} n^{-1/2} \rightarrow c_1$.

Let us return to the sparse estimator based on the penalty function $\Psi(|b|;\eta, \alpha)$ itself. That is,
\begin{equation} \label{eqn:sqre}
\tilde{\b}_n=\argmin_{\b}  \;  \|\y {-} \X \b\|_2^2 +   \frac{\eta_n} {\alpha_n} \sum_{j=1}^p   \Big[\sqrt{1{+} 2\alpha_n |b_j|} {-}1\Big].
\end{equation}

%For this estimator, we have the following property.
\begin{theorem}  \label{thm:oracle2} Let $\tilde{\b}_{n1}=\{\tilde{b}_{nj}: j \in \AM\}$  and $\tilde{ {\cal A}}_n=\{j: \tilde{b}_{nj} \neq 0\}$. Suppose that
$\eta_n/n^{3/4}  \rightarrow 0$, $\eta_n/\sqrt{n} \rightarrow \infty$, and  $\alpha_n/\sqrt{n} \rightarrow c_{3} \in (0, \infty)$.
Then $\tilde{\b}_n$ satisfies the following properties:
\begin{enumerate}
\item[\emph{(1)}]  Consistency in variable selection: \[ \lim_{n \rightarrow \infty} P( \tilde{{\cal A}}_n={\cal A})=1. \]
\item[\emph{(2)}]  Asymptotic normality: \[ \sqrt{n}(\tilde{\b}_{n1} - \b^{*}_{1}) \overset{d}{\longrightarrow} N(\0, \sigma^2 \C_{11}^{-1}). \]
\end{enumerate}
\end{theorem}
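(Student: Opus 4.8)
The plan is to follow the classical three-step argument of \citet{Fan01} and \citet{ZouLi:2008}. Write $Q_n(\b)=\|\y-\X\b\|_2^2+\frac{\eta_n}{\alpha_n}\sum_{j=1}^p\big(\sqrt{1{+}2\alpha_n|b_j|}-1\big)$, put $\epsi=(\epsilon_1,\ldots,\epsilon_n)^T$, and reparametrize $\b=\b^{*}+\u/\sqrt n$. \emph{Step 1 (root-$n$ consistency).} I would show that for every $\varepsilon>0$ there is a constant $C$ with $P\big(\inf_{\|\u\|=C}Q_n(\b^{*}{+}\u/\sqrt n)>Q_n(\b^{*})\big)\ge 1-\varepsilon$, which traps a local minimizer of $Q_n$ inside $\{\|\u\|\le C\}$. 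Expanding the least-squares term gives $\u^T(\X^T\X/n)\u-\tfrac{2}{\sqrt n}\u^T\X^T\epsi=\u^T\C\u\,(1{+}o_p(1))-O_p(\|\u\|)$, using $\X^T\X/n\to\C$ and $\tfrac{1}{\sqrt n}\X^T\epsi=O_p(1)$. For $j\notin\AM$ the penalty increment is nonnegative and may be discarded; for $j\in\AM$, since $b_j^{*}\ne0$ is fixed, a first-order expansion of $\sqrt{1{+}2\alpha_n|\,\cdot\,|}$ shows the increment is of order $\frac{\eta_n}{\sqrt{1+2\alpha_n|b_j^{*}|}}\cdot\frac{|u_j|}{\sqrt n}$, and $\alpha_n/\sqrt n\to c_3$ makes $\sqrt{1{+}2\alpha_n|b_j^{*}|}\sim\sqrt{2c_3|b_j^{*}|}\,n^{1/4}$, so the increment is $O(\eta_n/n^{3/4})\,\|\u\|=o(1)$ on $\|\u\|=C$ by $\eta_n/n^{3/4}\to0$. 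Hence the quadratic term dominates for $C$ large, so $\|\tilde{\b}_n-\b^{*}\|=O_p(n^{-1/2})$.

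\emph{Step 2 (sparsity).} Fix $j\notin\AM$ and suppose $\tilde{b}_{nj}\ne0$. The $j$-th partial derivative of $Q_n$ at $\tilde{\b}_n$ equals $-2[\X^T(\y-\X\tilde{\b}_n)]_j+\eta_n\,\sgn(\tilde{b}_{nj})/\sqrt{1{+}2\alpha_n|\tilde{b}_{nj}|}$. The data term is $O_p(\sqrt n)$ (from $[\X^T\epsi]_j=O_p(\sqrt n)$ and, by Step 1, $[\X^T\X(\b^{*}{-}\tilde{\b}_n)]_j=O_p(\sqrt n)$), while Step 1 forces $\alpha_n|\tilde{b}_{nj}|=O_p(\alpha_n/\sqrt n)=O_p(1)$, so the penalty term has magnitude of exact order $\eta_n$; since $\eta_n/\sqrt n\to\infty$ it dominates, pinning the sign of the derivative to $\sgn(\tilde{b}_{nj})$ and contradicting the first-order condition (equivalently, the KKT inequality $|2[\X^T(\y-\X\tilde{\b}_n)]_j|\le\eta_n$ would fail). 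Hence $P(\tilde{b}_{nj}=0\text{ for all }j\notin\AM)\to1$, and together with $\tilde{b}_{nj}\to b_j^{*}\ne0$ for $j\in\AM$ (Step 1) this gives $P(\tilde{\AM}_n=\AM)\to1$.

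\emph{Step 3 (asymptotic normality).} On the event $\tilde{\AM}_n=\AM$, let $\X_1$ be the submatrix of columns of $\X$ indexed by $\AM$; stationarity of $Q_n$ in the nonzero block reads $2\X_1^T\X_1(\tilde{\b}_{n1}-\b^{*}_{1})=2\X_1^T\epsi-g_n$ with $g_n$ having $j$-th entry $\eta_n\sgn(\tilde{b}_{nj})/\sqrt{1{+}2\alpha_n|\tilde{b}_{nj}|}$, hence $\sqrt n(\tilde{\b}_{n1}-\b^{*}_{1})=(\X_1^T\X_1/n)^{-1}\big[\tfrac{1}{\sqrt n}\X_1^T\epsi-\tfrac{1}{2\sqrt n}g_n\big]$. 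Consistency ($\tilde{b}_{nj}\to b_j^{*}\ne0$) with $\alpha_n\to\infty$ gives $\sqrt{1{+}2\alpha_n|\tilde{b}_{nj}|}\sim\sqrt{2c_3|b_j^{*}|}\,n^{1/4}$, so $\tfrac{1}{\sqrt n}g_n=O_p(\eta_n/n^{3/4})\to0$; the CLT gives $\tfrac{1}{\sqrt n}\X_1^T\epsi\overset{d}{\longrightarrow}N(\0,\sigma^2\C_{11})$ and $\X_1^T\X_1/n\to\C_{11}$, so Slutsky's theorem yields $\sqrt n(\tilde{\b}_{n1}-\b^{*}_{1})\overset{d}{\longrightarrow}N(\0,\sigma^2\C_{11}^{-1})$.

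\emph{Expected main obstacle.} Under these rate conditions $\eta_n\alpha_n\to\infty$, so the KEP penalty is genuinely nonconvex here and Step 1 delivers only a root-$n$-consistent \emph{local} minimizer; identifying the global minimizer $\tilde{\b}_n$ of the statement with it — or excluding distant competing stationary points via coercivity of $Q_n$ — is the delicate point, exactly as in \citet{Fan01}. The second, and computationally central, difficulty is managing the large-$\alpha_n$ regime: one must simultaneously use $\alpha_n|\tilde{b}_{nj}|=O_p(1)$ at the null coordinates (keeping the penalty slope there of exact order $\eta_n$, which powers the sparsity step) and $\alpha_n|\tilde{b}_{nj}|\to\infty$ at the signal coordinates (pushing the slope there down to $O(\eta_n/n^{3/4})$, which annihilates the asymptotic bias), and checking that $\eta_n/\sqrt n\to\infty$, $\eta_n/n^{3/4}\to0$ and $\alpha_n/\sqrt n\to c_3$ jointly produce this dichotomy is where the real care is required.
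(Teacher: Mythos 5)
Your proposal is correct in its essentials, and all the decisive rate computations coincide with the paper's: at null coordinates $\alpha_n|\tilde b_{nj}|=O_p(\alpha_n/\sqrt n)=O_p(1)$ keeps the penalty slope of exact order $\eta_n\gg\sqrt n$, while at signal coordinates $\alpha_n|b_j^*|\sim c_3\sqrt n\,|b_j^*|$ pushes the slope down to $O(\eta_n/n^{3/4})\to 0$; this dichotomy is precisely what the paper exploits. The architecture differs, however. The paper does not run the three-step Fan--Li program: it reparametrizes $\tilde\b_n=\b^*+\u/\sqrt n$, computes the pointwise limit of $G_n(\u)-G_n(\0)$ --- which equals $\u_1^T\C_{11}\u_1-2\u_1^T\z_1$ on the subspace $\{u_j=0,\ j\notin\AM\}$ and $+\infty$ off it, because the null-coordinate penalty increments diverge --- and then invokes epiconvergence (Geyer 1994; Knight and Fu 2000) to conclude $\hat\u_1\overset{d}{\to}\C_{11}^{-1}\z_1$ and $\hat\u_2\overset{d}{\to}\0$ in one stroke, so root-$n$ consistency and asymptotic normality are never separated and no stationarity equation for the active block is ever written. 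For exact variable selection the paper then argues exactly as in your Step~2, via the KKT identity $2\x_{\cdot l}^T(\y-\X\tilde\b_n)=\eta_n/\sqrt{1+2\alpha_n|\tilde b_{nl}|}$ whose right side, normalized by $\sqrt n$, diverges. What each route buys: the epiconvergence argument addresses the global minimizer $\tilde\b_n$ directly (at the cost of applying an epiconvergence result to a nonconvex sequence, which the paper cites rather than verifies), whereas your Fan--Li argument is more elementary and makes the bias term $\tfrac{1}{\sqrt n}g_n=O_p(\eta_n/n^{3/4})$ explicit, but --- as you correctly flag --- it only certifies the properties for a root-$n$-consistent \emph{local} minimizer, and identifying that local minimizer with the $\argmin$ in the theorem statement is the one step your outline leaves open that the paper's route formally avoids.
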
%

%It is worth pointing out that the condition in Theorem~\ref{thm:oracle2} is weaker than that in Theorem~\ref{thm:oracle1}, because
%$c_3\in (0, \infty]$ but $c_1\in (0, \infty)$.   This makes sense because Theorem~\ref{thm:oracle1} studies a one-step approximation
%of the case in Theorem~\ref{thm:oracle2}.
It is worth noting that we set $\eta_n= \frac{\lambda_n \alpha_n} {\sqrt{2 \alpha_n{+}1}-1}$ to define $\Phi(|b|; \alpha)$
in (\ref{eqn:rho}). In this setting, if we assume that $\lambda_{n}/n^{1/4} \to \infty$, $\lambda_{n}/n^{1/2} \to 0$, and
$\alpha/n^{1/2} \to c_3 \in (0, \infty)$, then we can obtain that $\eta_{n}/n^{1/2} \to \infty$, $\lambda_{n}/n^{3/4} \to 0$, and
$\alpha/n^{1/2} \to c_3 \in (0, \infty)$; that is, the conditions in Theorem~\ref{thm:oracle2} meet.
Consider that the condition  $\alpha_n/\sqrt{n} \rightarrow c_3$ implies that $\alpha_n \rightarrow \infty$. Hence, $\frac{\sqrt{1{+} 2\alpha_n |b_j|} {-}1}{\sqrt{2 \alpha_n{+}1}-1} \rightarrow |b_j|^{1/2}$. Thus, it follows from Theorem~\ref{thm:oracle2} that the $\ell_{1/2}$ penalty can also result in an estimator with the oracle property under the conditions $\lambda_{n}/n^{1/4} \to \infty$ and $\lambda_{n}/n^{1/2} \to 0$.

On the other hand, $\liml_{\alpha_n \to 0+} \frac{\sqrt{1{+} 2\alpha_n |b_j|} {-}1}{\sqrt{2 \alpha_n{+}1}-1} = \liml_{\alpha_n \to 0+} \frac{\sqrt{1{+} 2\alpha_n |b_j|} {-}1}{\alpha_n} = |b_j|$. Thus, it is of great interest to explore the asymptotic property  of the sparse estimator when $\alpha_n \to 0$.
In particular, we have the following theorem.

\begin{theorem}  \label{thm:asymptotic}
Assume
$\liml_{n \to \infty} \alpha_n =0$. If $\liml_{n \to \infty} \frac{\eta_n}{\sqrt{n}} = 2 c_3 \in [0, \infty)$,
then  $\tilde{\b}_{n} \overset{p}{\longrightarrow} \b^{*}$. Furthermore, if $\liml_{n \to \infty} \frac{\eta_n}{\sqrt{n}} =0$,
then $\sqrt{n}(\tilde{\b}_{n} {-} \b^{*})
\overset{d}{\longrightarrow} N(\0, \sigma^2 \C^{-1})$.
%\begin{enumerate}
%\item[\emph{(1)}]  If $c_3 =0$, then $\sqrt{n}(\tilde{\b}_{n} {-} \b^{*})
%\overset{d}{\longrightarrow} N(\0, \sigma^2 \C^{-1})$;
%\item[\emph{(2)}] If $c_3 \in (0, \infty]$, then $\tilde{\b}_{n} \overset{p}{\longrightarrow} \b^{*}$.
%\end{enumerate}
\end{theorem}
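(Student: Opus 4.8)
The plan is to treat the estimator $\tilde{\b}_n$ in (\ref{eqn:sqre}) as an M-estimator and analyze the limiting behaviour of the recentered, rescaled objective, exactly in the spirit of \citet{KnightFu:2000} and \citet{ZouLi:2008}. Write $\b = \b^{*} + \u/\sqrt{n}$ and define
\[
V_n(\u) = \|\y - \X(\b^{*}+\u/\sqrt{n})\|_2^2 - \|\y - \X\b^{*}\|_2^2 + \frac{\eta_n}{\alpha_n}\sum_{j=1}^p \Big[\sqrt{1+2\alpha_n|b^{*}_j + u_j/\sqrt{n}|} - \sqrt{1+2\alpha_n|b^{*}_j|}\Big],
\]
so that $\hat{\u}_n := \sqrt{n}(\tilde{\b}_n - \b^{*}) = \argmin_{\u} V_n(\u)$. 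The first (quadratic) part is standard: using $\X^T\X/n \to \C$ and the CLT $\X^T\epsi/\sqrt{n} \overset{d}{\longrightarrow} N(\0,\sigma^2\C)$, it converges in distribution (uniformly on compacts) to $\u^T\C\u - 2\u^T\W$ where $\W \sim N(\0,\sigma^2\C)$. The crux is to show that the penalty increment vanishes in probability for every fixed $\u$.

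For the penalty term, I would Taylor-expand: since the derivative of $\frac{1}{\alpha_n}\sqrt{1+2\alpha_n s}$ in $s$ is $(1+2\alpha_n s)^{-1/2} \le 1$, the $j$th summand is bounded in absolute value by $\frac{\eta_n}{\sqrt n}|u_j|$, so the whole penalty increment is $O_P(\eta_n/\sqrt n)$. Under $\eta_n/\sqrt n \to 2c_3 \in [0,\infty)$ this term stays bounded; to get the precise limit I would use that $\alpha_n \to 0$ forces $(1+2\alpha_n|b^{*}_j+t/\sqrt n|)^{-1/2} \to 1$ uniformly for $t$ in a compact set, so a mean-value argument gives penalty increment $= \frac{\eta_n}{\sqrt n}\sum_j \big(|b^{*}_j + u_j/\sqrt n| - |b^{*}_j|\big)\cdot(1+o(1)) \to 0$ because each bracket is itself $O(|u_j|/\sqrt n) \to 0$. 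Hence $V_n(\u) \overset{d}{\longrightarrow} \u^T\C\u - 2\u^T\W$ pointwise. Since $V_n$ is convex in $\u$ (the quadratic part is convex, and $|b^{*}_j + u_j/\sqrt n|$ is convex while $s \mapsto \sqrt{1+2\alpha_n s}$ is concave and nondecreasing — wait, that composition is concave, so I must instead bound the penalty increment above and below by its Lipschitz envelope rather than rely on convexity of the full objective), I would invoke the convexity lemma of \citet{KnightFu:2000} / Pollard: pointwise convergence of convex processes (here, the convex majorant/minorant sandwiching $V_n$, which differ by $o_P(1)$) to a process with a unique minimizer implies $\argmin V_n \overset{d}{\longrightarrow} \argmin(\u^T\C\u - 2\u^T\W) = \C^{-1}\W$. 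This yields $\sqrt n(\tilde{\b}_n - \b^{*}) \overset{d}{\longrightarrow} N(\0,\sigma^2\C^{-1})$ when $\eta_n/\sqrt n \to 0$, and $\tilde{\b}_n \overset{p}{\longrightarrow}\b^{*}$ in the boundary case $c_3 > 0$ (where the limiting objective is $\u^T\C\u - 2\u^T\W$ plus a bounded perturbation, which still forces $\hat\u_n = O_P(1)$, i.e.\ $\sqrt n$-consistency, hence in particular consistency).

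The main obstacle is the sign/convexity bookkeeping in the penalty term: because $s \mapsto \frac{\eta_n}{\alpha_n}\sqrt{1+2\alpha_n s}$ is concave rather than convex, the objective $V_n$ is not globally convex, so I cannot directly apply the convex-process argument to $V_n$ itself. The fix is to sandwich: for $\u$ in a fixed ball, $\big|\text{penalty increment} - \frac{\eta_n}{\sqrt n}\sum_j(|b^{*}_j+u_j/\sqrt n| - |b^{*}_j|)\big| \le \frac{\eta_n}{\sqrt n}\sum_j|u_j|\cdot\big(1 - (1+2\alpha_n R/\sqrt n)^{-1/2}\big) = o_P(1)$ uniformly, where $R$ bounds the ball; the replacement term $\frac{\eta_n}{\sqrt n}\sum_j(|b^{*}_j+u_j/\sqrt n|-|b^{*}_j|)$ is convex in $\u$ and tends to $0$ pointwise under both hypotheses on $\eta_n/\sqrt n$ (it is $O(\eta_n/n)$ since each increment is $O(|u_j|/\sqrt n)$). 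Thus $V_n$ equals a convex process converging to $\u^T\C\u - 2\u^T\W$ plus a uniformly-$o_P(1)$ remainder, which is exactly the setting where the Knight--Fu argument delivers convergence of the argmin; the remaining steps are the routine verifications of the CLT and the quadratic limit already used in the proof of Theorem~\ref{thm:oracle2}.
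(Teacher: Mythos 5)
Your overall strategy --- reparametrize as $\u=\sqrt{n}(\b-\b^{*})$, show the quadratic part converges to $\u^T\C\u-2\u^T\z$ with $\z\sim N(\0,\sigma^2\C)$, handle the penalty increment separately, and pass to the argmin --- is exactly the route the paper takes in its Appendix~D, and your observation that the objective is not convex (so that the convex-process/epi-convergence step should be justified by sandwiching with the Lasso-type increment) is a legitimate refinement that the paper glosses over. However, there is a concrete error in your computation of the penalty increment. Rationalizing,
\[
\frac{\eta_n}{\alpha_n}\Big[\sqrt{1+2\alpha_n|b^{*}_j+u_j/\sqrt{n}|}-\sqrt{1+2\alpha_n|b^{*}_j|}\Big]
=\frac{2\eta_n\big(|b^{*}_j+u_j/\sqrt{n}|-|b^{*}_j|\big)}{\sqrt{1+2\alpha_n|b^{*}_j+u_j/\sqrt{n}|}+\sqrt{1+2\alpha_n|b^{*}_j|}},
\]
whose denominator tends to $2$ as $\alpha_n\to 0$; so the $j$th increment equals $\eta_n\big(|b^{*}_j+u_j/\sqrt{n}|-|b^{*}_j|\big)(1+o(1))$, \emph{not} $\frac{\eta_n}{\sqrt{n}}\big(|b^{*}_j+u_j/\sqrt{n}|-|b^{*}_j|\big)(1+o(1))$ as you wrote: you have a spurious extra factor of $n^{-1/2}$. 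Since $|b^{*}_j+u_j/\sqrt{n}|-|b^{*}_j|$ is itself of order $n^{-1/2}$, the increment is $O(\eta_n/\sqrt{n})$ (consistent with your own Lipschitz bound, but not with your later claim that it is $O(\eta_n/n)$) and converges to $2c_3 u_j\sgn(b^{*}_j)$ when $b^{*}_j\neq 0$ and to $2c_3|u_j|$ when $b^{*}_j=0$; it does \emph{not} vanish when $c_3>0$. If your stated limit were correct, it would prove centered asymptotic normality for every $c_3\in[0,\infty)$, which is false: the paper shows the limiting objective for $c_3>0$ is $\u^T\C\u-2\u^T\z+2c_3\sum_{j\in\AM}u_j\sgn(b^{*}_j)+2c_3\sum_{j\notin\AM}|u_j|$, whose minimizer is $N(-c_3\C^{-1}\s,\sigma^2\C^{-1})$ --- asymptotically biased, which is precisely why only consistency is claimed in that case.

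That said, your final conclusions survive because you do not actually lean on the erroneous limit where it matters: for $c_3=0$ the increment genuinely is $o(1)$ and normality follows, and for $c_3>0$ your fallback argument (the penalty perturbation is Lipschitz in $\u$ with an $O(1)$ constant, so the quadratic dominates, $\hat{\u}_n=O_P(1)$, and hence $\tilde{\b}_n\overset{p}{\longrightarrow}\b^{*}$) is sound and in fact matches the paper's Chebyshev step showing $|\hat{u}_j|/\sqrt{n}\overset{p}{\longrightarrow}0$. To repair the write-up, replace the ``precise limit'' display by the correct one above, identify the nonzero limit of the convex replacement term $\eta_n\sum_j\big(|b^{*}_j+u_j/\sqrt{n}|-|b^{*}_j|\big)$ when $c_3>0$, and your sandwich argument then delivers exactly the paper's two limiting objectives.
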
%

Recall that the conditions $\eta_n/\sqrt{n} \rightarrow \infty$ and  $\alpha_n/\sqrt{n} \rightarrow c_{3} \in (0, \infty)$ in Theorem~\ref{thm:oracle2} imply that $\eta_n \rightarrow \infty$ and $\alpha_n \rightarrow \infty$.
Consequently, $\eta_n> \frac{1}{\alpha_n}$ when $n$ is sufficiently large. It then follows from Theorem~\ref{thm:sparsty} that the thresholding
rule is discontinuous under these conditions. This leads us to an interesting phenomenon; that is, the
``oracle properties" do not always accompany ``continuity." Our following empirical analysis shows that ``continuity" is indeed very necessary
for the coordinate descent algorithm to achieve good performance.
However, the conditions for $\alpha_n$ and $\eta_n$ in Theorem~\ref{thm:asymptotic} are always able to hold $\eta_n< \frac{1}{\alpha_n}$.
For example, we take $\alpha_n=n^{-\frac{\epsilon_1+1}{2}}$ and $\eta_n=n^{\frac{1+\epsilon_2}{2}}$ for any $0\leq \epsilon_2 < \epsilon_1$, which
holds $\eta_n \alpha_n<1$ true.

In the previous discussion, $p$ is fixed. We are also interested in the asymptotic properties when $r$   and $p$ rely on $n$. That is, $r=r_n$ and $p=p_n$ are allowed to grow as $n$ increases.
In this case, we are concerned with  notion of sign consistency of the estimate $\tilde{\b}_n$ with the true  $\b^{*}$. In particular,
it is said that $\tilde{\b}_n$ is equal  to $\b^{*}$ in sign, which is written as $\tilde{\b}_n \overset{s}{=} \b^{*}$, if and only if
$\sgn(\tilde{\b}_n)= \sgn(\b^{*})$.

In order to address sign consistency, we consider a so-called  strong irrepresentable condition~\citep{ZhaoYu:2006}.
Assume that $\C_{11}$ is invertible. The strong  irrepresentable condition is that there a positive constant
number $\gamma$ such that
\[
\|\C_{21}\C_{11}^{-1} \sgn(\b^{*}_{1})\|_{\infty} \leq 1- \gamma.
\]
%\begin{definition} There exists a positive constant  vector $\gamb$
%\end{definition}
Following the setting of \citet{ZhaoYu:2006},
we further make the following assumptions  on
$\C_n$, $r_n$ and $\b^{*}$. Specifically, there exist $M_1>0$,  $M_2>0$ and $M_3>0$ as well as $0\leq c_4 < c_5\leq 1$ such that
\begin{eqnarray}
\frac{1}{n} \x_i^T \x_i &\leq & M_1 \quad \mbox{ for  } i=1, \ldots, n, \label{ass:3} \\
\a^T \sum_{i=1}^r \x_i \x_i^T \a &\geq &  M_2  \quad \mbox{ for any } \|\a\|_2=1. \label{ass:4} \\
r_n &=& O(n^{c_4}),  \label{ass:5} \\
n^{\frac{1-c_5}{2}} \min_{j=1, \cdots, r} |b_{j}^{*}| &\geq & M_3. \label{ass:6}
\end{eqnarray}

The detailed interpretation for these conditions can be also found in \citet{ZhaoYu:2006}. Roughly speaking, Conditions~\ref{ass:3}
and \ref{ass:4} are alternative as the previous assumption on $\frac{1}{n}\X^T \X$ when $p_n$ and $r_n$ are fixed.
Condition~\ref{ass:5} implies that $ r_n/n \to 0$, while Condition~\ref{ass:6} shows that there exists a gap of size $n^{c_5}$ between
the decay rate of $\b_1^{*}$ and $n^{-\frac{1}{2}}$.

\begin{theorem}  \label{thm:asymptotic2}
Assume $\eta_n<1/\alpha_n$,
$\liml_{n \to \infty} \alpha_n =0$ and  Conditions~(\ref{ass:3})-(\ref{ass:6}) are satisfied. Under the strong irrepresentable condition,
if $p_n = O(e^{n^{c_6}})$ and $\eta_n \varpropto n^{\frac{1+c_7}{2}}$ where $c_6< c_7 < c_5- c_4$,
then
\[
\Pr(\tilde{\b}_n \overset{s}{=} \b^{*})\geq 1- o(e^{-n^{c_6}}) \to 1 \quad \mbox{ as } \quad n \to \infty.
\]
\end{theorem}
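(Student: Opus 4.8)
### Proof Plan for Theorem~\ref{thm:asymptotic2}

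The plan is to adapt the sign-consistency argument of \citet{ZhaoYu:2006} for the Lasso to the KEP-penalized estimator, exploiting the fact that under the hypotheses $\eta_n < 1/\alpha_n$ and $\alpha_n \to 0$ the KEP penalty $\Psi(|b|;\eta_n,\alpha_n)$ is a small, smooth perturbation of the $\ell_1$ penalty $\eta_n \|\b\|_1$. First I would write down the Karush--Kuhn--Tucker conditions characterizing $\tilde{\b}_n$: on the support $\AM$ one has the stationarity equation $\X_{\AM}^T(\y - \X\tilde{\b}_n) = \tfrac12 \eta_n\, \dg\!\big((1+2\alpha_n|\tilde b_{nj}|)^{-1/2}\big)\sgn(\tilde\b_{n1})$, and off the support the subgradient inequality $|\x_j^T(\y-\X\tilde{\b}_n)| \le \tfrac12 \eta_n$ for $j\notin\AM$. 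Because $J_1$ (hence the full objective) is strictly convex when $\eta_n\alpha_n<1$, the minimizer is unique, so verifying that a candidate vector with $\sgn = \sgn(\b^*)$ satisfies these KKT conditions suffices to prove $\tilde{\b}_n \overset{s}{=} \b^*$ on that event.

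Next I would follow \citet{ZhaoYu:2006} and decompose the event $\{\tilde{\b}_n \overset{s}{=}\b^*\}$ into two pieces: (a) the ``correct sign on $\AM$'' event, where the restricted estimator $\tilde\b_{n1}$ has the same sign as $\b^*_1$ and is bounded away from $0$ — controlled using Condition~(\ref{ass:6}), which forces $\min_{j\le r}|b^*_j| \gtrsim n^{-(1-c_5)/2}$, together with the bound $\|\tilde\b_{n1}-\b^*_1\|_\infty = O_P(\sqrt{r_n}\,\eta_n/n)$ coming from Conditions~(\ref{ass:3})--(\ref{ass:5}) and sub-Gaussian tail bounds on the Gaussian-like noise terms $\x_i^T\vps$; and (b) the ``no false positives'' event, where the off-support subgradient condition holds, which is where the strong irrepresentable condition $\|\C_{21}\C_{11}^{-1}\sgn(\b^*_1)\|_\infty \le 1-\gamma$ enters. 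The key structural observation is that the KEP reweighting factors $(1+2\alpha_n|\tilde b_{nj}|)^{-1/2}$ all lie in $(0,1]$ and, since $\alpha_n\to 0$ and the relevant $|\tilde b_{nj}|$ are $O_P(1)$, they converge uniformly to $1$; consequently the effective penalty vector on $\AM$ is $\tfrac12\eta_n\sgn(\b^*_1) + o_P(\eta_n)$, so the irrepresentable-condition computation reduces to the Lasso case up to an $o(1)$ slack that is absorbed by $\gamma$ for $n$ large.

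For the tail bounds I would mirror \citet{ZhaoYu:2006}: each of the two bad events is a union over coordinates of events of the form $\{|\text{Gaussian linear combination}| > t_n\}$ with variance $O(1/n)$-scaled sums controlled by (\ref{ass:3}),(\ref{ass:4}), so a Gaussian (or sub-Gaussian, via the noise assumption) tail bound gives probability $\le \exp(-C n t_n^2)$ per coordinate. Choosing $\eta_n \propto n^{(1+c_7)/2}$ makes the threshold scale $t_n \asymp n^{-(1-c_7)/2}$ on the relevant side, so each term is $\le \exp(-C n^{c_7})$; summing over the $p_n = O(e^{n^{c_6}})$ off-support coordinates and $r_n = O(n^{c_4})$ on-support coordinates gives total error $p_n \exp(-Cn^{c_7}) = o(e^{-n^{c_6}})$ precisely because $c_6 < c_7$, and the on-support/magnitude requirement needs $c_7 < c_5 - c_4$ so that $\eta_n\sqrt{r_n}/n \ll n^{-(1-c_5)/2}$. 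I expect the main obstacle to be handling the dependence of the KEP weights on the random estimator $\tilde\b_n$ itself: unlike the fixed-weight Lasso, the subgradient condition involves $(1+2\alpha_n|\tilde b_{nj}|)^{-1/2}$ with $\tilde\b_n$ appearing on both sides, so I would need a preliminary step establishing that $\|\tilde\b_n\|_\infty = O_P(1)$ (or a suitable high-probability bound) — e.g.\ by a crude consistency argument as in Theorem~\ref{thm:asymptotic}'s first part using $\eta_n\alpha_n \to 0$ — before the weights can be pinned near $1$ and the argument closed. The rest is a careful but routine bookkeeping of exponents to confirm the three strict inequalities $c_6 < c_7 < c_5 - c_4$ deliver all required rate comparisons simultaneously.
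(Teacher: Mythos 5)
Your proposal is correct in outline but takes a genuinely different route from the paper. The paper's own argument is a short reduction: it writes the stationarity condition $0 \in (\X\tilde{\b}_n - \y)^T\x_{\cdot j} + \frac{\eta_n}{\sqrt{1+2\alpha_n|\tilde{b}_{nj}|}}\,\partial|\tilde{b}_{nj}|$, observes that under $\alpha_n \to 0$ the weight factors tend to $1$ so this tends to the Lasso optimality condition, concludes that $\tilde{\b}_n$ agrees with the Lasso solution $\hat{\b}_0$ in the limit, and then simply invokes Theorem~4 of Zhao and Yu (2006) for the Lasso. You instead rerun the Zhao--Yu primal--dual witness argument directly on the KEP estimator: KKT conditions, the two-event decomposition (correct signs on $\AM$, no false positives off $\AM$), the irrepresentable condition with an $o(1)$ slack absorbed by $\gamma$, Gaussian tail bounds, and the exponent bookkeeping for $c_6 < c_7 < c_5 - c_4$. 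Your route is more work but arguably more honest: sign consistency at rate $1-o(e^{-n^{c_6}})$ is a finite-sample probabilistic statement, and the paper's passage from ``the limits of the KKT conditions agree'' to ``the Lasso's probability bound transfers to KEP'' is precisely the uniformity claim that your perturbation analysis would actually substantiate. The one genuine gap you flag yourself --- needing a high-probability a priori bound on $\|\tilde{\b}_n\|_\infty$ before the data-dependent weights $(1+2\alpha_n|\tilde{b}_{nj}|)^{-1/2}$ can be pinned near $1$ --- is real and is not addressed by the paper either; a preliminary crude consistency bound, or working on the event where the witness construction succeeds (on which the candidate is explicitly bounded), is needed to close it. One small correction: the objective in (\ref{eqn:sqre}) uses $\|\y-\X\b\|_2^2$ without the factor $\tfrac12$, so your thresholds $\tfrac12\eta_n$ reflect the $\tfrac12\|\cdot\|_2^2$ convention and the constants should be adjusted accordingly.
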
%

This theorem is similar to Theorem~4 of \citet{ZhaoYu:2006}. Consider that $\tilde{\b}_n$
is the solution of the problem in (\ref{eqn:sqre}). Thus,
\[
0 \in  (\X \tilde{\b}_n {-} \y)^T \x_{\cdot j} + \frac{\eta_n}{\sqrt{1{+} 2\alpha_n |\tilde{b}_{nj}|}}  \partial |\tilde{b}_{nj}|, \quad j=1, \ldots, p.  \]
Under the condition $\alpha_n\to 0$, we have
\[
0 \in \lim_{n \to \infty} \Big\{ (\X \tilde{\b}_n {-} \y)^T \x_{\cdot j} + \frac{\eta_n}{\sqrt{1 {+} 2\alpha_n |\tilde{b}_{nj}|}}  \partial |\tilde{b}_{nj}| \Big\}
= \lim_{n \to \infty} \Big\{ (\X \tilde{\b}_n {-} \y)^T \x_{\cdot j} + {\eta_n}  \partial |\tilde{b}_{nj}| \Big\}
\]
for $ j=1, \ldots, p$. Since the minimizer of the conventional lasso exists and unique (denote  $\hat{\b}_{0}$), the above
relationship implies that $\liml_{n \to \infty} \tilde{\b}_n=\liml_{n \to \infty} \hat{\b}_{0}$. Accordingly,
based on Theorem~4 of \citet{ZhaoYu:2006}, we obtain the result in Theorem~\ref{thm:asymptotic2}.

%Recently, \citet{ZhangZhang2012} presented a general theory of nonconvex regularization for sparse learning problems.
%Their work is built on the following four conditions on the penalty function $P(b; \lambda)$: (i) $P(0; \lambda)=0$; (ii)  $P(-b; \lambda)=P(b; \lambda)$;
%(iii) $P(b; \lambda)$ is increasing in $b$ on $[0, \infty)$; (iv) $P(b; \lambda)$ is subadditive w.r.t.\
%$b\geq 0$, i.e., $P(u+v; \lambda) \leq P(u; \lambda)+ P(v; \lambda)$ for any $u\geq 0$ and $v\geq 0$.
%It is easily seen  that the KEP function $\Psi(|b|; \eta, \alpha)$ as a function of $b$ satisfies these four conditions.
%Thus, we can directly apply the theoretical analysis of \citet{ZhangZhang2012}  to our special case.
%Here we omit the details.

\section{Experimental Analysis} \label{sec:experiment}

In  Section~\ref{sec:simul} we  conduct a simulation analysis of KEP in sparsity modeling. This analysis is based on
Theorem~\ref{thm:asymptotic}. In Sections~\ref{sec:simreg} and \ref{sec:reg_real} we evaluate
the  performance of the KEP-based coordinate descent algorithm given in Algorithm~\ref{alg:coord} in linear regression problems on  simulated data and real data, respectively.
We also conduct comparisons with the coordinate descent algorithms based on the $\ell_{1}$-norm,
$\ell_{1/2}$-norm and MCP, respectively.

\subsection{Simulation Analysis}
\label{sec:simul}

%for i=1:5
%    b(i) = 0.2*i;
%    b(p/2+i) = b(i);
%end

In this simulation analysis, we use a data model same to that in \cite{MazumderSparsenet:11}.  In particular, we  generate  data
from the following model:
\begin{equation*}
 y = \x^T \b + \sigma e
\end{equation*}
where $e \sim N(0, 1)$, and $\b$ is a $p$-dimensional vector with
only 10 nonzero elements: $b_i =  b_{i+ {p}/{2}} = 0.2 i, \; i=1, \ldots,
5$.
Each data point $\x$
is sampled from a multivariate normal distribution with zero mean and
covariance matrix $\Sigma = \{0.7^{|i-j|}\}_{1 \leq i, j \leq p}$.
We choose $\sigma$ such that the Signal-to-Noise Ratio (SNR), which is
\begin{equation*}
 \mathrm{SNR} = \frac{\sqrt{\b^T \Sigma \b}}{\sigma},
\end{equation*}
is a specified value.
Let $\hat{\b}$
denote the solution obtained from each algorithm.  We use a standardized
prediction error (SPE) and a feature selection error (FSE)  as measure metrics. The SPE  is defined as
\begin{equation*}
 \textrm{SPE} = \frac{\sum_{i=1}^{m} (y_i - \x_i^T \hat{\b})^2}{m \sigma^2}
\end{equation*}
and the FSE is proportion of coefficients in
$\hat{\b}$ which is wrongly set to zero or nonzero based on the true $\b$.

%\subsubsection{Simulation Analysis with Prespecified $\eta_n$ and $\alpha_n$}

In this simulation we prespecify the values of hyperparameters $\eta_n$ and $\alpha_n$.
Based on Theorem~\ref{thm:asymptotic}, we particularly set $\eta_n=n^{1/4}$ and $\alpha_n = n^{-1/2}$. Clearly,
in this setting we always have $\eta_n \alpha_n <1$.
We also implement the MCP-based coordinate descent method with the same setting, and the lasso-based coordinate descent method with
$\lambda_n=\eta_n=n^{1/4}$. Our simulation analysis is performed on the training datasets with different sizes ($n$) and a fixed  $p$ (that is, $p=200$). But all the corresponding test datasets  include $m=1000$ samples.

We use  different settings of  $n$ and {SNR} to generate the training datasets.
Tables~\ref{tab:sim_01}-\ref{tab:sim_04} report the results over 20 repeats for each setting.
We can see that when $n$ takes a smaller value, the performance of the KEP penalty is significantly better than that of  MEP
and of the $\ell_1$-norm. As $n$ takes a larger value, the performances of all the three penalties become better. Especially,
the KEP and MCP are both competitive. Moreover, the three penalties can almost fully capture the model sparsity
for a large $n$. Thus, in this case, $\eta_n=n^{1/4}$ and $\alpha_n = n^{-1/2}$ are  good choices for KEP and MCP.

Additionally, for a larger SNR, the performances of the MCP and $\ell_1$-norm
become worse. In contrast, the KEP penalty still works well. This shows that  KEP
is more robust  than the MCP and $\ell_1$-norm.
Finally, Figure~\ref{fig:sim00}
depicts the convergence procedure of the coordinate descent algorithm. As we see,
the algorithms with the KEP, MCP and $\ell_{1}$-norm are efficient, because they get convergence after about 10 steps.

%SNR = 3									
%N	100	200	400	800	1600	3200	6400	12800	
%L_half_4	1.756958323	1.138794543	1.063545658	1.066754537	1.077584476	1.02623292	1.026820561	1.019789966	
%KEP_4	2.249577068	1.353418917	1.072573975	1.08967762	1.080853263	1.045295882	1.04642664	1.022688775	
%Lasso_5	2.826031056	1.788626378	1.527591944	1.54888931	1.389433314	1.365320858	1.330634877	1.266912766	
%MCP_5	2.309856531	1.397226976	1.195996137	1.288651057	1.125991161	1.051634489	1.046015805	1.03045927	
%KEP_5	1.978868918	1.2101729	1.09784921	1.098474987	1.06548844	1.046844904	1.045629066	1.030893263	

\begin{table}[!ht] \setlength{\tabcolsep}{1.7pt}
\caption{Simulation results on datasets with $p=200$ and $SNR=3.0$ under the setting $\eta_n=n^{1/4}$ and $\alpha_n = n^{-1/2}$ for MCP and KEP, and $\lambda_n=n^{1/4}$ for Lasso.}
\label{tab:sim_01}
\begin{center}
\begin{tabular}{l|lr|lr|lr|lr|lr|lr} \hline
        & \multicolumn{2}{|c|}{n=100}  & \multicolumn{2}{|c|}{n=200} & \multicolumn{2}{|c|}{n=400} & \multicolumn{2}{|c|}{n=1600} & \multicolumn{2}{|c|}{n=6400} & \multicolumn{2}{|c}{n=12800} \\
        & SPE & ``FSE''  & SPE & ``FSE'' & SPE & ``FSE'' & SPE & ``FSE'' & SPE & ``FSE'' & SPE & ``FSE''   \\
\hline
 KEP & \textbf{1.979} &	0.020 & \textbf{1.210} & 0.010 & \textbf{1.098} & 0.010 & \textbf{1.065} &	0.000 & \textbf{1.045} & 0.000 & 1.031  & 0.000   \\
 MCP     & 2.310 &	0.040     & 1.397  &	0.020   & 1.196 & 0.010 & 1.126 &	0.005  & 1.046 & 0.000	& \textbf{1.030}  & 0.000  \\
 Lasso  & 2.826 & 0.020      & 1.789  & 0.010     & 1.528 & 0.010  & 1.389 &	0.005  & 1.331 & 0.000 &  1.267 & 0.000 \\
\hline
\end{tabular}
\end{center}
\end{table}

%									
%SNR = 6									
%N	100	200	400	800	1600	3200	6400	12800	
%Lasso_5	6.042015841	4.863494633	3.013342053	3.297439078	2.505695959	2.399484431	2.231194308	1.948002338	
%L_half_4	1.911823752	1.315475449	1.183979193	1.165374752	1.154340774	1.108868485	1.029444207	1.022118385	
%MCP_5	3.7791786	2.594354951	2.415029794	1.243038072	1.21289886	1.142470945	1.084214435	1.062318462	
%KEP_4	2.511234301	2.275796735	1.225793276	1.195882853	1.164821074	1.159487817	1.15273516	1.037161036	
%KEP_5	2.101680124	1.679569171	1.310774834	1.311789446	1.191409132	1.129514513	1.090541266	1.060771023	
		
\begin{table}[!ht] \setlength{\tabcolsep}{1.7pt}
\caption{Simulation results on datasets with $p=200$ and $SNR=6.0$ under the setting $\eta_n=n^{1/4}$ and $\alpha_n = n^{-1/2}$ for MCP and KEP, and $\lambda_n=n^{1/4}$ for Lasso.}
\label{tab:sim_02}
\begin{center}
\begin{tabular}{l|lr|lr|lr|lr|lr|lr} \hline
        & \multicolumn{2}{|c|}{n=100}  & \multicolumn{2}{|c|}{n=200} & \multicolumn{2}{|c|}{n=400} & \multicolumn{2}{|c|}{n=1600} & \multicolumn{2}{|c}{n=6400} & \multicolumn{2}{|c}{n=12800} \\
        & SPE & ``FSE''  & SPE & ``FSE'' & SPE & ``FSE'' & SPE & ``FSE'' & SPE & ``FSE'' & SPE & ``FSE'' \\
\hline
 KEP & \textbf{2.102} &	0.010 & \textbf{1.679} & 0.000 & \textbf{1.311} &	0.005 & {1.191} & 0.000 & 1.090 & 0.000 & \textbf{1.061}  & 0.000 \\
 MCP     & 3.779 &	0.020     & 2.594  &	0.020      & 2.415 &	0.005  & 1.213	& 0.010 & \textbf{1.084} & 0.000 & {1.062} & 0.000    \\
 Lasso  & 6.042 & 0.010      & 4.863  & 0.010        & 3.013 &	0.010  & 2.506 &   0.010 & 2.231 &	0.005 & 1.948 &	0.000 \\
\hline
\end{tabular}
\end{center}
\end{table}

\begin{table}[!ht] \setlength{\tabcolsep}{1.7pt}
\caption{Simulation results on datasets with $p=200$ and $SNR=9.0$ under the setting $\eta_n=n^{1/4}$ and $\alpha_n = n^{-1/2}$ for MCP and KEP, and $\lambda_n=n^{1/4}$ for Lasso.}
\label{tab:sim_03}
\begin{center}
\begin{tabular}{l|lr|lr|lr|lr|lr|lr} \hline
        & \multicolumn{2}{|c|}{n=100}  & \multicolumn{2}{|c|}{n=200} & \multicolumn{2}{|c|}{n=400} & \multicolumn{2}{|c|}{n=1600} & \multicolumn{2}{|c}{n=6400} & \multicolumn{2}{|c}{n=12800} \\
        & SPE & ``FSE''  & SPE & ``FSE'' & SPE & ``FSE'' & SPE & ``FSE'' & SPE & ``FSE'' & SPE & ``FSE'' \\
\hline
 KEP & \textbf{2.892} &	0.010 & \textbf{2.379} & 0.020 & \textbf{1.675} &	0.005 & \textbf{1.327} & 0.000 &{1.201} & 0.000 & {1.119}  & 0.000 \\
 MCP     & 6.564 &	0.025     & 5.123  &	0.030      & 2.669 &	0.001  & 1.405	& 0.005 &  \textbf{1.186} & 0.000 & \textbf{1.116} & 0.000    \\
 Lasso  & 9.472 & 0.050      & 8.479  & 0.010        & 6.404 &	0.005  & 4.444 &  0.010  & 3.760 &	 0.010 & 3.143 &	0.000 \\
\hline
\end{tabular}
\end{center}
\end{table}
							
%SNR = 9									
%N	100	200	400	800	1600	3200	6400	12800	
%Lasso_5	9.471947473	8.479311853	6.4042225	5.266627945	4.444407178	3.998399393	3.760046498	3.142567581	
%L_half_4	3.525718754	1.649282151	1.340064815	1.357233522	1.326280224	1.066950147	1.051286642	1.024576383	
%MCP_5	6.563625064	5.122586227	2.668686339	1.933058956	1.405452072	1.212514478	1.185883688	1.115587517	
%KEP_4	3.708106736	2.012124808	1.419627304	1.416901825	1.35499037	1.329088552	1.333235317	1.052098266	
%KEP_5	2.892241173	2.378746359	1.675189555	1.508554847	1.327252321	1.219594867	1.201391651	1.119532325	
%									

\begin{table}[!ht] \setlength{\tabcolsep}{1.7pt}
\caption{Simulation results on datasets with $p=200$ and $SNR=12.0$ under the setting $\eta_n=n^{1/4}$ and $\alpha_n = n^{-1/2}$ for MCP and KEP, and $\lambda_n=n^{1/4}$ for Lasso.}
\label{tab:sim_04}
\begin{center}
\begin{tabular}{l|lr|lr|lr|lr|lr|lr} \hline
        & \multicolumn{2}{|c|}{n=100}  & \multicolumn{2}{|c|}{n=200} & \multicolumn{2}{|c|}{n=400} & \multicolumn{2}{|c|}{n=1600} & \multicolumn{2}{|c}{n=6400} & \multicolumn{2}{|c}{n=12800} \\
        & SPE & ``FSE''  & SPE & ``FSE'' & SPE & ``FSE'' & SPE & ``FSE'' & SPE & ``FSE'' & SPE & ``FSE'' \\
\hline
 KEP & \textbf{3.972} &	0.010 & \textbf{3.381} & 0.010 & \textbf{2.138} &	0.005 & \textbf{1.748} & 0.000 &{1.334} & 0.000 & {1.233}  & 0.000 \\
 MCP     & 5.616 &	0.020     & 7.309  &	0.020      & 6.341 &	0.020  & 1.898	& 0.005 &  \textbf{1.303} & 0.000 & \textbf{1.206} & 0.000    \\
 Lasso  & 15.904 & 0.010      & 13.938  & 0.010        & 8.235 &	0.010  & 7.739 &  0.010  & 5.788 &	0.005 & 4.961 &	0.000 \\
\hline
\end{tabular}
\end{center}
\end{table}

%SNR = 12									
%N	100	200	400	800	1600	3200	6400	12800	
%Lasso_5	15.90359127	13.93829846	8.235200592	9.522325098	7.7386157	6.848453787	5.788147003	4.96085846	
%L_half_4	3.932614259	2.168206018	1.670158528	1.675714802	1.617752305	1.321422594	1.078680139	1.043469958	
%MCP_5	5.616223925	7.309021548	6.340776917	3.019972984	1.89838686	1.430523289	1.303314651	1.205520112	
%KEP_4	6.546973245	4.246062742	1.806349807	1.808568155	1.697313529	1.585194479	1.558754233	1.098519624	
%KEP_5	3.972408794	3.380893638	2.138243311	2.031079501	1.747572347	1.473784328	1.334373402	1.232776202	

\begin{figure}[!ht]
\centering
%% \begin{tabular}{ccc}
%%\hspace{-1.5cm}
\subfigure[$\ell_1$-norm]{\includegraphics[width=50mm,height=40mm]{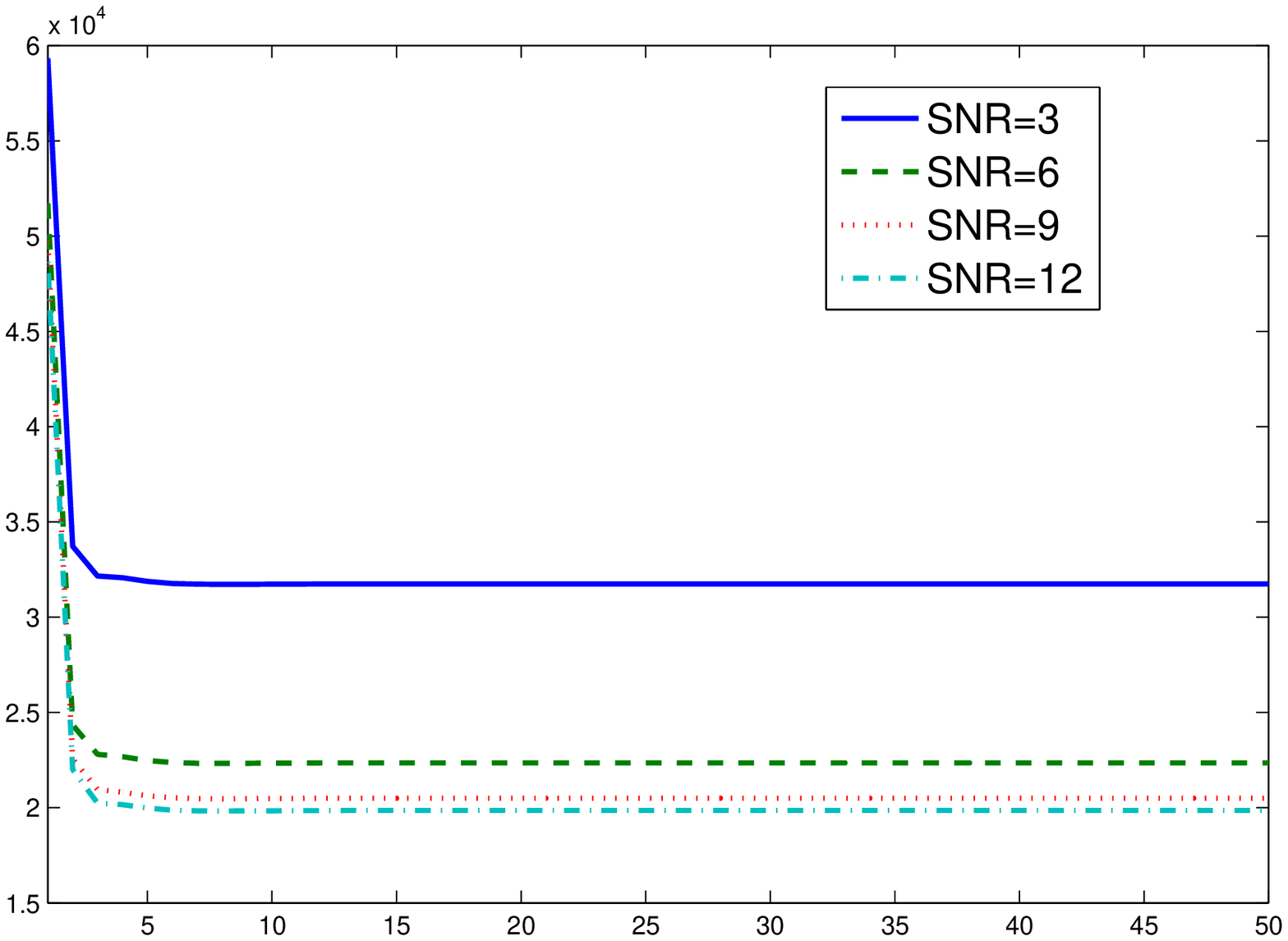}}
\subfigure[KEP]{\includegraphics[width=50mm,height=40mm]{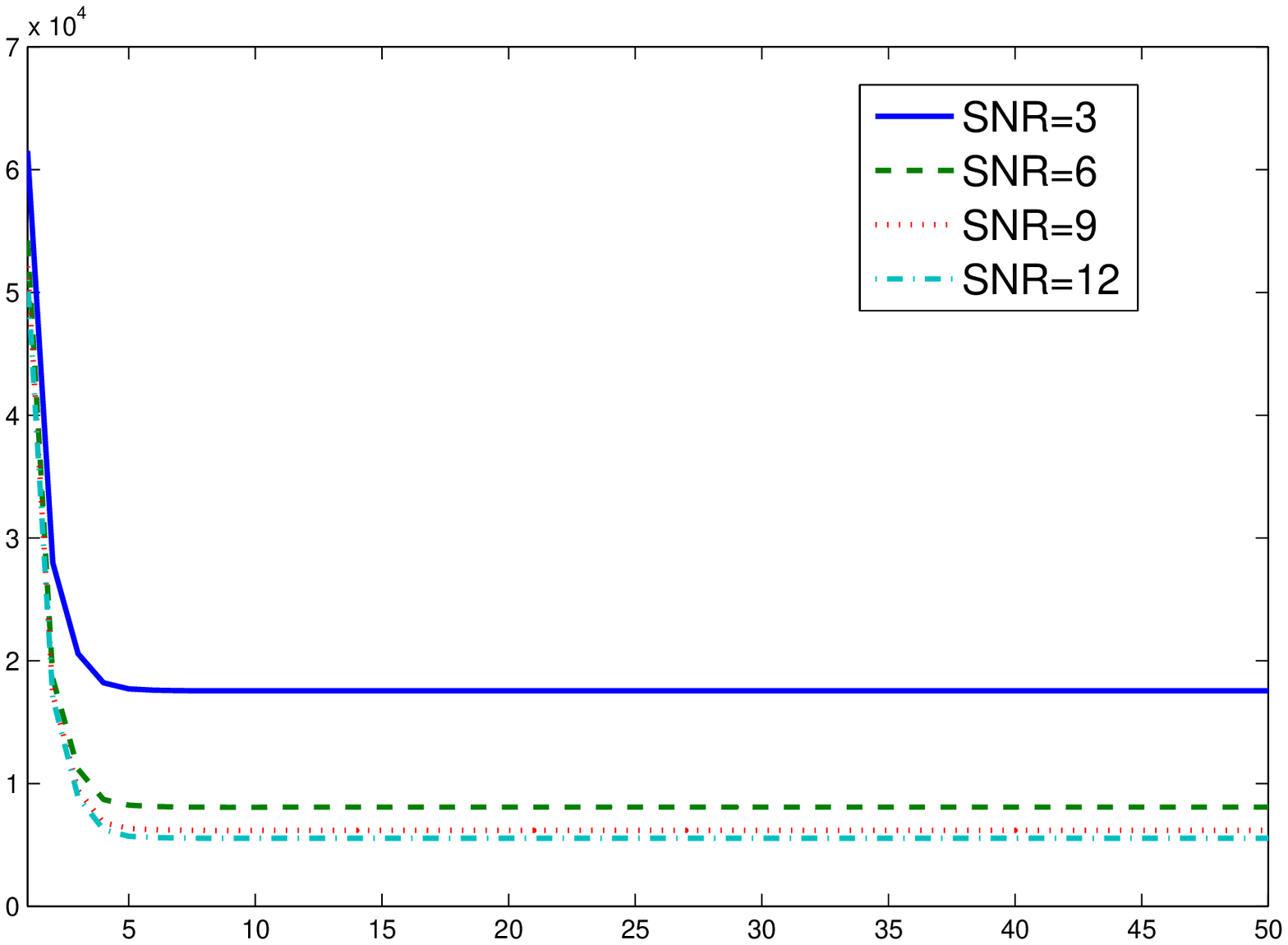}} \subfigure[MCP]{\includegraphics[width=50mm,height=40mm]{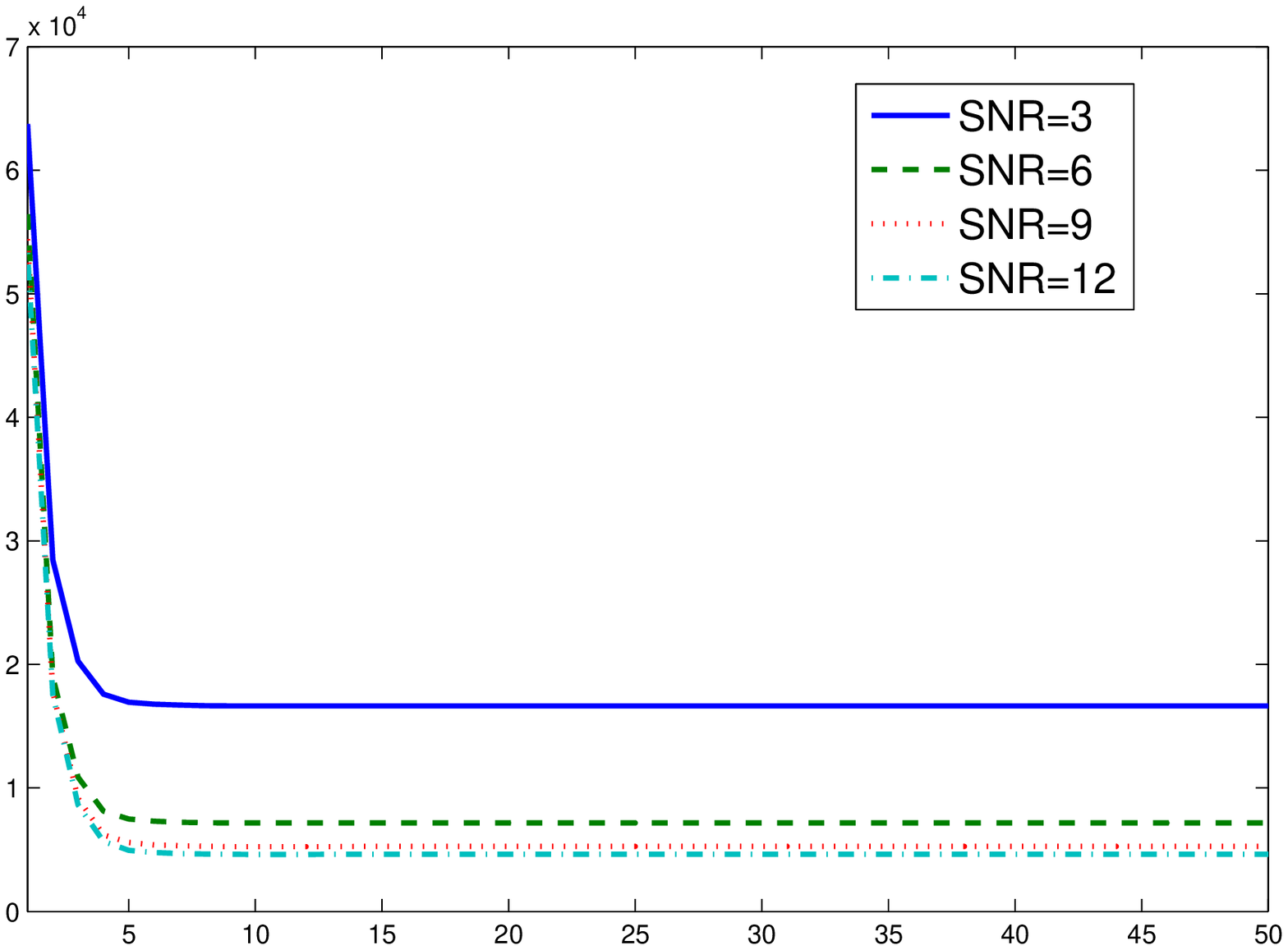}}\\
%\end{tabular}
\caption{Convergence procedures  for coordinate descent iterations with the $\ell_1$-norm,  KEP and MCP over the datasets with $p=200$ and $n=12800$. Here $\eta_n=n^{1/4}$ and $\alpha_n = n^{-1/2}$ for KEP and MCP, and $\lambda_n = n^{1/4}$ for Lasso.}
\label{fig:sim00}
\end{figure}

\subsection{Linear Regression on Simulated data}
\label{sec:simreg}

In this paper our principal focus has been to provide
KEP  with which  the
iteratively reweighted
$\ell_q$ method of \cite{Daubechies:2010} can be derived. We particularly
study the case that $q=1$, because the corresponding KEP  is
nonconvex and has strong ability
in sparsity modeling.
Moreover, we have also proposed the coordinate descent (CD) algorithm  based on KEP in
Section~\ref{sec:cda}. Thus, it is
interesting to conduct empirical comparison of  our CD with  the iterative reweighted (IR) method (see Section~\ref{sec:cda}).
For description simplicity, we denote them  by \emph{KEP-CD} and
\emph{KEP-IR}.
%In Section~\ref{sec:mixture} we have  presented the MAP estimate based on the EM algorithm.
%We have shown that the EM algorithm is essentially equivalent to the the iterative reweighted  method.
%Thus, we do not include experimental results with the EM algorithm here.

We now conduct comparisons of the methods based on KEP  with the Lasso,
the adaptive Lasso (AdaLasso)~\citep{ZouJasa:2006}, the method based on the $\ell_{1/2}$-norm penalty,
and the SparseNet based on the MCP
penalty~\citep{MazumderSparsenet:11}.
All these methods are  solved by using the  coordinate descent
algorithm.  Moreover, the hyperparameters ($\eta$, $\alpha$ or $\lambda$) involved in all the methods are selected via cross validation.
%is also used to select the tuning parameters.  In  KEP-IR,  $\lambda$ and $k$ are selected via cross validation,
%and cross validation is used to select $k$
%during the update of $\epsilon$.
The experiments are also implemented over the previous simulation data model. In particular,
we generate 12 datasets based on $ \mathrm{SNR} =3.0, 6.0, 9.0, 12$
and $p=50, 200, 500$ to implement the simulation.
Our experimental analysis is performed on the training datasets of   $n=100$ samples  and the corresponding test datasets
of $m=1000$ samples.

Tables~\ref{tab:sim_res1}-\ref{tab:sim_res3} report the average results over 20 repeats. From
them, we can see that the KEP penalty function is competitive
with the MCP penalty, $\ell_{1/2}$-norm  and $\ell_1$-norm in both prediction accuracy and feature selection accuracy.
In most cases,  KEP can lead to more accurate prediction results than the rest three penalty functions.
Additionally, we see that nonconvex penalization  outperforms  convex penalization in sparsity, and
almost outperforms  convex penalization in regression accuracy. Thus, nonconvex penalization
is an effective approach for high-dimensional data modeling.
%The reason is  that  sparsity  is not always in company with prediction accuracy.

Comparing  KEP-CD and KEP-IR,
we can see that their performances are competitive. However, KEP-CD is computationally more efficient than
KEP-IR.  KEP-CD usually takes about 10
iterations to get convergence (see Figure~\ref{fig:convergence} for illustration).
As discussed in Section~\ref{sec:cda}, KEP-IR uses  two nested loops to iterate over all the elements of $\b$. In the inner loop, KEP-IR uses a LLA of the original problem in each coordinate, which needs to take several iterations to get convergence.
In contrast, KEP-CD just takes one step to obtain the exact solution of the original problem in each coordinate.
%KEP-IR is sensitive on the value of $k$, which is selected via cross validation.

From the experimental results, we see that the $\ell_{1/2}$-norm and MCP are slightly stronger than
KEP in sparsity ability. This makes sense because KEP with $q=1$ bridges the $\ell_1$-norm and the $\ell_{1/2}$
norm, and we have $M(|b|)\leq K(|b|)\leq |b|$ (see Section~\ref{sec:related}).
However,  the $\ell_{1/2}$-norm indeed suffers from the numerical
instable problem. It is seen from Tables~\ref{tab:sim_res1}-\ref{tab:sim_res3} that relative to the other methods, the prediction performance with the $\ell_{1/2}$-norm becomes worse as $p$ grows.
As for the MCP-based method, \cite{MazumderSparsenet:11} showed that a recalibration strategy can improve performance.

\begin{table}[!ht]
\caption{Simulation results on dataset with $p=50$ and $n=100$}
\label{tab:sim_res1}
\begin{center}
\begin{tabular}{l|ll|ll|ll|ll} \hline
        & \multicolumn{2}{c}{SNR=3.0}  & \multicolumn{2}{c}{SNR=6.0} & \multicolumn{2}{c}{SNR=9.0}  & \multicolumn{2}{c}{SNR=12.0} \\
        & SPE & ``FSE''  & SPE & ``FSE'' & SPE & ``FSE''  & SPE & ``FSE'' \\
\hline \\
 KEP-CD & \textbf{1.245}  &	0.071 & \textbf{1.215} & 0.069 & \textbf{1.169} &	0.048 & \textbf{1.147} & 0.014 \\
KEP-IR  & 1.264  &	0.088 & 1.243 & 0.056 & 1.199 & 0.030 & 1.148  & 0.032 \\
 %KEP-ALfp & 1.23          & \textbf{0.015} & 1.15          & 0.002 \\
 %KEP-ALap   & 1.28          & 0.072          & 1.30          & 0.067     \\
       \\
 MCP    & 1.269 &	0.040 & 1.237 &	0.053 & 1.203 &	0.028 &	1.169 & 0.025    \\
 $\ell_{1/2}$ & 1.276 &	0.085 &	1.255	& 0.047 & 1.196 &	0.032 &	1.252 &	0.012 \\
 AdaLasso & 1.275 &	0.096 &  1.291 & 0.123 & 1.215 & 0.058 & 1.175 &	0.030   \\
 Lasso  & 1.361 & 0.166  & 1.337 & 0.160 &	1.253 &	0.130 &	1.220 &	0.139   \\
\hline
\end{tabular}
\end{center}
\end{table}

%SNR	6		9		12		3	
%	SPE	FSE	SPE	FSE	SPE	FSE	SPE	FSE
%RP-CD	1.215 	0.069 	1.169 	0.048 	1.147 	0.014 	1.245 	0.071
%RP-IR	1.243 	0.056 	1.199 	0.030 	1.148 	0.032 	1.264 	0.088
%								
%l_half	1.255 	0.047 	1.196 	0.032 	1.152 	0.012 	1.276 	0.085
%MCP	1.237 	0.053 	1.203 	0.028 	1.169 	0.025 	1.269 	0.040
%AdaLasso	1.291 	0.123 	1.215 	0.058 	1.175 	0.030 	1.275 	0.096
%Lasso	1.337 	0.160 	1.253 	0.130 	1.220 	0.139 	1.361 	0.166

\begin{table}[!ht]
\caption{Simulation results on dataset with $p=200$ and $n=100$}
\label{tab:sim_res2}
\begin{center}
\begin{tabular}{l|ll|ll|ll|ll} \hline
        & \multicolumn{2}{c}{SNR=3.0}   & \multicolumn{2}{c}{SNR=6.0} & \multicolumn{2}{c}{SNR=9.0}  & \multicolumn{2}{c}{SNR=12.0} \\
         & SPE & ``FSE''  & SPE & ``FSE'' & SPE & ``FSE''  & SPE & ``FSE'' \\
\hline \\
 KEP-CD & 1.248 & 0.038 & \textbf{1.224} & {0.024} & \textbf{1.197} & {0.018} & {\bf 1.179} &	0.009 \\
 KEP-IR & \textbf{1.240} & 0.035 & 1.225 & 0.024 & 1.203  & 0.009 & 1.181 & 0.007 \\
       \\
 MCP    & 1.246 &	0.020 & 1.235 &	0.040 &	1.219 &	0.015 &	1.196 &	0.015    \\
$\ell_{1/2}$-CD & 1.296 &	0.021 & 1.253 & 0.015 &	1.233 &	0.016 & 1.215 &	0.011 \\
 AdaLasso & 1.347 &	0.041 &  1.274 & 0.035 & 1.261 & 0.020 & 1.203 & 0.013    \\
 Lasso  & 1.356  & 0.078 & 1.368 & 0.069 & 1.280 & 0.072 & 1.300 & 0.063    \\
\hline
\end{tabular}
\end{center}
\end{table}

%	SPE	FSE	SPE	FSE	SPE	FSE	SPE	FSE
%RP-CD	1.224 	0.024 	1.197 	0.018 	1.179 	0.009 	1.258 	0.038
%RP-IR	1.225 	0.015 	1.203 	0.009 	1.181 	0.007 	1.240 	0.035
%								
%l_half	1.253 	0.015 	1.233 	0.016 	1.215 	0.011 	1.296 	0.021
%MCP	1.235 	0.040 	1.219 	0.015 	1.196 	0.015 	1.246 	0.020
%AdaLasso	1.274 	0.035 	1.261 	0.020 	1.203 	0.001 	1.347 	0.041
%Lasso	1.368 	0.069 	1.280 	0.072 	1.300 	0.063 	1.356 	0.078

\begin{table}[!ht]
\caption{Simulation results on dataset with $p=500$ and $n=100$}
\label{tab:sim_res3}
\begin{center}
\begin{tabular}{l|ll|ll|ll|ll} \hline
         & \multicolumn{2}{c}{SNR=3.0}  & \multicolumn{2}{c}{SNR=6.0} & \multicolumn{2}{c}{SNR=9.0}  & \multicolumn{2}{c}{SNR=12.0} \\
          & SPE & ``FSE''  & SPE & ``FSE'' & SPE & ``FSE''  & SPE & ``FSE'' \\
\hline \\
 KEP-CD & 1.327 & 0.023 & {\bf 1.273} & 0.013 & {1.247} &	0.002 &	{1.215} &	0.009 \\
KEP-IR  & \textbf{1.319} &	0.014 & {1.292}   & 0.008 & \textbf{1.242} & 0.003 & 1.225 & 0.009  \\
       \\
 MCP    & 1.338 &	0.016  & 1.284 & 0.012 &	1.260  &	0.014 &	\textbf{1.195} &	0.010    \\
$\ell_{1/2}$-CD & 1.383 &	0.051 & 1.360 & 0.003 &	1.272  &	0.002 &	1.251 &	0.003 \\
 AdaLasso & 1.360 &	0.029 &  1.310 & 0.021 & 1.285  &	    0.011 &	1.295 &	0.019   \\
 Lasso    & 1.356 &	0.040 & 1.404 & 0.028 &	1.434 & 	0.034 &	1.372 &	0.043    \\
\hline
\end{tabular}
\end{center}
\end{table}

	%SPE	FSE	SPE	FSE	SPE	FSE	SPE	FSE
%RP-CD	1.273 	0.013 	1.247 	0.006 	1.241 	0.009 	1.327 	0.023
%RP-IR	1.292 	0.008 	1.242 	0.003 	1.225 	0.009 	1.319 	0.014
%								
%l_half	1.360 	0.003 	1.272 	0.002 	1.251 	0.003 	1.383 	0.051
%MCP	1.284 	0.012 	1.260 	0.014 	1.195 	0.010 	1.281 	0.016
%AdaLasso	1.310 	0.021 	1.285 	0.011 	1.295 	0.019 	1.360 	0.029
%Lasso	1.404 	0.028 	1.434 	0.034 	1.372 	0.043 	1.356 	0.040

Figure~\ref{fig:convergence} depicts the convergence procedure of  the coordinate descent iterations  with KEP,  $\ell_{1/2}$ and MCP, respectively. This figure shows that the coordinate descent algorithm is appropriate for  nonconvex penalty functions. Furthermore,
it is seen that the convergence speedups  with KEP and $\ell_{1/2}$ are competitive, but they faster than MCP. Specifically, to achieve convergence,  MCP usually needs to take about 50 steps  while both KEP and $\ell_{1/2}$ need to  take about 10 steps.
In summary, the KEP function with $q=1$ is a good choice in  nonconvex penalization and the KEP-CD method
is an efficient approach for solving the corresponding nonconvex  optimization problem.

\begin{figure}[!ht]
\centering
%% \begin{tabular}{ccc}
%%\hspace{-1.5cm}
\subfigure[$\ell_{1}$]{\includegraphics[width=75mm,height=60mm]{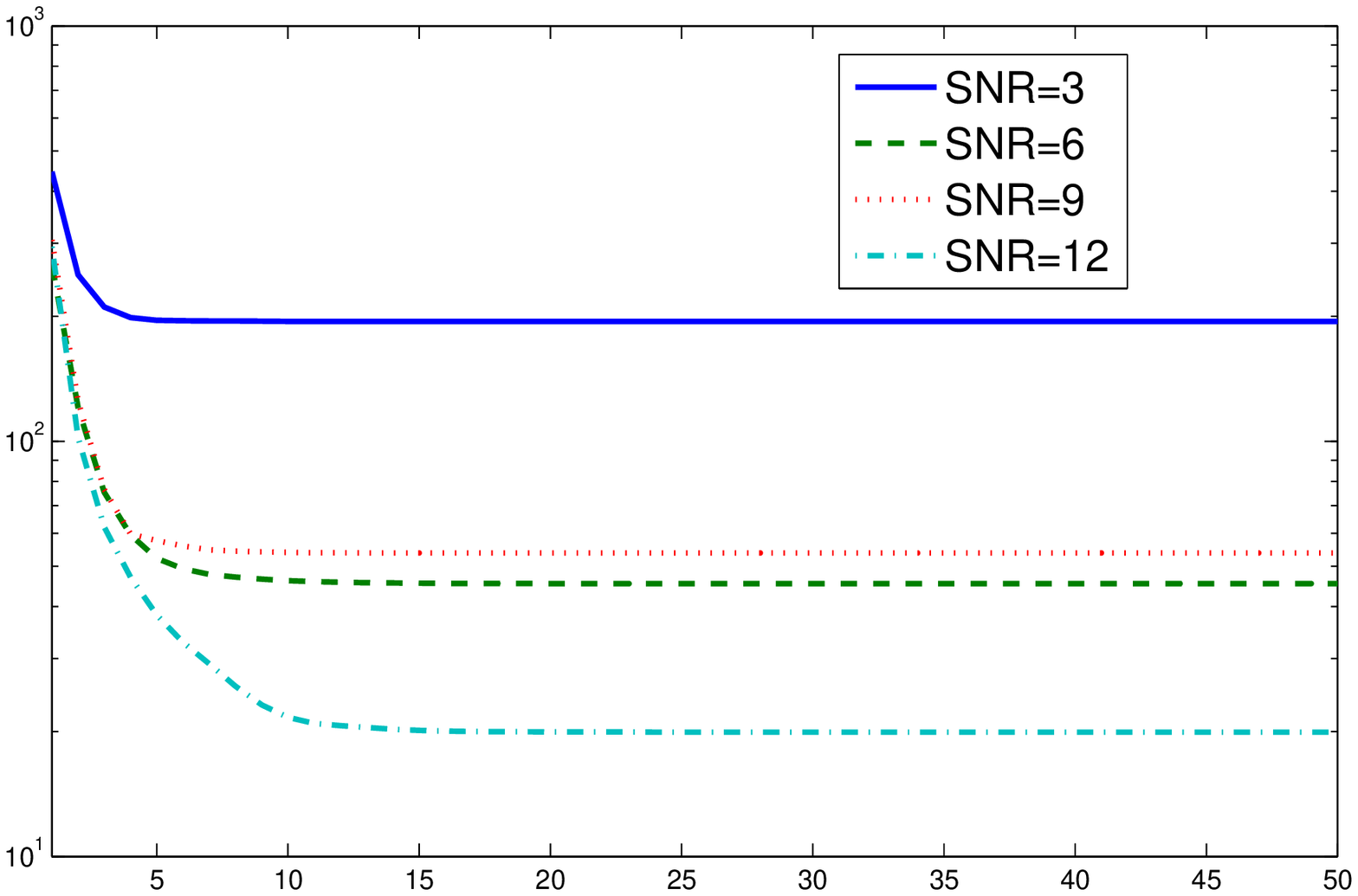}}
\subfigure[$\ell_{1/2}$]{\includegraphics[width=75mm,height=60mm]{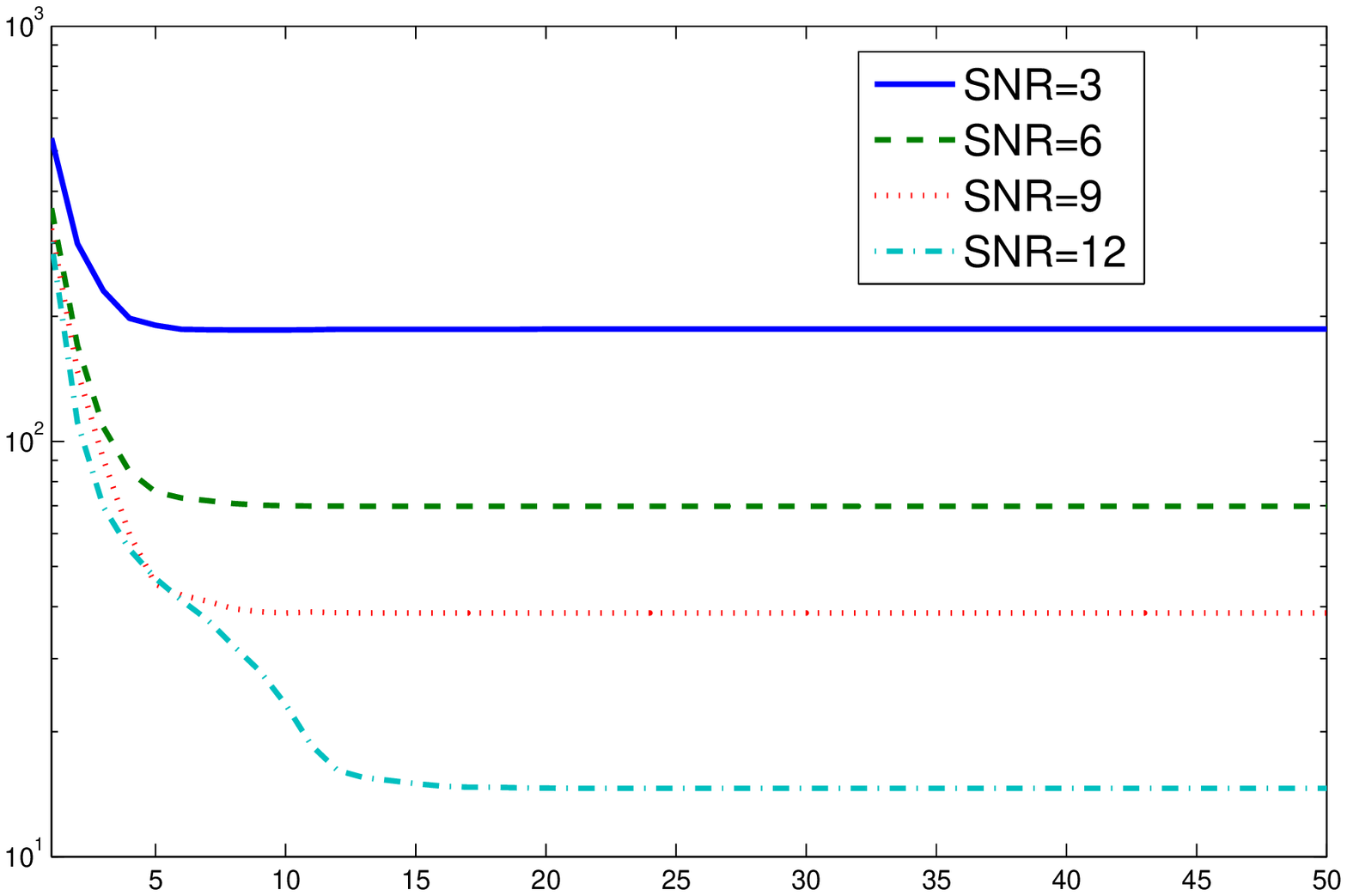}} \\
\subfigure[KEP]{\includegraphics[width=75mm,height=60mm]{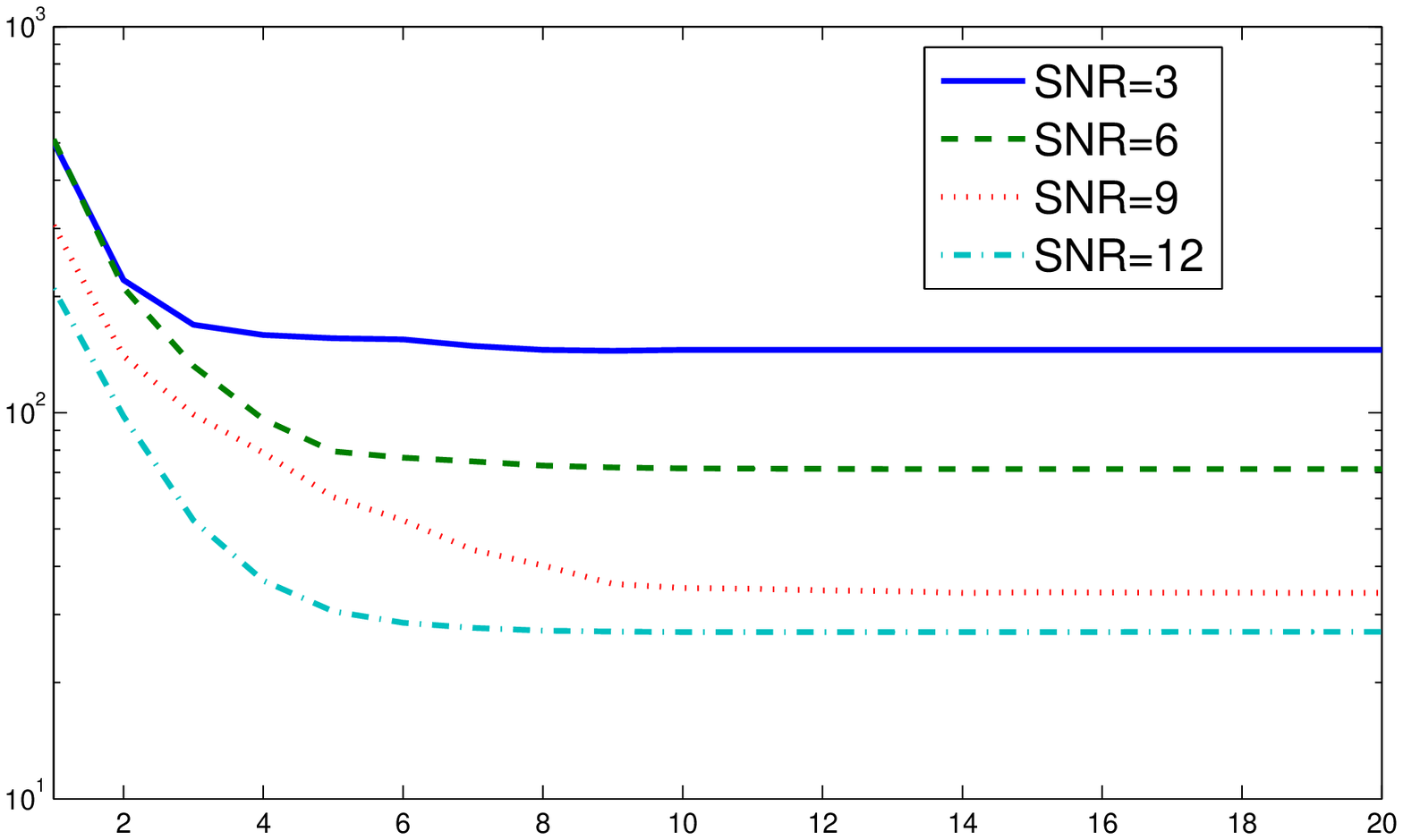}}
\subfigure[MCP]{\includegraphics[width=75mm,height=60mm]{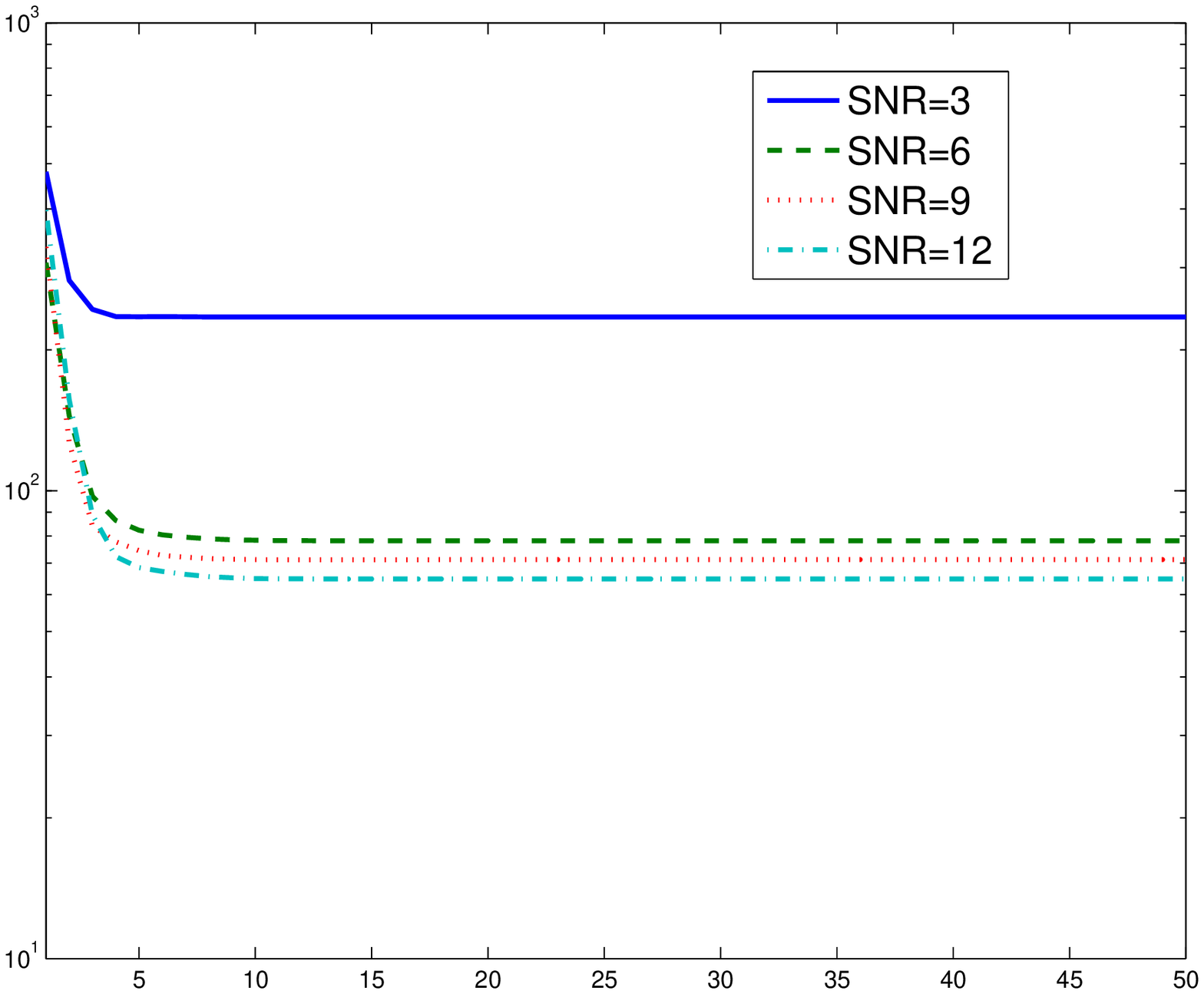}}\\
%\end{tabular}
\caption{Convergence procedures  for coordinate descent iterations with $\ell_1$, $\ell_{1/2}$,  KEP, and MCP-CD  over the datasets with $p=200$. Here the values of the y-axis are taken as log. }
\label{fig:convergence}
\end{figure}

\subsection{Linear Regression on Real Datasets} \label{sec:reg_real}

In this  experiment, we apply our methods to real regression problems on the cookie (Near-Infrared (NIR) Spectroscopy of Biscuit Doughs) dataset~\citep{Osborne84}. We  follow the setup of the original dataset: 39 instances for training and 31 instances for the test. Note that
the original dataset consists of 72 instances, but  two instances were suggested by~\cite{Brown:2001} to be excluded as outliers.
We train a model for each response among the four responses (``fat,'' ``sucrose,'' ``dry flour'' and ``water'') in the experiment.

We report the root mean square error (RMSE) on the test set and the model sparseness (proportion of zero coefficients in $\hat{\b}$) in Table \ref{tab:real_res4}. We can see that all the methods are competitive in  prediction accuracy. But in most cases the nonconvex
methods have  strong ability in feature selection. We can also see that performance of the method with KEP
is stable, while it is instable for the method with the $\ell_{1/2}$-norm penalty.
This agrees with the theoretical analysis in Section~\ref{sec:threshold}.
%choosing $\alpha = p$ is nearly the same as using cross-validation to choose $\alpha$,

%
%\begin{table}[!ht]  \setlength{\tabcolsep}{3.0pt}
%\caption{Root Mean Square Error (RMSE) and Model Sparseness ``SPR" (\%) on real datasets NIR where $n=70$ and $p=700$.}
%\label{tab:real_res3}
%\begin{center}
%
%\begin{tabular}{l|cc|cc|cc|cc} \hline
%    & \multicolumn{2}{c}{fat} & \multicolumn{2}{c}{sucrose}   & \multicolumn{2}{c}{flour } & \multicolumn{2}{c}{water} \\
%          & RMSE & ``SPR'' & RMSE & ``SPR'' & RMSE  & ``SPR" & RMSE & ``SPR''          \\
%\hline \\
% KEP-ALcv & 0.436 &   \textbf{99.29}       & 0.987  & \textbf{98.71}       & 1.083  & \textbf{99.71}  & 0.396 & 80.43 \\
% KEP-ALfp & 0.420 & \textbf{99.29}       & 0.987   & \textbf{98.71}       & 1.083  & \textbf{99.71}  & \textbf{0.360} & \textbf{98.43} \\
% KEP-ALap   & \textbf{0.296} & 94.71  & \textbf{0.951} & 95.14   & \textbf{0.722}  & 96.00  & 0.393 & 75.00  \\
%%   SRP-IR-DA & 0.359 &  1.078 & 0.683 & 0.388
%       \\
% Lasso    & 0.437 & 68.86  & 2.540  & 53.43  & 0.785  & 92.29           & 0.378 & 65.57  \\
%AdLasso   & 0.835 & 88.14  & 2.222  & 86.14  & 0.862  & 99.14           & 0.407  & 85.86 \\
%     MCP  & 0.943 & 94.14  & 2.069  & 95.43  & 0.839  & \textbf{99.71}  & 0.504 & 96.29 \\
%\hline
%\end{tabular}
%\end{center}
%\end{table}

\begin{table}[!ht]  \setlength{\tabcolsep}{3.0pt}
\caption{Root Mean Square Error (RMSE) and Model Sparseness ``SPR"  on real datasets NIR where $n=39$ and $p=700$.}
\label{tab:real_res4}
\begin{center}
\begin{tabular}{l|cc|cc|cc|cc} \hline
    & \multicolumn{2}{c}{fat} & \multicolumn{2}{c}{sucrose}   & \multicolumn{2}{c}{flour } & \multicolumn{2}{c}{water} \\
          & RMSE & ``SPR'' & RMSE & ``SPR'' & RMSE  & ``SPR" & RMSE & ``SPR''          \\
\hline \\
KEP-CD	&\textbf{0.4478} & 0.9914 & {1.1174} & 0.9871 & \textbf{0.6012} & 0.9914 & 0.4845 &	0.9929 \\
KEP-IR   & 0.5172	 & 0.9829    & \textbf{1.0677} &	0.9929 & 0.6808	& 0.9957 & \textbf{0.4458} &	0.9943 \\
\\
%MCP & 0.510 & 0.997 &	1.221 &	0.994	& 0.684 & 0.993 &	0.483	& 0.993
%\\
MCP	& 0.5170 &	0.9871 & 1.2163 & 0.9929 & 0.6250 & 0.9929 & 0.8780 & 0.9929 \\
$\ell_{1/2}$ & 0.6767 	& 0.9700 &	1.6353 & 0.9671 & 0.8211 &	0.9857 & 0.5642 & 0.9929
\\
AdaLasso & 0.5331  & 0.9843  &	1.1217 &  0.9743 & 	0.7304 &	0.9843 &	0.6881 & 0.9900
\\
Lasso	& 0.8177 &	0.9786 & 1.3601 & 0.9557 & 	0.8388 & 0.9729  &	0.5148  & 0.9857 \\
\hline
\end{tabular}
\end{center}
\end{table}

\section{Conclusion} \label{sec:conclusion}

In this paper we have studied  sparse penalized learning  problems.
We have focused on the iteratively reweighted $\ell_q$ method of \cite{Daubechies:2010} and  developed the kinetic energy plus  (KEP) penalty function.
In particular, we have illustrated that KEP can be defined as a concave conjugate of the nonnegative of a $\chi^2$-distance function.
We have thus rederived the  iteratively reweighted $\ell_q$ method of \cite{Daubechies:2010}.
%KEP can be used to induce prior distributions that we call \emph{relativity priors}.
%The prior  can
%expressed as a scale mixture of exponential power distributions with a generalized inverse Gaussian density~\citep{ZhangEPGIG:2012}. Furthermore,
%the relativity prior yields a strong posterior consistency~\citep{ArmaganBiometrika:2013}.

Under the setting of $q=1$, we have derived the thresholding operator for the KEP penalized univariate least-squares problem.
Accordingly, we have devised a coordinate descent algorithm.  We have validated that
this algorithm is effective and feasible in theoretically and empirically. Additionally, we have investigated
the relationship of  KEP with the $\ell_1$ and $\ell_{1/2}$ penalties. That is, the limiting cases
are the $\ell_1$ and $\ell_{1/2}$ penalties. Moreover, we have uncovered an interesting connection between
the KEP and MCP functions.  Specifically, the MCP function can be also defined as the concave conjugate of the $\chi^2$-distance function.
The difference between both them is due to asymmetricity of the $\chi^2$-distance function.

%\small{
%
%\bibliography{ncvs}
%\bibliographystyle{plain}
%}
%
%\end{document}

%%%%%%%%%%%%%%%%%%%%%%%%%%%%%%%%%%%%%%%%%%%%%%%%%%%%%%%%%%%%%%%%%%%%%%%%%%
%%%%%%%%%%%%%%%%%%%%%%%%%%%%%%%%%%%%%%%%%%%%%%%%%%%%%%%%%%%%%%%%%%%%%%%%%
\appendix

\section{The Proof of Theorem~\ref{thm:sparsty}}
\label{app:aa}

\begin{proof} The first-order derivative of (\ref{eqn:general}) w.r.t.\ $b$ is
\[
\sgn(b)\Big(|b| + {\eta} (2\alpha |b|+1)^{-\frac{1}{2}} \Big) - z.
\]
Let $g(|b|)= |b| + {\eta} (2 \alpha |b|+1)^{-\frac{1}{2}}$. It is clear that
$|z|< \min_{b\neq 0}\{g(|b| \}$, the resulting estimator is 0; namely, $\hat{b}=0$.
We now check the minimum value
of $g(s)=s + { \eta } (2\alpha s +1)^{-\frac{1}{2}}$ for $s\geq 0$.

Taking the first-order derivative of $g(s)$ w.r.t.\ $s$, we have
\[
g'(s) = 1 - { \eta \alpha} (2\alpha s +1)^{-\frac{3}{2}}.
\]
Thus, if $\eta \leq \frac{1}{\alpha}$, $g(s)$  attains its minimum value ${\eta}$ at $s^{*}=0$.
Otherwise,  $g(s)$ attains its minimum value when $s^{*}= \frac{1}{2 \alpha} \big[({\eta \alpha})^{2/3}-1\big]$; that is,
\[
g(s^{*})= \frac{3}{2 \alpha} \big({\alpha \eta}\big)^{\frac{2}{3}} {-} \frac{1}{2\alpha} \quad \Big(>\frac{1}{\alpha}\Big).
\]

First, we  consider the case that $\eta > \frac{1}{\alpha}$. In this case, the resulting estimator is 0 when $|z|\leq \frac{3}{2 \alpha} \big({\alpha \eta}\big)^{\frac{2}{3}} {-} \frac{1}{2\alpha}$. If $z> \frac{3}{2 \alpha} \big({\alpha \eta}\big)^{\frac{2}{3}} {-} \frac{1}{2\alpha}$, then the resulting estimator should be the positive root of the equation
$b + {\eta} (2\alpha b +1)^{-\frac{1}{2}}  - z = 0$ in $b$. Let $u=(2 \alpha b +1)^{\frac{1}{2}}$. We denote
\[
h(u) = u^3 - (2 \alpha z+1) u + {2 \eta \alpha}.
\]
Since $h((2\alpha z+1)^{1/2}) =  {2 \eta \alpha} >0$, $h(((2\alpha z+1)/3)^{1/2})
=-2 ((2\alpha z+1)/3)^{3/2} + {2 \eta \alpha} < - 2 {\alpha \eta} + {2 \eta \alpha}=0$,
 $h(0)= {2 \eta \alpha} >0$, and
\[
h\Big({-} \frac{2}{\sqrt{3}}(2 \alpha z{+}1)^{\frac{1}{2}} \Big) =
-\frac{2}{3 \sqrt{3}}(2\alpha z{+}1)^{\frac{3}{2}} + {2 \eta \alpha} <0,
\]
we have that cubic equation $h(u)=0$ has three reel roots. Moreover, the largest root (denoted ${u_0}$) is in $(((2\alpha z+1)/3)^{1/2}, (2\alpha z+1)^{1/2})$, which implies
that $1\leq {u_0}\leq(2\alpha z+1)^{1/2}$. As a result, the resulting estimator is $\hat{b}=\frac{{u_0}^2-1}{2\alpha} \leq z$.
Based on the trigonometric (and hyperbolic) method~\citep{NickallsCubic:2012}, ${u_0}$ is specified by
\[
{u_0} = 2 \sqrt{\frac{2 \alpha z{+}1}{3}} \cos\Big[\frac{1}{3}
\arccos\big( {-} {\alpha \eta} (\frac{3}{2\alpha z {+} 1})^{\frac{3}{2}}  \big) \Big].
\]
Similarly, if $z< - \frac{3}{2\alpha} ({\alpha \eta})^{\frac{2}{3}} {+} \frac{1}{2\alpha}$,
we can derive the analytic expression of the resulting estimator, which is given in (i).
Note that in this case of $\eta > \frac{1}{\alpha}$, the second largest root $u_1 \in (0, ((2\alpha z+1)/3)^{1/2})$.
This implies that $u_1> 1$ is possible.
If so, however,  the second root should corresponding to the maximum value of the original problem.  Therefore, in this case,
we still can prove the existence and uniqueness of the estimator $\hat{b}$.

Next, we consider the case that $\eta\leq \frac{1}{\alpha}$. In this case, the resulting estimator is 0 when $|z|\leq {\eta}$. If $z> {\eta}$, then the resulting estimator should be the positive root of the equation
$b + { \eta} (2 \alpha b +1)^{-\frac{1}{2}}  - z = 0$ in $b$. Accordingly, we study the roots of $h(u)=0$.
Note that
\[
\Delta= - 4(2 \alpha z{+}1)^3 + 27 t^2 \leq - 4(t{+}1)^3 + 27 t^2 = -(4 t^3 {-} 15t^2{+} 12t{+} 4) = -(t{-}2)^2(4t{+}1)\leq0
\]
where $t= {2\alpha \eta}\geq 0$. Thus, cubic equation $h(u)=0$ has three real roots. In fact,
we further have $h((2\alpha z+1)^{1/2}) =  {2 \eta \alpha} >0$, $h(1)= - 2\alpha z + { 2\alpha \eta }<0$,
$h(0)=  {2 \eta \alpha} >0$, and
\begin{align*}
h\big({-}(1+(2\alpha z{+}1)^{1/2})\big) & = -[1+(2\alpha z{+}1)^{1/2}]^3 + (2\alpha z {+}1)[1+(2\alpha z{+}1)^{1/2}] +  {2 \eta \alpha} \\
& \leq -[3(2\alpha z {+}1)^{1/2} +4 \alpha z +1] < 0.
\end{align*}
This implies that  $h(u)=0$
has one and only one root greater than 1, which belongs to $(1, (2\alpha z {+}1)^{1/2})$.
Consequently, the resulting estimator $0<\hat{b}<z$ when $z> {\eta}$.
Similarly, we can obtain that $z<\hat{b}<0$ when $z<- {\eta}$. Using the trigonometric  theory,
we can also obtain
an analytic formula for this root which is given the second part of the theorem.
%It is worth pointing out that in this case of  $\eta\leq \frac{4 \gamma}{\alpha^2}$, the estimator exists and is unique.
As stated in \cite{Fan01}, a sufficient and necessary condition for ``continuity" is
the minimum of
$|b|+  \Psi'(|b|)$ is attained at $0$. This implies that that the resulting estimator is continuous.
In fact, the continuity of the resulting estimator can also be obtained from Lemma~\ref{lem:33}-(ii) which is given below.
%\begin{align*}
%h(-(\alpha z{+}1)^2) & = -(\alpha z {+}1)^6 + (\alpha z {+}1)^3+  \frac{ \eta \alpha^2}{2 \gamma}
%\leq -(\alpha z {+}1)^6 + (\alpha z {+}1)^3+2 \\
%& =\big[1+ (\alpha z {+}1)^3 \big] \big[2- (\alpha z {+}1)^3 \big]<0.
%\end{align*}
%\begin{align*}
%h\Big({-}(\alpha z{+}1)^2 \frac{ \eta \alpha^2}{4 \gamma}\Big) &= \frac{ \eta \alpha^2}{4 \gamma}
%\Big[-(\alpha z {+}1)^6 (\frac{ \eta \alpha^2}{4 \gamma})^2 + (\alpha z+1)^3 + 2\Big]< \frac{ \eta \alpha^2}{4 \gamma}
%\Big[-(\alpha z {+}1)^6  + (\alpha z+1)^3 + 2\Big] \\
%& = \frac{ \eta \alpha^2}{4 \gamma} \big[2-(\alpha z {+}1)^3\big]  \big[1+ (\alpha z+1)^3\big]\leq 0,
%\end{align*}
%$h(1) =  1- (\alpha z+1) + \frac{ \eta \alpha^2}{2 \gamma} < 1 - 3
%\Big(\frac{\alpha^2 \eta}{4 \gamma}\Big)^{\frac{2}{3}}  + 2 \frac{ \eta \alpha^2}{4 \gamma}
%=\Big[\Big(\frac{\alpha^2 \eta}{4 \gamma}\Big)^{\frac{1}{3}} -1\Big]^2\Big[ 2\Big(\frac{\alpha^2 \eta}
%{4 \gamma}\Big)^{\frac{1}{3}} +1 \Big]$
\end{proof}

\begin{lemma} \label{lem:33} Given a $ t\geq 0$, we define
\[
\varphi(u)= 2 \sqrt{\frac{u{+}1}{3}} \cos\Big[\frac{1}{3}
\arccos\big( {-}{t} (\frac{u{+}1}{3})^{-\frac{3}{2}}  \big) \Big]
\]
for $u \geq 3 {t}^{\frac{2}{3}} -1$.
Then,
\begin{enumerate}
\item[\emph{(i)}]  $\varphi(u)$ and $\varphi^2(u)$ are strictly increasing on $[3 {t}^{\frac{2}{3}} -1, \infty)$.
\item[\emph{(ii)}] If $0 \leq t \leq 1$, then $\varphi(2 t)\equiv 1$.
\item[\emph{(iii)}] If $0 \leq t < 1$, then ${\varphi^2(u)}$ is  Lipschitz continuous on $[2t, \infty)$.
\end{enumerate}
\end{lemma}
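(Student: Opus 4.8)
The plan is to stop reasoning from the closed form of $\varphi$ and instead exploit the algebraic identity it satisfies. Substituting $u+1\leftrightarrow 2\alpha z+1$ and $t\leftrightarrow\alpha\eta$ into the cubic analysis already carried out in the proof of Theorem~\ref{thm:sparsty}, one checks that the $\arccos$ argument $-t\big(\tfrac{u+1}{3}\big)^{-3/2}$ lies in $[-1,0]$ precisely when $u\ge 3t^{2/3}-1$, that on this range the formula for $\varphi(u)$ is exactly the trigonometric expression for the largest real root of
\[
g_u(w):=w^3-(u+1)\,w+2t ,
\]
and that this root is at least $\sqrt{(u+1)/3}$ (the abscissa of the local minimum of $g_u$), with strict inequality when $u>3t^{2/3}-1$. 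Consequently $\varphi(u)>0$ on the interior of the domain, and $\varphi$ obeys the cubic relation
\[
\varphi(u)^3-(u+1)\,\varphi(u)+2t=0 .
\]
Everything below is read off from this relation together with the positivity $\varphi>0$.

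For part~(i), let $u_1<u_2$ lie in the interior. Since $g_{u_2}(w)=g_{u_1}(w)-(u_2-u_1)w$, we get $g_{u_2}\big(\varphi(u_1)\big)=-(u_2-u_1)\varphi(u_1)<0$, while $g_{u_2}(w)\to+\infty$ as $w\to\infty$; hence $g_{u_2}$ has a root strictly larger than $\varphi(u_1)$, and as $\varphi(u_2)$ is its largest root, $\varphi(u_2)>\varphi(u_1)$. Continuity of $\varphi$ propagates strict monotonicity to the closed interval, and then $\varphi^2$ is strictly increasing because $\varphi\ge0$ with $\varphi>0$ away from the left endpoint. For part~(ii) with $0\le t\le1$ I simply exhibit the root: $g_{2t}(1)=1-(2t+1)+2t=0$ and $g_{2t}(w)=(w-1)(w^2+w-2t)$, whose remaining roots $\tfrac{-1\pm\sqrt{1+8t}}{2}$ are both $\le1$ exactly when $\sqrt{1+8t}\le3$, i.e.\ $t\le1$; thus $\varphi(2t)=1$. (The point $u=2t$ is admissible since $2t+1-3t^{2/3}=(t^{1/3}-1)^2(2t^{1/3}+1)\ge0$.)

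For part~(iii), on the interior of the domain $g_u'\big(\varphi(u)\big)=3\varphi^2-(u+1)>0$, so by the implicit function theorem $\varphi$ is $C^1$ with $\varphi'=\varphi/\big(3\varphi^2-(u+1)\big)$. Eliminating $u$ via $u+1=\varphi^2+2t/\varphi$ gives $3\varphi^2-(u+1)=2(\varphi^3-t)/\varphi$, hence
\[
(\varphi^2)'(u)=2\varphi\,\varphi'(u)=\frac{\varphi(u)^3}{\varphi(u)^3-t}.
\]
When $t<1$ the endpoint $u=2t$ is interior (the displayed inequality is then strict), and by parts~(i)--(ii) we have $\varphi(u)\ge\varphi(2t)=1$ for all $u\ge 2t$; since $x\mapsto x/(x-t)$ is decreasing on $x>t$, it follows that $1\le(\varphi^2)'(u)\le\frac{1}{1-t}$ on $[2t,\infty)$, so $\varphi^2$ is Lipschitz there with constant $\tfrac{1}{1-t}$.

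The step I expect to be the main obstacle is the very first one: certifying carefully that the given trigonometric formula does pick out the \emph{largest} real root of $g_u$, and that this root is positive, uniformly over $u\ge 3t^{2/3}-1$ — in particular at the degenerate boundary $u=3t^{2/3}-1$, where $g_u$ has a double root, and in the flat case $t=0$. Once the cubic relation and $\varphi>0$ are in hand, parts~(i)--(iii) are short and reuse the cubic bookkeeping already done in the proof of Theorem~\ref{thm:sparsty}.
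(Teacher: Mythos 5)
Your proof is correct, and it takes a genuinely different route from the paper's. The paper works directly with the trigonometric expression: part (i) is proved by differentiating $\varphi$ and observing that both summands of $\varphi'(u)$ are positive; part (ii) uses the triple-angle identity $\cos 3\theta = 4\cos^3\theta - 3\cos\theta$ with $\cos\theta = \frac{1}{2}\sqrt{3/(2t+1)}$; and part (iii) bounds the explicit trigonometric formula for $\frac{d\varphi^2}{du}$, controlling the singular factor $\big(\frac{1}{t^2}(\frac{u+1}{3})^{3} - 1\big)^{-1/2}$ by its value at $u=2t$, which is positive exactly when $t<1$. You instead certify once that $\varphi(u)$ is the largest real root of $w^3-(u+1)w+2t$ (an identification the paper itself makes inside the proof of Theorem~\ref{thm:sparsty}, where that root is located in $\big(\sqrt{(u+1)/3},\sqrt{u+1}\,\big)$) and then read everything off the cubic relation: monotonicity by evaluating $g_{u_2}$ at $\varphi(u_1)$, part (ii) from the factorization $(w-1)(w^2+w-2t)$, and part (iii) from the closed form $(\varphi^2)'(u)=\varphi^3/(\varphi^3-t)$ together with $\varphi\geq\varphi(2t)=1$. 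Your route buys a cleaner argument for (i) --- in particular it sidesteps the fact that $\varphi'$ itself blows up at the left endpoint $u=3t^{2/3}-1$, which the paper's direct differentiation quietly glosses over --- and it yields an explicit, interpretable Lipschitz constant $1/(1-t)$ rather than the paper's bound $\frac{4}{3}+\frac{2}{3}\big(\frac{1}{t^2}(\frac{2t+1}{3})^3-1\big)^{-1/2}$. What the paper's route buys is self-containedness (no appeal to the root identification from Theorem~\ref{thm:sparsty}) and, in part (ii), the triple-angle computation that directly explains why the threshold in Theorem~\ref{thm:sparsty}(ii) sits at $|z|=\eta$. You correctly supply the two prerequisites both arguments need: strict positivity of $\varphi$ away from the degenerate endpoint (which fails only at $t=0$, $u=-1$, handled by your continuity remark), and the inequality $2t+1-3t^{2/3}=(t^{1/3}-1)^2(2t^{1/3}+1)\geq 0$ placing $2t$ in the domain, strictly in the interior when $t<1$.
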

\begin{proof} The first-order derivative of $\varphi(u)$ w.r.t.\ $u$ is
\[
\varphi'(u)= \frac{1}{3}(\frac{u{+}1}{3})^{-\frac{1}{2}} \cos\Big[\frac{1}{3}
\arccos\big(  {-} {t} (\frac{u{+}1}{3})^{{-}\frac{3}{2}}   \big) \Big] + \frac{\frac{t}{3} (\frac{u{+}1}{3})^{{-}{2}}} {\sqrt{1 {-} {t^2} (\frac{u{+}1}{3})^{{-}{3}}}} \sin\Big[\frac{1}{3}
\arccos\big( {-} {t} (\frac{u{+}1}{3})^{{-}\frac{3}{2}} \big) \Big],
\]
which is greater than 0.
%\begin{align*}
%g'(u) &= \frac{u^{-\frac{1}{2}}}{\sqrt{1 {-} t^{2} u^{-3}}} \left\{\sqrt{1 {-} t^{2} u^{-3}} \cos\Big[\frac{1}{3}
%\arccos\big( {-}{t} u^{-\frac{3}{2}}  \big) \Big] + {t u^{-\frac{3}{2}}}  \sin\Big[\frac{1}{3}
%\arccos\big( {-}{t} u^{-\frac{3}{2}}  \big) \Big]  \right\} \\
%&= \frac{u^{-\frac{1}{2}}}{\sqrt{1 {-} t^{2} u^{-3}}} \bigg\{\sin\arccos({t u^{-\frac{3}{2}}} )  \cos\Big[\frac{1}{3}
%\arccos\big( {-}{t} u^{-\frac{3}{2}}  \big) \Big] \\
%& \qquad \qquad \quad \qquad + \cos \arccos({t u^{-\frac{3}{2}}} ) \sin\Big[\frac{1}{3}
%\arccos\big( {-}{t} u^{-\frac{3}{2}}  \big) \Big]  \bigg\} \\
%&= \frac{u^{-\frac{1}{2}}}{\sqrt{1 {-} t^{2} u^{-3}}} \sin\Big[\arccos({t u^{-\frac{3}{2}}} ) + \frac{1}{3}
%\arccos\big( {-}{t} u^{-\frac{3}{2}}  \big) \Big] \\
%&= \frac{u^{-\frac{1}{2}}}{\sqrt{1 {-} t^{2} u^{-3}}} \sin\Big[\frac{\pi}{3} + \frac{2}{3} \arccos({t u^{-\frac{3}{2}}} ) \Big] >0.
%\end{align*}
Additionally, $\frac{d \varphi^2(u)}{d u} = 2 \varphi(u) \varphi'(u)>0$. Thus, $\varphi(u)$ and $\varphi^2(u)$ are strictly increasing.

For $0\leq t \leq 1$, it is directly verified that $1 \leq  \sqrt{\frac{3}{2t+1}} \leq {\sqrt{3}}$ and
\[
-1 \leq -t (\frac{3}{2t {+} 1})^{\frac{3}{2}} \leq 0.
\]
We thus can assume that $ \cos(\theta) =\frac{1}{2} \sqrt{\frac{3}{2t+1}}$ where $\theta \in [\pi/6, \pi/3]$. Since
\[\cos (3\theta) = 4 \cos^3(\theta) - 3 \cos(\theta) = -t \left( \frac{3}{2t {+}1} \right)^{\frac{3}{2}}
\]
and $3 \theta \in [\pi/2, \pi]$, we have
\[
\frac{1}{2} \sqrt{\frac{3}{2t{+}1}} = \cos(\theta)= \cos\Big[\frac{1}{3}
\arccos\big( {-}t (\frac{3}{2t {+} 1})^{\frac{3}{2}}  \big) \Big].
\]

Finally, we have
\[
\frac{d \varphi^2(u)}{d u} = \frac{2}{3}\Big\{1+  \cos\Big[\frac{2}{3}
\arccos\big(  {-}t (\frac{u{+}1}{3})^{{-}\frac{3}{2}}   \big) \Big] \Big\}+ \frac{\frac{2}{3} } {\sqrt{ \frac{1}{t^2}(\frac{u{+}1}{3})^{{3}} {-} 1} } \sin\Big[\frac{2}{3}
\arccos\big( {-}t (\frac{u{+}1}{3})^{{-}\frac{3}{2}} \big) \Big].
\]

Note that $\frac{1}{t^2}(\frac{u{+}1}{3})^{{3}} {-} 1 \geq \frac{1}{t^2}(\frac{2t{+}1}{3})^{{3}} {-} 1$ for $u \in [2t, \infty)$.
Moreover,  $\frac{1}{t^2}(\frac{2t{+}1}{3})^{{3}} {-} 1$ is strictly decreasing for $0\leq t <1$.
Thus, we have
\[
\Big|\frac{d \varphi^2(u)}{d u}\Big| \leq \frac{4}{3} + \frac{2}{3} \frac{1} {\sqrt{ \frac{1}{t^2}(\frac{2t{+}1}{3})^{{3}} {-} 1}}.
\]
\end{proof}

\section{The Proof of Theorem~\ref{thm:oracle1}}
\label{app:bb}

Let ${\b}_n^{(1)} =\b^{*} + \frac{\u}{\sqrt{n}}$ and
\[
\hat{\u} = \argmin_{\u} \; \bigg\{G_n(\u) := \Big\|\y - \X \big(\b^{*} +\frac{\u}{\sqrt{n}} \big) \Big\|^2  +  \sum_{j=1}^p \omega_j^{(0)} \Big|b^{*}_j {+} \frac{u_j}{\sqrt{n}}\Big| \bigg\},
\]
where $\omega_j^{(0)} =  \frac{\eta_n}{ \sqrt{1{+}2\alpha_n |b_j^{(0)}|}}$.
%Then ${\u} = \sqrt{n}({\b}_n^{(1)} {-} \b^{*})$.
Consider that
\[
 G_n(\u) - G_n(0)
= \u^T(\frac{1}{n}\X^T \X) \u {-} 2 \frac{\epsi^T \X}{\sqrt{n}} \u {+}  \sum_{j=1}^p
\omega_j^{(0)} \Big \{ \big|b^{*}_j {+} \frac{u_j}{\sqrt{n}} \big| {-} |b^{*}_j| \Big\}.
\]
We know that $\X^T\X/n \rightarrow \C$ and $\frac{ \X^T \epsi}{\sqrt{n}} \rightarrow_{d} N(\0, \sigma^2 \C)$. We thus only consider
the third term of the right-hand side of the above equation.
If $b^{*}_j=0$, then $\sqrt{n} \big(|b^{*}_j+\frac{u_j}{\sqrt{n}}|-|b^{*}_j|\big)=|u_j|$. And since $\sqrt{n} b_j^{(0)}=O_p(1)$, we have $\alpha_n |b^{(0)}_j|
= (\alpha_n/\sqrt{n}) \sqrt{n} \big|b^{(0)}_j\big| =O_p(1)$. Hence,
\[
\frac{ \omega^{(0)}_j}{\sqrt{n}}   \rightarrow_p   \frac{\eta_n}{n^{1/2}}    \rightarrow \infty
\]
due to $\alpha_n/\sqrt{n} \rightarrow c_1 \in (0, \infty)$.
%(Note that implies $\gamma_n \rightarrow +\infty$).
If $b^{*}_j\neq0$, then %$\omega^{(0)}_j\rightarrow_p \frac{c_2} { \sqrt{|b^{(0)}_j|}} >0$
\[
\frac{ \omega^{(0)}_j}{\sqrt{n}} = \frac{\eta_n} {n^{3/4}} \frac{1} {  \sqrt{\frac{1}{\sqrt{n}}{+} \frac{2 \alpha_n}{\sqrt{n}} |b_j^{(0)}|}}
\rightarrow_p \frac{\eta_n} {n^{3/4}} \rightarrow 0
\]
and  $\sqrt{n}(|b^{*}_j+\frac{u_j}{\sqrt{n}}|-|b^{*}_j|)\rightarrow u_j \sgn(b^{*}_j)$.
Thus, $ \frac{\omega^{(0)}_j}{\sqrt{n}}\sqrt{n}(|b^{*}_j {+} \frac{u_j}{\sqrt{n}}| - |b^{*}_j|)\rightarrow_p 0$.
The remaining parts of the proof  can be immediately obtained via some slight modifications to that in \cite{ZouJasa:2006} or
\cite{ZouLi:2008}.  We here omit them.

\section{The Proof of Theorem~\ref{thm:oracle2}}
\label{app:cc}

Let $\tilde{\b}_n =\b^{*} + \frac{\u}{\sqrt{n}}$ and
\[
\hat{\u} = \argmin_{\u} \; \bigg\{G_n(\u) := \Big\|\y - \X \Big(\b^{*} {+} \frac{\u}{\sqrt{n}} \Big) \Big\|^2
    + \frac{\eta_n}{\alpha_n} \sum_{j=1}^p \Big[\sqrt{1 {+} 2 \alpha_n \Big|b^{*}_j {+} \frac{u_j}{\sqrt{n}} \Big|} -1\Big] \bigg\}.
\]
%Then ${\u} = \sqrt{n}(\tilde{\b}_n - \b^{*})$.
Consider that
\[
 G_n(\u) - G_n(0)
 = \u^T(\X^T \X/n) \u - 2 \frac{\epsi^T \X}{\sqrt{n}} \u +
  \frac{\eta_n}{\alpha_n} \sum_{j=1}^p
\Big[{\sqrt{1 {+} 2\alpha_n \Big|b^{*}_j {+} \frac{u_j}{\sqrt{n}} \Big|} {-} \sqrt{1 {+} 2\alpha_n|b^{*}_j|} }\Big].
\]
Clearly, $\X^T \X/n \rightarrow \C$ and $\frac{\X^T\epsi}{\sqrt{n}} \overset{d}{\rightarrow} \z  \overset{d}{=} N(\0, \sigma^2 \C)$.
We now discuss the limiting behavior of the third term of the right-hand side. We partition $\z$ into $\z^T=(\z_1^T, \z_2^T)$ where  $\z_1=\{z_j: j \in {\cal A}\}$ and $\z_2=\{z_j: j \notin {\cal A}\}$.

First, consider the case that $b^{*}_j=0$. In this case,
we have
\[
 \frac{\eta_n}{\alpha_n}\Big[{\sqrt{1 {+} 2 |u_j| \frac{ \alpha_n}{\sqrt{n}}} - 1} \Big]
= \frac{\eta_n}{\sqrt{n}}
\frac{2 |u_j|} {\sqrt{1 {+} |u_j| \frac{2\alpha_n}{\sqrt{n}}} + 1} \rightarrow +\infty.
%& = u_j \frac{\eta_n} {n^{1/4} }
%\frac { (\frac{1}{\gamma_n} + \sqrt{1 + \frac{1}{\gamma_n}} ) \sqrt{\frac{\gamma_n}{\sqrt{n}} } }  {\sqrt{1 {+} |u_j|
%\frac{\gamma_n}{\sqrt{n}} } + 1}
\]
Second, we assume that $b^{*}_j\neq 0$. Subsequently,
\begin{align*}
&\frac{\eta_n}{\alpha_n} \Big[{\sqrt{1 {+} 2\alpha_n|b^{*}_j {+} \frac{u_j}{\sqrt{n}}|} - \sqrt{1 {+} 2\alpha_n|b^{*}_j|} } \Big]  \\
 & \rightarrow   \frac{\eta_n}{\alpha_n} \Big[{\sqrt{1 {+} 2\alpha_n (b^{*}_j {+} \frac{u_j}{\sqrt{n}}) \sgn(b^{*}_j) } {-} \sqrt{1 {+} 2\alpha_n b^{*}_j \sgn(b^{*}_j) } } \Big]  \\
%& = \frac{\eta_n}{\sqrt{n}} \frac{ \sqrt{1+\gamma_n} + 1 } { \sqrt{1 {+} \gamma_n b^{*}_j \sgn(b^{*}_j) } } \frac{u_j \sgn(b^{*}_j) }
%{ 1+ \sqrt{1 {+} \frac{u_j \sgn(b^{*}_j) /\sqrt{n}} {1{+} \gamma_n b^{*}_j \sgn(b^{*}_j)} } }  \\
& = \frac{\eta_n}{{n^{3/4}}}  \frac{2 u_j \sgn(b^{*}_j) }
{\sqrt{ [{1}{ {+} 2 \alpha_n b^{*}_j \sgn(b^{*}_j)] /\sqrt{n}}} +   \sqrt{[1 {+} 2\alpha_n (b^{*}_j {+} \frac{u_j}{\sqrt{n}}) \sgn(b^{*}_j) ]/\sqrt{n} } }  \\
& \rightarrow 0.
\end{align*}

By Slutsky's theorem, we have
\[
 G_n(\u) - G_n(0)  \overset{d}{\rightarrow} \left\{\begin{array}{ll} \u_1^T \C_{11} \u_1 - 2 \u_1^T  \z_1 & \mbox{ if } u_j=0 \; \forall j \notin {\cal A},
 \\ \infty & \mbox{ otherwise}. \end{array} \right.
\]
This implies that $G_n(\u) - G_n(0)$ converges in distribution to a convex function, whose unique minimum is
$(\C_{11}^{-1} \z_1, \0)^T$. It then follows from  epiconvergence \citep{GeyerANS:1994,KnightFu:2000} that
\begin{equation} \label{eqn:11}
\hat{\u}_1 \overset{d}{\rightarrow} \C_{11}^{-1} \z_1 \; \mbox{ and } \;  \hat{\u}_2 \overset{d}{\rightarrow} \0.
\end{equation}
This proves asymptotic normality due to $\z_1 \overset{d}{=} N(\0, \sigma^2 \C_{11})$.
%Moreover, we have  $\sqrt{n} |\tilde{b}_{nj}| \overset{p}{\rightarrow}  0$.

Recall that $\tilde{b}_{n j} \overset{p}{\rightarrow} b^{*}_j$ for any $j \in \AM$, which implies that $\Pr(j \in \AM_n) \to 1$.
Thus, for consistency in Part (1), it suffices to obtain  $\Pr(l \in \AM_n) \to 0$ for any $l \notin \AM$.
For such an event ``$l\in \AM_n$," it follows from the KKT optimality conditions
that $2 \x_{\cdot l}^T(\y - \X \tilde{\b}_{n})= \frac{\eta_n } {\sqrt{1+ 2 \alpha_n |\tilde{b}_{nl}|}}$ where $\x_{\cdot l}$ is the $l$th column of $\X$.
Note that
\[
\frac{2 \x_{\cdot l}^T (\y -\X \tilde{\b}_{n})}{\sqrt{n}} = 2 \frac{\x_{\cdot l}^T \X \sqrt{n}(\b^{*}-\tilde{\b}_n)}{n} + \frac{2 \x_{\cdot l}^T \epsi}{\sqrt{n}}
\]
and $ \liml_{n\to \infty} \frac{\eta_n } {\sqrt{n} \sqrt{1+ 2 \alpha_n |\tilde{b}_{nl}|}}
 \to \infty$
due to $\sqrt{n} |\tilde{b}_{nj}| \overset{p}{\rightarrow}  0$ by (\ref{eqn:11}) and Slutsky's theorem. Accordingly, we have
\[
\Pr(l \in \AM_n) \leq \Pr\Big[ 2 \x_{\cdot l}^T(\y - \X \tilde{\b}_{n})=\frac{\eta_n } { \sqrt{1+ 2 \alpha_n |\tilde{b}_{nl}|}} \Big]
\to 0.
\]

\section{The Proof of Theorem~\ref{thm:asymptotic}}
\label{app:dd}

As for the proof of Theorem~\ref{thm:asymptotic}, we consider the case that $\liml_{n\to \infty} \alpha_n =0$. In this case, we have
\[
\lim_{n \to \infty} \frac{\sqrt{1+ (2\alpha_n/\sqrt{n})} -1} {\alpha_n/\sqrt{n}} = 1 \; \mbox{ and } \;
\lim_{n \to \infty} \frac{\sqrt{1+ 2\alpha_n} -1}{\alpha_n} = 1.
\]
Assume that $\liml_{n\to \infty} \eta_n/\sqrt{n}= 2 c_3 \in [0, \infty]$. Then
\[
\frac{\eta_n} {\alpha_n} \Big[{\sqrt{1+ 2 |u_j| \frac{\alpha_n}{\sqrt{n}}} -1} \Big]=  |u_j|  \frac{\eta_n}{\sqrt{n}}
 \frac{\sqrt{1+ 2 |u_j| \frac{\alpha_n}{\sqrt{n}}} -1} { |u_j|{\alpha_n}/{\sqrt{n}}} \to 2 c_3 |u_j|
\]
when $u_j\neq 0$. If $b^{*}_j\neq 0$, then
\begin{align*}
& \eta_n \frac{\sqrt{1 + 2(\alpha_n|b^{*}_j {+} \frac{u_j}{\sqrt{n}}|)} - \sqrt{1 + 2 \alpha_n|b^{*}_j|} } {\alpha_n} \nonumber  \\
 & =   \eta_n \frac{\sqrt{1+2\alpha_n (b^{*}_j {+} \frac{u_j}{\sqrt{n}})
 \sgn(b^{*}_j) } {-} \sqrt{1+2 \alpha_n b^{*}_j \sgn(b^{*}_j)} } {\alpha_n}  \nonumber \\
& =   \frac{\eta_n }{\sqrt{n}}  \frac{2 {u_j} \sgn(b^{*}_j)} {\sqrt{1+2\alpha_n (b^{*}_j {+} \frac{u_j}{\sqrt{n}})
 \sgn(b^{*}_j) } {+} \sqrt{1+2 \alpha_n b^{*}_j \sgn(b^{*}_j)}}  \\
& \to 2 c_3 u_j \sgn(b^{*}_j).
\end{align*}
We now first consider the case that $c_3 = 0$. In this case,
we have
\[
 G_n(\u) - G_n(\0)  \overset{d}{\longrightarrow} \u^T \C \u - 2 \u^T  \z,
\]
which is convex w.r.t.\ $\u$. Then the minimizer of $\u^T \C \u {-} 2 \u^T  \z$ is $\u^{*}$ if and only if  $\C \u^{*} - \z=\0$. Since
$\hat{\u} \overset{d}{\rightarrow} \u^{*}$ (by epiconvergence), we obtain $ \sqrt{n}(\tilde{\b}_n - \b^{*})=\hat{\u}  \overset{d}{\rightarrow}
N(\0, \sigma^2 \C^{-1})$.

We then consider the case that $c_3 \in (0, \infty)$. Right now we have
\[
 G_n(\u) - G_n(\0)  \overset{d}{\longrightarrow} \u^T \C \u - 2 \u^T  \z+ 2c_3 \sum_{j \in \AM} u_j \sgn(b^{*}_j) +
2 c_3 \sum_{j \notin \AM}  |u_j|  \triangleq H_2(\u).
\]
$H_2(\u)$ is convex in $\u$. Let the minimizer of $H_2(\u)$ be $\u^{*}$. Then
\[
\C \u^{*} - \z + c_3 \s =0
\]
where $\s^T = (\sgn(\b^{*}_1)^T, \v^T)$ and $\v \in \RB^{p_2}$ with $\max_{j} |v_j| \leq 1$. Thus,
we have $\u^{*} \overset{d}{\rightarrow} N({\bf t}, \sigma^2 \Tha)$ where ${\bf t}=(t_1, \ldots, t_p)^T=-c_3 \C^{-1} \s$
and $\Tha=[\theta_{ij}]= \C^{-1}$.
For any $\epsilon>0$, when $n$ is significantly large and using Chebyshev's inequality,  we have that
\begin{align*}
\Pr\Big[|u_j^{*}|/\sqrt{n} \geq \epsilon \Big]
&= \Pr\Big[ |u_j^{*}| \geq \sqrt{n} \epsilon \Big] \\
& \leq
\Pr\Big[|u_j^{*} - t_j| \geq \sqrt{n} \epsilon - |t_j| \Big]
 \leq \frac{\sigma^2 \theta_{jj}}{(\sqrt{n} \epsilon - |t_j| )^2} \to 0
\end{align*}
for $j=1, \ldots, p$. Consequently,  $|u_j^{*}|/\sqrt{n} \overset{p}{\rightarrow}0$;
that is, $\tilde{\b}_n  \overset{p}{\rightarrow} \b^{*}$.

\bibliography{ncvs3}
%\bibliographystyle{plain}

%}
%\end{small}

\end{document}